\documentclass[11pt]{article}

\usepackage[left=1in,top=1in,right=1in,bottom=1in,letterpaper]{geometry}

\usepackage{amsmath,amssymb,amsfonts,amsthm,color,turnstile,mathrsfs,bm}
\usepackage{dsfont}
\usepackage{booktabs}
\usepackage{tabularx}
\usepackage{multirow}
\usepackage{subcaption}
\usepackage{wrapfig}
\usepackage{enumitem}
\usepackage{comment}
\usepackage{setspace}

\usepackage{graphicx}
\usepackage{pgffor}
\usepackage{epstopdf}
\usepackage{mathtools}
\usepackage{soul}
\usepackage[numbers,sort&compress]{natbib}
\usepackage{url}
\usepackage{hyperref}
\usepackage[capitalize,noabbrev]{cleveref}
\usepackage{float}

\usepackage[labelfont=bf,format=plain,justification=raggedright,singlelinecheck=false]{caption}
\usepackage[symbol]{footmisc}

\usepackage[ruled,vlined]{algorithm2e}

\allowdisplaybreaks

\theoremstyle{plain}
\newtheorem{thm}{Theorem}
\newtheorem{prop}{Proposition}
\newtheorem{lemma}{Lemma}

\theoremstyle{definition}
\newtheorem{defn}{Definition}
\newtheorem{assumption}{Assumption}

\theoremstyle{remark}
\newtheorem{rem}{Remark}

\crefname{lemma}{Lemma}{Lemmas}
\crefname{thm}{Theorem}{Theorems}
\crefname{prop}{Proposition}{Propositions}
\crefname{cor}{Corollary}{Corollaries}
\crefname{defn}{Definition}{Definitions}
\crefname{assumption}{Assumption}{Assumptions}
\crefname{rem}{Remark}{Remarks}
\crefname{equation}{Equation}{Equations}

\DeclareMathOperator*{\argmin}{arg\,min}
\renewcommand{\L}{\mathcal{L}}
\renewcommand{\epsilon}{\varepsilon}
\renewcommand{\phi}{\varphi}

\DeclareMathOperator{\N}{\mathbb{N}}
\DeclareMathOperator{\R}{\mathbb{R}}

\DeclareMathOperator{\E}{\mathbb{E}}
\DeclareMathOperator{\D}{\mathcal{D}}

\DeclareMathOperator{\G}{\mathcal{G}}
\DeclareMathOperator{\RR}{\mathcal{R}}
\DeclareMathOperator{\NN}{\mathcal{N}}

\title{Understanding In-Context Learning for Nonlinear Regression with Transformers: Attention as Featurizer}

\author{
Alexander Hsu\thanks{A. Hsu (hsu297@purdue.edu) is with the Department of Mathematics, Purdue University.} \quad
Zhaiming Shen\thanks{Z. Shen (zshen49@gatech.edu) is with the School of Mathematics, Georgia Institute of Technology.} \quad
Wenjing Liao\thanks{W. Liao (wliao60@gatech.edu) is with the School of Mathematics, Georgia Institute of Technology.} \quad
Rongjie Lai\thanks{R. Lai (lairj@purdue.edu) is with the Department of Mathematics, Purdue University.}
}

\date{} 

\begin{document}
\maketitle

\begin{abstract}
Pre-trained transformers are able to learn from examples provided as part of the prompt without any weight updates, a remarkable ability known as in-context learning (ICL). Despite its demonstrated efficacy across various domains, the theoretical understanding of ICL is still developing. Whereas most existing theory has focused on linear models, we study ICL in the nonlinear regression setting. Through the interaction mechanism in attention, we explicitly construct transformer networks to realize nonlinear features, such as polynomial or spline bases, which span a wide class of functions. Based on this construction, we establish a framework to analyze end-to-end in-context nonlinear regression with the constructed features. Our theory provides finite-sample generalization error bounds in terms of context length and training set size. We numerically validate the theory on synthetic regression tasks. 
\end{abstract}

\section{Introduction}

The ubiquity of Large Language Models (LLMs) is underpinned by the transformer architecture's \citep{Vaswani} distinct capacity for \emph{in-context learning} (ICL). Pre-trained transformers are able to learn a task from examples, known as \emph{context}, then perform the task on a \emph{query}, both of which are provided in the same prompt \citep{Radford,Brown,Garg}. Traditional feedforward models without self-attention mechanisms only use examples in pre-training; in contrast, ICL happens entirely at inference with no further weight updates. Despite its practical successes, a comprehensive theoretical framework explaining the mechanisms behind ICL is still emerging. 

Due to the complexity of directly analyzing large-scale models trained on vastly diverse data, theoretical analysis has shifted toward simplified settings, most notably linear regression via linear attention. The prevailing theoretical framework views ICL through the lens of implicit-optimization \citep{Hubinger,Dai}; for example, it has been shown that a single linear attention layer can reproduce one step of gradient descent (GD) on a given least squares objective in-context \citep{VonOswald, Mahankali} or directly solve least squares problems \citep{Akyurek, Zhang, Cole25}. It has also been shown that ReLU transformers can select and apply different algorithms in-context \citep{Bai}. A complementary line of work studies the training dynamics of ICL for linear models \citep{Ahn,Mahankali,Raventos,Huang,Lu,Zhang2025}.

Recent work has begun to extend analysis beyond the linear setting. Some works have studied ICL through the lens of kernel methods \citep{Cheng,Shen2}. In \citet{Cheng}, transformers equipped with a kernel activation function are shown to perform functional gradient descent in function space with respect to the Reproducing Kernel Hilbert Space (RKHS) metric induced by the kernel. In \citet{Shen2}, transformers are demonstrated to perform kernel regression, and a generalization error analysis for in-context regression on manifolds is established. An alternative approach employs feedforward neural networks (FFNs) to construct nonlinear feature maps or basis functions, after which a transformer operates on the learned feature representation \citep{Suzuki24,Guo,Li,Sun}. A numerical study about the ICL of polynomials \citep{Naim} shows that transformers can predict polynomial functions in context if the testing distribution is the same as the training distribution. 

Our work studies transformer-based in-context learning (ICL) for nonlinear regression with polynomial or spline features. Unlike previous modular approaches which use separate architectural components for feature representation, we shift the approximation power to the attention mechanism itself, building an end-to-end transformer with ReLU-activated attention for the entire pipeline of nonlinear regression with fixed features. This involves leveraging the interaction mechanism in attention to perform interpretable in-context arithmetic operations. We analyze this construction to derive generalization error bounds for the ICL of a wide function class which can be approximated by polynomials or splines. We connect the theory to classical regression schemes, such as polynomial and spline regression.

We summarize our contributions as follows.
\begin{enumerate}[topsep=2pt, itemsep=2pt, parsep=0pt, partopsep=0pt]
    \item We construct shallow and wide transformer networks to realize nonlinear regression with polynomial or spline features. The prompt is featurized with interpretable attention weights performing basic arithmetic operations in-context, and a final linear attention layer approximates the solution to the resulting least squares problem. 
    
    \item We derive complete generalization error bounds in terms of context length and training set size, which follow a bias-variance decomposition into approximation and statistical errors. 

    \item We numerically verify our theoretical bounds with synthetic experiments, and compare performance with entirely linear and softmax transformers. We also ablate different components.
\end{enumerate}

\subsection{Related Works}
Closest to our work is \citet{Suzuki24}, which recovers minimax optimal bounds for in-context nonlinear regression of Besov functions with a deep feedforward network followed by a linear attention layer. Whereas they rely on existence results for universal approximation by deep feedforward networks, our primary novelty is the explicit construction of features through attention. As a consequence, our transformer networks are shallow, with depth independent of desired accuracy, and we are able to construct the featurization (polynomials and splines) without any error. For example, we show a two-layer \emph{attention-only} transformer that can provably implement regression with a linear cardinal B-spline basis. Our approximation proofs are also fully constructive. 

Outside of the ICL setting, single task learning with transformers has been studied extensively \citep{Yun,Takakura,Havrilla24,Shen}. Our work partially incorporates this aspect, where the task learned is the chosen feature map. 

There is a significant amount of interest in \emph{interpreting} the inner workings of pre-trained transformers \citep{Elhage,Rai}, in particular on arithmetic problems \citep{Jelassi,Stolfo,Quirke,McLeish,Yang}. Our explicit construction of the featurization gives an example of a transformer with interpretable weights that perform in-context arithmetic; the framework may be of independent interest.

\textbf{Notation.} Throughout, we denote vectors by bold lowercase letters, scalars by plain lowercase letters, and matrices by plain uppercase letters. $\|\mathbf{v}\|$ denotes the $\ell^2$ norm of the vector $\mathbf{v}$. We also use $\|A\|$ to denote the spectral norm for the matrix $A$, and $\|A\|_{\max}$ to denote the max norm, which returns the largest entry in absolute value of the matrix $A$. We will use $\theta$ as notation for a set of weight matrices; for any such collection, we define the norm $\|\theta\|_\infty=\max_{W\in\theta}\|W\|_{\max}$ as the largest magnitude among all weight parameters. We denote by $\mathbf{e}_i$ the $i$-th standard basis vector $(0,\ldots, 0, 1, 0, \ldots, 0)^T$, where the dimension is clear from the context.

\section{Problem Setup}\label{sec:setup}

\subsection{In-Context Regression}

In the ICL setting, a single transformer is trained to regress on any function $f$ drawn from a broad function class, based solely on examples provided as context. We restrict our discussion to the regression of \emph{real} functions for the majority of this work; in \cref{sec:highdextensions} we extend to vector-valued functions and multivariable functions.

\vspace{-0.2cm}
\paragraph{Data Generation}
Let $X \subset \R$ be the input domain, and let $\D_F$ be a distribution over functions $f \colon  X \to \R$. In the ICL setting, each task is generated by first sampling a ground-truth function $f \sim \D_F$. Rather than observing $f$ directly, the learner is provided with a finite \emph{context} consisting of input--output pairs
$\{(x_i,y_i)\mid y_i=f(x_i)\}_{i=1}^n$
where the inputs $x_i$ are drawn i.i.d.\ from a distribution $\D_X$ on $X$, independently of $f$, and   $n$ is referred to as the \emph{context length}.

The context serves as an implicit description of the underlying task, from which the learner must infer the behavior of $f$ solely by conditioning on the context, without updating model parameters. More precisely, we are interested in an operator $G$ such that for any given prompt/task 
\[\mathbf{s}^f\coloneqq (x_1,y_1,x_2,y_2,\ldots,x_n,y_n, x) \text{ with } y_i = f(x_i),\]  the operator output $G(\mathbf{s}^f)$ predicts $y\coloneqq f(x)$ on a new \emph{query} $x\sim\D_X$. Our focus is to study how well $G$ can be approximated by a transformer network $G_\theta$.

\paragraph{Training, Risk, and Generalization}
Our transformer has access to a training set $\Gamma=\{(\mathbf{s}^{f_\ell},y^\ell)\}_{\ell=1}^L$ of $L$ prompts and their corresponding true solutions, where each training datum has the form $(\mathbf{s}^{f_\ell},y^\ell)=(x_1^\ell,y_1^\ell,x_2^\ell,y_2^\ell,\ldots,x_n^\ell,y_n^\ell,x^\ell,y^\ell=f_\ell(x^\ell))$. We assume i.i.d. distributions for training and testing: $f_\ell\sim\D_F$ and $x_i^\ell\sim\D_X$. 

Let $\G$ denote a transformer network class, defined in \cref{def:transformernetwork}, consisting of networks $G_\theta\in \G$ with parameters $\theta$. We consider a network trained according to empirical risk minimization (ERM) over the training set: 
\begin{equation}\label{eq:erm}
G_{\hat\theta}^\Gamma = \argmin_{G_\theta\in\G} \RR^\Gamma(G_\theta)
\end{equation}
where $\RR^\Gamma(G_\theta)\coloneqq \frac{1}{L}\sum_{\ell=1}^L|G_\theta(\mathbf{s}^{f_\ell})-y^\ell|^2$ and $\hat{\theta}$ denotes the optimal parameters. The superscript $\Gamma$ emphasizes the dependence on the training data set $\Gamma$.
We also define the analogous population risk $\RR(G_\theta)\coloneqq \E_{\mathbf{s}^f}|G_\theta(\mathbf{s}^f)-y|^2$, where $\E_{\mathbf{s}^f}$ is compact notation for $\E_{x_1,\ldots,x_n,x\sim\D_X, f\sim\D_F}$.

Our main goal is to study the \emph{generalization error} of the empirical risk minimizer $G_{\hat\theta}^\Gamma$ on a new sample $\mathbf{s}^f$: 
\[
    \RR(G^\Gamma_{\hat{\theta}}(\mathbf{s}^f))\coloneqq| G^\Gamma_{\hat{\theta}}(\mathbf{s}^f)-f(x) |^2.
\]

Taking expectation with respect to training data and the test sample, the generalization error can be characterized as
\[
    \E_{\Gamma}\E_{\mathbf{s}^f}[\RR(G_{\hat{\theta}}^\Gamma(\mathbf{s}^f))] = \E_{\Gamma}[\RR(G_{\hat\theta}^\Gamma)].
\]

\subsection{Transformer Architecture}
We define the transformer architecture used in this paper.

\begin{defn}[(Multi-head) Attention]
    An \emph{attention head} is defined as 
    \begin{equation}\label{eq:attn}\mathrm{A}_{Q,K,V}(H)=VH\sigma((KH)^TQH)\end{equation}
    where $Q,K,V\in \R^{d_{\mathrm{embed}}\times d_{\mathrm{embed}}}$ are the query, key, and value matrices respectively, and $\sigma$ is an activation function. We use \emph{ReLU activation} for all but the last attention layer, which instead uses \emph{linear attention} with $\sigma=\mathsf{Id}$.

   \emph{Multi-head attention} (MHA) with $m$ heads sums over multiple instances of attention performed in parallel:
    \begin{equation}\label{eq:MHA}\mathrm{MHA}(H)=\sum_{j=1}^mV_jH\sigma((K_jH)^TQ_jH).\end{equation}

    Note that this summation form of MHA is a special case of the more standard MHA with concatenation \citep{Vaswani}, by choosing the projections to reproduce the sum. This variant is often preferred in theory for ease of analysis. 
\end{defn}

\begin{defn}[Feedforward Network (FFN) Class]
    The \emph{class of feedforward networks} is denoted $\mathcal{FFN}(L_{\mathrm{FFN}},w_{\mathrm{FFN}})$, and consists of FFNs with at most $L_{\mathrm{FFN}}$ layers, each of width at most $w_{\mathrm{FFN}}$. FFNs use ReLU activation and act columnwise.
\end{defn}

Transformer blocks are defined as residual combinations of the two aforementioned maps.

\begin{defn}[Transformer Blocks]\label{def:transformerblock}
    A \emph{transformer block} $\mathrm{B}$ is given by
    \[
    \mathrm{B}(H)=\mathrm{FFN}(\mathrm{MHA}(H)+H)+\mathrm{MHA}(H)+H.
    \]

    The \emph{class of transformer blocks} $\mathcal{B}(m,L_{\mathrm{FFN}},w_{\mathrm{FFN}})$ consists of transformer blocks with $m$-headed attention and a feedforward component in $\mathcal{FFN}(L_{\mathrm{FFN}},w_{\mathrm{FFN}})$.

\end{defn}

\begin{defn}[Transformer Networks]\label{def:transformernetwork}
    A \emph{transformer network} $G_\theta$ with parameters $\theta$ is the composition of an embedding layer plus positional encoding module, a sequence of transformer blocks, and a decoding layer, i.e.
    \[
        G_\theta=\mathrm{D}\circ\mathrm{B}_{L_G}\circ\cdots\circ\mathrm{B_1}\circ(\mathrm{PE}+\mathrm{E}).
    \]

    Here $L_G$ is the number of transformer blocks (also called the \emph{depth} of the transformer); the blocks are determined by the parameters $\theta$. The embedding $\mathrm{E}$, positional encoding $\mathrm{PE}$, and decoding $\mathrm{D}$ are not trained, with specifics depending on the approach used. In general, $\mathrm{PE}+\mathrm{E}(\mathbf{s}^f)$ embeds the prompt into an $\R^{d_\mathrm{embed}\times \ell}$ matrix of the form
    \begin{equation}\label{eq:encoding}
    \mathrm{PE}+\mathrm{E}(\mathbf{s}^f) =
    \begin{bmatrix}
    x_1 & x_2 & x_3 & \cdots & x_n & x \\
    0 & 0 & 0 & \cdots & 0 & 0 \\
    \vdots & \vdots & \vdots & \ddots & \vdots & \vdots \\
    0 & 0 & 0 & \cdots & 0 & 0 \\
    y_1 & y_2 & y_3 & \cdots & y_n & 0 \\
    0 & 0 & 0 & \cdots & 0 & 0 \\
    \mathcal{I}_1 & \mathcal{I}_2 & \mathcal{I}_3 & \cdots & \mathcal{I}_n & \mathcal{I}_{n+1} \\
    1 & 1 & 1 & \cdots & 1 & 1
    \end{bmatrix}
    \end{equation}
    with sinusoidal positional encoding terms $\mathcal{I}_i\coloneqq (\cos(\frac{i\pi}{2\ell}),\sin(\frac{i\pi}{2\ell}))^T$. We have sequence length $\ell=n+1$ in this case. The embedding dimension $d_\mathrm{embed}$ depends on the features we use, with additional rows of zeros in the middle for computation. The decoder is fixed to output the entry to the right of $y_n$, indexed by $(d_\mathrm{embed}-4,n+1)$, where the regression result is placed.

    Finally, we define the \emph{class of transformer networks} 
    \[\G(L_G,m_G,d_{\mathrm{embed}},\ell, L_{\mathrm{FFN}}, w_{\mathrm{FFN}},R_G,\kappa)\] as consisting of networks $G_\theta$ with $L_G$ transformer blocks from $\mathcal{B}(m_G,L_{\mathrm{FFN}},w_{\mathrm{FFN}})$, embedding dimension (matrix height) $d_{\mathrm{embed}}$, sequence length (matrix width) $\ell$, output bound $\|G_\theta(\cdot)\|_{L^\infty(X)}\leq R_G$, and weight bound $\|\theta\|_\infty\leq\kappa$.

\end{defn}

\section{In-Context Polynomial Regression}\label{sec:polyregression}

\subsection{Regression with Features}
The method of \emph{feature expansion} linearizes nonlinear regression problems by first transforming the input data through specified nonlinear feature maps, allowing for a linear fit in the resulting feature space. The resulting features form the basis for the function space that our model is able to approximate.

Let $f$ be the function we are interested in approximating. For a predetermined set of features $\{\phi_j\colon X\to\R\}_{j=1}^w$, we define the \emph{feature map} $\phi\colon X\to\R^w$ which transforms an input $x$ into a vector of features:
\[\phi(x)\coloneqq[\phi_1(x),\phi_2(x),\ldots,\phi_w(x)]^T.\]

The approximation of $f$ has the form
\begin{equation}\label{eq:fpolyapprox}
    f(x)\approx\sum_{i=1}^wa_i\phi_i(x)=\mathbf{a}^T\phi(x),
\end{equation}
where $\mathbf{a}=[a_1,\ldots, a_w]^T$ is a vector of coefficients. The prediction at a new query point $x$ can be found by computing $\mathbf{a}^T\phi(x)$. 

The coefficients $\mathbf{a}$ are often determined by ordinary least squares (OLS), minimizing the sum of squared residuals between predicted and observed values. For our observed context $\{(x_i,y_i)\mid y_i=f(x_i)\}_{i=1}^n$, the resulting best fit satisfies the matrix equation $\mathbf{y}^T=\mathbf{a}^T\Phi$, which expands as
\[\begin{bmatrix}
    y_1 \\ y_2 \\ \vdots \\ y_n
\end{bmatrix}^T=\begin{bmatrix}
    a_1 \\ a_2 \\ \vdots \\ a_w
\end{bmatrix}^T 
\begin{bmatrix}
    \phi_1(x_1) & \phi_1(x_2) & \cdots  & \phi_1(x_n) \\
    \phi_2(x_1) & \phi_2(x_2) & \cdots & \phi_2(x_n) \\
    \vdots & \vdots & \ddots & \vdots \\
    \phi_w(x_1) & \phi_w(x_2) & \cdots & \phi_w(x_n)
\end{bmatrix}.\]
We refer to the $w\times n$ matrix $\Phi$ as the \emph{feature matrix}, where the $i$-th column corresponds to the feature vector $\phi(x_i)$ of the $i$-th input $x_i$. 

In \cref{sec:icpolyapprox} and \cref{sec:generror}, we focus our analysis on polynomial regression with monomial features. For polynomial regression by degree-$d$ polynomials, the feature map $\phi\colon X\to\R^{d+1}$ is defined as
\[\phi(x)= \mathbf{v}(x)\coloneqq [1,x,x^2,\ldots,x^d]^T\]
and the feature matrix $\Phi$ is known as a Vandermonde matrix. Although polynomial regression is a simple stylized model, it provides a canonical and concrete framework for nonlinear regression via featurization. Recent work also supports polynomials as an interesting function class for understanding ICL in general \citep{Wilcoxson}. For these reasons, we present a rigorous analysis of in-context polynomial regression and derive explicit generalization error bounds. 

We extend to other choices of featurization, including splines, in \cref{sec:splinesandbeyond}. Regardless of the choice of features, the optimization problem remains solvable via OLS.

\subsection{Realizing Polynomial Regression In-Context}\label{sec:icpolyapprox}

We implement the full polynomial regression pipeline (including evaluation on a new point) entirely in-context with an end-to-end transformer.  We summarize the main ideas here, with more details in \cref{sec:approxerrorproof}. The transformer has three important components discussed below.

\paragraph{Embedding.} The prompt is embedded in a matrix of the form specified in \eqref{eq:encoding}, where here $d_{\mathrm{embed}}=d+7$ and $\ell=n+1$. 

\paragraph{Feature Representation.} The early transformer blocks form the feature matrix (Vandermonde) in-context, placing it above the row of outputs $y_i$. We rely on the \emph{Interaction Lemma} \labelcref{lem:interaction}, which cleverly leverages ReLU-activated attention and sinusoidal positional encoding to constructively isolate interactions between specific entries. First introduced in \citet{Havrilla24}, later work demonstrated that many basic arithmetic operations could be implemented within a transformer via the Interaction Lemma as sparse interpretable updates \citep[Table 1]{Shen}.

We build the whole matrix recursively and without error, performing many operations in parallel with MHA. After these transformer blocks, we are left with the following size $\R^{(d+7) \times (n+1)}$ matrix:

\begin{equation}\label{eq:matrixbeforefinal}
    Z \coloneqq \begin{bmatrix}
    x_1 & x_2 & x_3 & \cdots & x_n & x \\
    1 & 1 & 1 & \cdots & 1 & 1 \\
    x_1 & x_2 & x_3 & \cdots & x_n & x \\
    x_1^2 & x_2^2 & x_3^2 & \cdots & x_n^2 & x^2 \\
    \vdots &  &  & \ddots & & \vdots \\
    x_1^d & x_2^d & x_3^d & \cdots & x_n^d & x^d \\
    y_1 & y_2 & y_3 & \cdots & y_n & 0 \\
    0 & 0 & 0 & \cdots & 0 & 0 \\
    \mathcal{I}_1 & \mathcal{I}_2 & \mathcal{I}_3 & \cdots & \mathcal{I}_n & \mathcal{I}_{n+1} \\
    1 & 1 & 1 & \cdots & 1 & 1
\end{bmatrix}.
\end{equation}

\paragraph{Solving Least Squares.} A final \emph{linear} transformer block and decoder directly outputs the solution to the OLS problem evaluated on the query. The final transformer block uses a linear attention layer ($\sigma=\mathrm{Id}$) with one head, in line with the literature \citep{Zhang,Cole25}. We follow the common notation for linear attention, which merges two parameters from \eqref{eq:attn}. On a matrix $H\in \R^{d_{\mathrm{embed}}\times \ell}$, a linear attention head $A^\mathrm{lin}_{Q,V}(H)$ with parameters $Q,V\in \R^{d_{\mathrm{embed}}\times d_{\mathrm{embed}}}$ is defined as
\[
    \mathrm{A}^\mathrm{lin}_{Q,V}(H)=VH\frac{H^TQH}{\rho(\ell)},
\]
where the normalization factor $\rho(\ell)\in\N$ is a function of sequence length $\ell$. Notice that this is a special case of the previously defined attention head in \eqref{eq:attn}: with $\sigma=\mathrm{Id}$, the output becomes $VHH^TK^TQH$, and $K^TQ$ are merged into a single parameter $Q$ (we may also incorporate the normalization constant here). We extend this to a transformer block with trivial FFN component and one attention head: 
\[\mathrm{B}^\mathrm{lin}_{Q,V}(H)=\mathrm{A}^\mathrm{lin}_{Q,V}(H)+H.\]

An adaptation of a well-studied parameterization lets us approximate the least squares solution. Recall that the decoder reads out the $(d+3,n+1)$-th coordinate of the matrix. Choosing normalization $\rho(\ell)=\ell-1$, with parameters of the form
\[
V=\begin{bmatrix}
    0_{(d+2)\times (d+2)} & 0_{(d+2)\times 1} & 0_{(d+2)\times 4} \\
    0_{1\times (d+2)} & p & 0_{1\times 4} \\
    0_{4\times (d+2)} & 0_{4\times 1} & 0_{4\times 4}
\end{bmatrix}\qquad\text{ and }\qquad
    Q=\begin{bmatrix}
    0_{1\times 1} & 0_{1\times (d+1)} & 0_{1\times 5} \\
    0_{(d+1)\times 1} & \tilde{Q} & 0_{(d+1)\times 5} \\
    0_{5\times 1} & 0_{5\times (d+1)} & 0_{5\times 5}
\end{bmatrix}
\]
where $p\in \R$ and $\tilde{Q}\in\R^{(d+1)\times (d+1)}$, a simple derivation shows that the final output after decoding is given by 
\begin{equation}\label{eq:linearoutput}
    D(\mathrm{B}^\mathrm{lin}_{Q,V}(Z))=p\left(\frac{1}{n}\sum_{i=1}^ny_i\mathbf{v}(x_i)^T\right)\tilde{Q}\mathbf{v}(x).
\end{equation} 

By \eqref{eq:fpolyapprox}, $y_i\approx \mathbf{a}^T\mathbf{v}(x_i)$ for all $1\leq i\leq n$ (this approximation will be made precise in \cref{ass:polyapproximability}). If we substitute that in and further let $p=1$ and $\tilde{Q}=\Sigma^{-1}$ where $\Sigma$ is the \emph{uncentered covariance} $\Sigma\coloneqq \E_{x\sim\D_X}[\mathbf{v}(x)\mathbf{v}(x)^T]$, \eqref{eq:linearoutput} simplifies to approximate $\mathbf{a}^T\Sigma_n\Sigma^{-1}\mathbf{v}(x)$ where $\Sigma_n$ is the \emph{empirical} uncentered covariance $\Sigma_n\coloneqq \frac{1}{n}\sum_{i=1}^n\mathbf{v}(x_i)\mathbf{v}(x_i)^T$. As $n\to\infty$, $\Sigma_n\to\Sigma$ and the output approaches the polynomial regression approximation for $f(x)$. 

The choice of $V$ and $Q$ here are essentially the standard choice for the ICL of linear systems \citep{Zhang,Cole25}, modified with zero padding to mask away the additional rows unnecessary for the final step.

\section{Generalization Error for In-Context Polynomial Regression}\label{sec:generror}

\subsection{Generalization Error Analysis}

We explicitly restrict our function space to a tubular neighborhood of a subset of polynomials. The generality of the function space depends on the chosen polynomial degree $d$: the Stone-Weierstrass theorem states that every continuous function on a closed interval is approximable by a polynomial of sufficiently high degree. 

\begin{assumption}[Polynomial Approximability of Function Class]\label{ass:polyapproximability}
For a given degree $d$ and constants $\delta,\alpha, R_F\geq 0$ and a domain $X$, 
the distribution $\D_F$ is supported on the function class 
\[
    F(\delta,\alpha,d,R_F)\coloneqq  \bigg\{f \mid \min_{p\in\Pi_d^\alpha}\|f-p\|_{L^\infty(X)}\leq \delta\text{ and }\|f\|_{L^\infty(X)}\leq R_F\bigg\}
\]
where $\Pi_d^\alpha$ consists of polynomials of the form $p(x)=\sum_{i=0}^da_ix^i$ with coefficients satisfying \newline $\|(a_0,\ldots,a_d)\|\leq\alpha$. The bound $R_F$ can be expressed in terms of $\alpha$ and the bounded domain specified below.
\end{assumption}

Next, we make assumptions on $\D_X$. 

\begin{assumption}\label{ass:conditioning}
$\D_X$ is supported on $X\subseteq[-R,R]$ and the covariance matrix $\Sigma\coloneqq \E_{x\sim\D_X}[\mathbf{v}(x)\mathbf{v}(x)^T]$ has eigenvalues bounded away from $0$. Namely, there is $\tau>0$ such that 
\[\|\Sigma^{-1}\|=\frac{1}{\lambda_{\min}(\Sigma)}\leq \tau\]
\end{assumption}

This is closely related to the conditioning of the feature matrix. We note that related works make similar assumptions to the same effect, such as orthogonality of basis vectors \citep{Suzuki24}.

We denote $R_{\mathbf{v}}\coloneqq\sup_{x\in X}\|\mathbf{v}(x)\|\leq\sqrt{d+1}\max(R^d,1)$. The completed matrix in \eqref{eq:matrixbeforefinal} has norm bound $U\coloneqq\|Z\|_{\max}=\max(R_{\mathbf{v}}, R_F)$.

 We now state our main result below; the meaning of each constant and variable is summarized in \cref{tab:notation}.

\begin{thm}[Generalization Error]\label{thm:generror}
Suppose $\D_F$ satisfies \cref{ass:polyapproximability} and $\D_X$ satisfies \cref{ass:conditioning}. Fix the parameters of the transformer network class $\G(L_G,m_G,d_{\mathrm{embed}},\ell, L_{\mathrm{FFN}}, w_{\mathrm{FFN}},R_G,\kappa)$ as 
\begin{align*}
    & L_G = O(\log d), m_G=O(dn), d_{\mathrm{embed}}=d+7, \ell=n+1 \\
    & L_{\mathrm{FFN}}=O(1), w_{\mathrm{FFN}}=d+7, \kappa = O(d^4R_\mathbf{v}^2n^2U^2+\tau).
\end{align*}
where $O(\cdot)$ hides constants.
The empirical risk minimizer $G^\Gamma_{\hat{\theta}}$ given in \eqref{eq:erm} satisfies 
        \[
         \E_\Gamma[\RR(G_{\hat\theta}^\Gamma)] \leq C_1\bigg(\frac{\alpha^2 \tau^2 R_\mathbf{v}^{6}\log d}{n}+\delta^2 R_\mathbf{v}^4\tau^2+\delta^2\bigg) + C_2\bigg(\frac{\sqrt{nd^3(\log d)^3\log((d^4R_\mathbf{v}^2n^2U^2+\tau)L)}}{\sqrt{L}}\bigg)\]
where $C_1$ and $C_2$ are absolute constants.
\end{thm}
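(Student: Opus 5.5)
We follow the standard bias--variance route for ERM. Let $G_{\theta^*}\in\G$ be the explicit transformer from \cref{sec:icpolyapprox}: its early ReLU blocks build $Z$ in \eqref{eq:matrixbeforefinal} exactly, and its final linear block uses $p=1$, $\tilde Q=\Sigma^{-1}$. We split
\[
\E_\Gamma[\RR(G^\Gamma_{\hat\theta})]\le 2\,\E_\Gamma\big[\RR^\Gamma(G^\Gamma_{\hat\theta})\big]+\E_\Gamma\Big[\RR(G^\Gamma_{\hat\theta})-2\RR^\Gamma(G^\Gamma_{\hat\theta})\Big].
\]
Since $\hat\theta$ minimizes the empirical risk, $\E_\Gamma\RR^\Gamma(G^\Gamma_{\hat\theta})\le\E_\Gamma\RR^\Gamma(G_{\theta^*})=\RR(G_{\theta^*})$. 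The first term therefore reduces to bounding the population risk of the constructed network, which gives the $C_1$ term. The second term is a uniform deviation over $\G$, bounded by covering numbers, which gives the $C_2$ term.

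\textbf{Approximation step.} First check that $G_{\theta^*}$ lies in the stated class. Building the powers $x^k$ by repeated doubling with the Interaction Lemma \labelcref{lem:interaction}, in parallel over columns and degrees, should give $L_G=O(\log d)$ and $m_G=O(dn)$ heads. The weights needed to isolate interactions under the sinusoidal encoding scale like $d^4R_\mathbf{v}^2n^2U^2$, and $\|\Sigma^{-1}\|\le\tau$ accounts for the last layer, which gives $\kappa$. Clipping the output at $R_G\sim R_F$ is harmless because $|f|\le R_F$.

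Next bound the error of the output in \eqref{eq:linearoutput}. Let $p^*\in\Pi^\alpha_d$ with coefficients $\mathbf a$ satisfy $\|f-p^*\|_\infty\le\delta$, and write $y_i=\mathbf a^T\mathbf v(x_i)+\epsilon_i$ with $|\epsilon_i|\le\delta$. The output is then
\[
\mathbf a^T\Sigma_n\Sigma^{-1}\mathbf v(x)+\Big(\tfrac1n\textstyle\sum_i\epsilon_i\mathbf v(x_i)\Big)^T\Sigma^{-1}\mathbf v(x),
\]
and its error against $f(x)$ splits into three pieces:
\begin{itemize}[topsep=2pt, itemsep=2pt, parsep=0pt, partopsep=0pt]
\item $\mathbf a^T(\Sigma_n-\Sigma)\Sigma^{-1}\mathbf v(x)$;
\item the $\epsilon$ term, of size at most $\delta R_\mathbf{v}^2\tau$;
\item $f(x)-p^*(x)$, of size at most $\delta$.
\end{itemize}
Squaring these gives the $\delta^2R_\mathbf{v}^4\tau^2+\delta^2$ contributions. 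For the first piece we bound $\E\|\Sigma_n-\Sigma\|^2$ by matrix concentration (matrix Bernstein for bounded rank-one summands of norm at most $R_\mathbf{v}^2$). This gives $\E\|\Sigma_n-\Sigma\|^2\lesssim R_\mathbf{v}^4\log d/n$. Multiplying by $\alpha^2\tau^2R_\mathbf{v}^2$ yields $\alpha^2\tau^2R_\mathbf{v}^6\log d/n$.

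\textbf{Statistical step.} We bound the generalization gap with an $\epsilon$-covering argument on $\G$ in $L^\infty$, which leads to a rate of the form $\sqrt{\log\mathcal N/L}$ (or a Bernstein-type variant). For the covering number we use the Lipschitz dependence of the network on its parameters. The parameter count is about $L_G m_G d_{\mathrm{embed}}^2\sim nd^3\log d$, and with weights bounded by $\kappa$ we get $\log\mathcal N\lesssim nd^3\log d\cdot L_G\log(\kappa\,\cdot)\dots$. Choosing $\epsilon\sim 1/L$ should produce the stated factor $\sqrt{nd^3(\log d)^3\log(\kappa L)}$.

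\textbf{Main obstacle.} I expect the hardest part to be the Lipschitz constant of the network in its parameters. ReLU attention multiplies weights through $O(\log d)$ layers, each acting on entries of size up to $U$, so naively the constant is exponential in depth. Since depth is only $O(\log d)$, the plan is to show that this exponential factor, taken inside the logarithm, costs only additional $\log d$ factors, and that these account for the $(\log d)^3$ in the statement. The output clipping at $R_G$ is what keeps the loss bounded throughout the concentration argument.
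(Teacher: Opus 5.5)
This is essentially the paper's proof. Both use the same error-free Interaction-Lemma featurization with a one-head linear-attention readout $\tilde Q=\Sigma^{-1}$, matrix Bernstein for the $1/n$ term, and a parameter-Lipschitz covering bound for the uniform deviation. The paper goes through Rademacher complexity and Dudley's integral rather than a direct union bound, and your split $\mathcal{R}\le 2\mathcal{R}^\Gamma+(\mathcal{R}-2\mathcal{R}^\Gamma)$ gives the same two terms up to constants because $\mathcal{R}-2\mathcal{R}^\Gamma\le\mathcal{R}-\mathcal{R}^\Gamma$.

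Two remarks. First, bounding $\E\|\Sigma_n-\Sigma\|^2$ directly (e.g.\ by integrating the Bernstein tail) is more careful than the paper, which only bounds $\E\|\Sigma_n-\Sigma\|$ and then squares it, a step Jensen does not justify. Second, in the covering step only the input of a generic $G_\theta\in\G$ is bounded by $U$, not its intermediate activations, since ReLU attention is cubic in its input. Your premise of ``entries of size up to $U$'' therefore has to be replaced by an actual layerwise magnitude bound; the paper does not derive one itself but imports the Lipschitz estimate of \citet{Havrilla24}.
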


\begin{rem}
    In terms of $n$ and $L$, Theorem \ref{thm:generror} gives rise to the following rate
    \[
        \E_\Gamma[\RR(G_{\hat\theta}^\Gamma)] = O\left(   \frac{1}{n} + \frac{\sqrt{n\log(n^2L)}}{\sqrt{L}}\right).
    \]
    
    The first term, corresponding to the approximation error, recovers the Monte Carlo convergence rate for the mean squared regression error on $n$ points. The statistical error in the second term decreases as the training sample size $L$ increases. The statistical error 
    has a dependence on $\sqrt{n}$ in our framework because our construction utilizes $n$ attention heads, thus a larger context length demands a larger network to process the context. This differs from the linear setting, where a single linear attention head can process contexts of any length. If the training size $L$ is large enough, the statistical error in the second term is guaranteed to be small.
\end{rem}

\subsection{Proof Sketch of \cref{thm:generror}}

We decompose the error $\RR(G_{\hat\theta}^\Gamma)$ as
\begin{equation}\label{eq:generrordecomp}
\begin{aligned}
\RR(G_{\hat\theta}^\Gamma) &=
\big[\RR(G_{\hat{\theta}}^\Gamma)-\RR^\Gamma(G^\Gamma_{\hat{\theta}})\big]
+\big[\RR^\Gamma(G_{\hat\theta}^\Gamma)-\RR^\Gamma\left(G_{\tilde{\theta}}\right)\big] +\big[\RR^\Gamma\left(G_{\tilde{\theta}}\right)-\RR\left(G_{\tilde{\theta}}\right)\big]
+\RR\left(G_{\tilde{\theta}}\right) \\
&\leq 2 \sup_{G_\theta \in \G}|\RR(G_\theta)-\RR^\Gamma(G_\theta)|
+ \RR\left(G_{\tilde{\theta}}\right)
\end{aligned}
\end{equation}
where $G_{\tilde{\theta}}$ is a network we explicitly construct from the network class $\G$, and we discarded the second term $\big[\RR^\Gamma(G_{\hat{\theta}}^\Gamma)-\RR^\Gamma\left(G_{\tilde{\theta}}\right)\big]$ since $\hat\theta$ is a minimizer. Taking expectation, the generalization error $\E_\Gamma[\RR(G_{\hat\theta}^\Gamma)]$ is upper bounded by
\[
\underbrace{\RR\left(G_{\tilde{\theta}}\right)}_{\text{I: Approximation Error}} + 2 \underbrace{\E_\Gamma[\sup_{G_\theta\in \G}|\RR(G_\theta)-\RR^\Gamma(G_\theta)|]}_{\text{II: Statistical Error}}.
\]

Note that the approximation error is independent of training data, hence is without expectation. I is bounded by \cref{lem:approxerror} and II is bounded by \cref{lem:staterror}. The network hyperparameters are determined by the approximation construction, which in turn specify the network class for the statistical error part. Putting the bounds together yields the final result.

\paragraph{Approximation Theory}

\begin{lemma}[Approximation Error]\label{lem:approxerror}
Suppose \cref{ass:polyapproximability} and \cref{ass:conditioning} are satisfied. There exists a transformer network $G_{\tilde\theta}$ with hyperparameters 
\vspace{-0.1cm}
\[
\begin{aligned}
    & L_G = O(\log d), m_G=O(dn), d_{\mathrm{embed}}=d+7, \ell=n+1 \\
    & L_{\mathrm{FFN}}=O(1), w_{\mathrm{FFN}}=d+7, \kappa = O(d^4R_\mathbf{v}^2n^2U^2+\tau)
\end{aligned}
\]
and an absolute constant $C_1$, such that 
\[\RR(G_{\tilde{\theta}})\leq C_1\left(\frac{\alpha^2 \tau^2 R_\mathbf{v}^{6}\log d}{n}+\delta^2 R_\mathbf{v}^4\tau^2+\delta^2\right)
\] 
\end{lemma}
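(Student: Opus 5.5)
The plan has two halves. First, build $G_{\tilde\theta}$ explicitly, so that after the ReLU blocks the embedded prompt becomes exactly the matrix $Z$ in \eqref{eq:matrixbeforefinal}. Second, bound the squared error of the final linear block with $p=1$ and $\tilde Q=\Sigma^{-1}$.

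\textbf{Construction.} Starting from the embedding \eqref{eq:encoding}, one block writes the constant row $1$ and copies $x$ into the row for $x^1$, using the Interaction Lemma~\ref{lem:interaction}. Each interaction is isolated per column through the sinusoidal encoding $\mathcal{I}_i$, which is why $O(n)$ heads per product are needed. I would then fill the monomial rows by repeated squaring and multiplication. At stage $k$, each $x^j$ with $2^{k-1}<j\le 2^k$ is computed as $x^{2^{k-1}}\cdot x^{j-2^{k-1}}$, with all products of a stage done in parallel by multi-head attention. This takes $O(\log d)$ blocks and $O(dn)$ heads per block.

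Products of two entries come from the Interaction Lemma with ReLU attention: using the sign-splitting identity $ab=\sigma(ab)-\sigma(-ab)$, pairs of heads produce exact products. The FFN, of width $d+7$ and constant depth, only cleans up auxiliary rows. Because every entry is bounded by $U$, the weights stay bounded. The $n^2$ factor in $\kappa$ comes from the scaling needed to separate positions via $\mathcal{I}_i$, and the $d^4R_\mathbf{v}^2U^2$ factor from the magnitudes of the monomials. The last block is $\mathrm{B}^{\mathrm{lin}}_{Q,V}$ with $\tilde Q=\Sigma^{-1}$, which contributes $\tau$ to $\kappa$. Since the featurization is exact, the output is
\[
\hat y=\frac1n\sum_{i=1}^n y_i\mathbf v(x_i)^T\Sigma^{-1}\mathbf v(x).
\]

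\textbf{Error analysis.} Let $p^*=\mathbf a^T\mathbf v$ with $\|\mathbf a\|\le\alpha$ and $\|f-p^*\|_\infty\le\delta$, and write $y_i=\mathbf a^T\mathbf v(x_i)+\epsilon_i$ with $|\epsilon_i|\le\delta$. Then
\[
\hat y-f(x)=\mathbf a^T(\Sigma_n-\Sigma)\Sigma^{-1}\mathbf v(x)+\frac1n\sum_i\epsilon_i\mathbf v(x_i)^T\Sigma^{-1}\mathbf v(x)+\bigl(p^*(x)-f(x)\bigr).
\]
The second term is bounded by $\delta R_\mathbf v^2\tau$ and the third by $\delta$, which gives the terms $\delta^2R_\mathbf v^4\tau^2+\delta^2$ after $(a+b+c)^2\le3(a^2+b^2+c^2)$.

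The first term is bounded by $\alpha\tau R_\mathbf v\|\Sigma_n-\Sigma\|$. It then suffices to show $\E\|\Sigma_n-\Sigma\|^2\lesssim R_\mathbf v^4\log d/n$. I expect this to be the main obstacle, since it needs a matrix concentration bound with the correct logarithmic factor. I would apply matrix Bernstein, or a matrix Rosenthal/Khintchine inequality, to the i.i.d.\ zero-mean summands $\mathbf v(x_i)\mathbf v(x_i)^T-\Sigma$. These have norm at most $2R_\mathbf v^2$ and variance proxy at most $R_\mathbf v^2\|\Sigma\|\le R_\mathbf v^4$. Integrating the tail bound gives $\E\|\Sigma_n-\Sigma\|^2\lesssim R_\mathbf v^4\log d/n$, up to a lower-order $R_\mathbf v^4(\log d)^2/n^2$ term that is absorbed. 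Combining yields $\alpha^2\tau^2R_\mathbf v^6\log d/n$. Finally, I would confirm that $|\hat y|\le R_G$ for the network class, clipping via the FFN if needed.
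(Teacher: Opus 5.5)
Your proposal is correct and follows the paper's proof essentially step for step: the same doubling construction of the Vandermonde rows via the Interaction Lemma in $O(\log d)$ blocks with $O(dn)$ heads, the same final linear attention block with $\tilde Q=\Sigma^{-1}$, and the same split into a covariance-concentration term, a $\delta R_\mathbf{v}^2\tau$ propagation term, and a $\delta$ bias term. The only deviations are minor or in your favor: you get signed products from the pair $\sigma(ab)-\sigma(-ab)$ instead of the paper's shift by $M$ followed by the Decrementing Lemma (which also makes the FFN unnecessary), and you explicitly bound $\E\|\Sigma_n-\Sigma\|^2$ by integrating the Bernstein tail, whereas the paper bounds only $\E\|\Sigma_n-\Sigma\|$ and then uses it inside a squared expectation without comment.
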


Our constructive proof enjoys two important properties:

\textbf{Universality.} In \cref{lem:approxerror}, both the architecture and parameters are \emph{universal} for all functions of the class $F$ specified in \cref{ass:polyapproximability}. 

\textbf{Effective Representation.} Using a shallow transformer, we are able to construct the feature representation without any error due to the simple multiplicative definition of a monomial basis being compatible with the Interaction Lemma \labelcref{lem:interaction}. In contrast, deep ReLU FFNs can only approximate multiplication with certain accuracy, and require depth to scale with desired accuracy \citep{Yarotsky}. 

\begin{proof}[Proof Sketch of Lemma \ref{lem:approxerror}]
The main ideas are outlined here, with the full proof in \cref{sec:approxerrorproof}. 

We first decompose the approximation error. By \cref{ass:polyapproximability}, there exists a polynomial $p_{f}(x)\in\Pi_d^\alpha$ with coefficients satisfying $\|\mathbf{a}\| \leq \alpha$  such that 
\[\|f-p_{f}\|_{L^\infty(X)}\leq \delta.\]

This yields the following decomposition:
\begin{align*}
    \RR(G_{\tilde{\theta}}) &= \E_{\mathbf{s}^f}|G_{\tilde{\theta}}(\mathbf{s}^f)-f(x)|^2 \\
    &= \E_{\mathbf{s}^f}|G_{\tilde{\theta}}(\mathbf{s}^f)-p_{f}(x)+p_{f}(x)-f(x)|^2 \\
    &\leq 2\E_{\mathbf{s}^f}|G_{\tilde{\theta}}(\mathbf{s}^f)-p_{f}(x)|^2 +2\E_{x\sim \D_X, f\sim \D_{F}} |p_{f}(x)-f(x)|^2 \\
    &\leq 2\E_{\mathbf{s}^f}|G_{\tilde{\theta}}(\mathbf{s}^f)-p_{f}(x)|^2 + 2\delta^2.
\end{align*}  

The first term is bounded by an explicit construction of an oracle transformer network, following the approach outlined in \cref{sec:icpolyapprox}. In particular, for a fixed set of monomial basis functions, we describe a transformer which estimates the solution to polynomial regression under that basis. 

We are able to realize the feature functions exactly, without incurring any error. We analyze the error of the final approximation with the last linear attention layer via matrix concentration inequalities, and also incorporate how the polynomial approximation error propagates through. \phantom{\qedhere}
\end{proof}

\paragraph{Statistical Theory}
We bound the statistical error term with standard techniques from learning theory, involving Rademacher complexity and an explicit covering number computation for our transformer network class. We state the statistical error here and defer the proof to \cref{sec:staterrorproof}.
\begin{lemma}[Statistical Error]\label{lem:staterror}
With $\G$ having hyperparameters as specified in \cref{lem:approxerror}, 
\[
    \E_\Gamma[\sup_{G_\theta\in \G}|\RR(G_\theta)-\RR^\Gamma(G_\theta)|] 
    \leq C_2\bigg(\frac{\sqrt{nd^3(\log d)^3\log((d^4R_\mathbf{v}^2n^2U^2+\tau)L)}}{\sqrt{L}}\bigg)
\]
for an absolute constant $C_2$.
\end{lemma}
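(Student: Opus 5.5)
The bound will come from symmetrization and Rademacher complexity, followed by a covering-number bound for $\G$. The loss $(G_\theta(\mathbf{s}^f)-y)^2$ is bounded because $|G_\theta|\le R_G$ and $|y|\le R_F$. Write $\mathcal{L}=\{\mathbf{s}^f,y\mapsto (G_\theta(\mathbf{s}^f)-y)^2 : G_\theta\in\G\}$. Standard symmetrization then gives
\[
\E_\Gamma\Big[\sup_{G_\theta\in\G}|\RR(G_\theta)-\RR^\Gamma(G_\theta)|\Big]\le 2\,\E_\Gamma\,\mathrm{Rad}_L(\mathcal{L}).
\]
Dudley's entropy integral, or simply a single-scale discretization, bounds the Rademacher complexity by
\[
\inf_{\epsilon>0}\Big\{ C\epsilon+ C(R_G+R_F)^2\sqrt{\tfrac{\log \mathcal{N}(\epsilon,\mathcal{L},\|\cdot\|_\infty)}{L}}\Big\}.
\]
Since the square loss is $2(R_G+R_F)$-Lipschitz on the relevant range, covering $\mathcal{L}$ reduces to covering $\G$ in sup norm over prompts whose embedded entries are bounded by $U$. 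Choosing $\epsilon\sim 1/L$ then produces the $\log(\cdot\, L)$ factor inside the square root.

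The core step is a parameter-Lipschitz estimate for the network class. For two parameter sets $\theta,\theta'$ with $\|\theta-\theta'\|_\infty\le\eta$, I will show $\sup_{\mathbf{s}}|G_\theta(\mathbf{s})-G_{\theta'}(\mathbf{s})|\le \Lambda\,\eta$. I will obtain this layer by layer.
\begin{itemize}
\item A ReLU attention head $VH\sigma((KH)^TQH)$ is Lipschitz in each of $Q,K,V$, and in $H$, with constants polynomial in $\kappa$, $d_{\mathrm{embed}}$, $\ell$ and the current bound on $\|H\|_{\max}$, because ReLU is 1-Lipschitz.
\item Summing $m_G$ heads multiplies the constant by $m_G$.
\item The FFN part with $L_{\mathrm{FFN}}=O(1)$ layers of width $w_{\mathrm{FFN}}$ contributes a factor $(w_{\mathrm{FFN}}\kappa)^{O(1)}$.
\item Composing $L_G$ blocks multiplies these constants. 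I will track the growth of $\|H\|_{\max}$ through the blocks with the same recursion.
\end{itemize}
This gives $\log\Lambda = O\big(L_G\log(m_G d_{\mathrm{embed}}\ell\kappa U)\big)$, which is polynomial in the architecture sizes inside the logarithm.

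With the Lipschitz estimate in hand, a uniform grid of mesh $\eta=\epsilon/\Lambda$ on the parameter box $[-\kappa,\kappa]^{P}$ yields
\[
\log\mathcal{N}(\epsilon,\G)\le P\log(2\kappa\Lambda/\epsilon).
\]
Here $P$ counts the parameters: $L_G$ blocks, each with $O(m_G d_{\mathrm{embed}}^2)$ attention weights and $O(L_{\mathrm{FFN}}w_{\mathrm{FFN}}^2)$ FFN weights. Substituting $L_G=O(\log d)$, $m_G=O(dn)$ and $d_{\mathrm{embed}}=w_{\mathrm{FFN}}=d+7$ gives $P=O(nd^3\log d)$. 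The factor $\log(2\kappa\Lambda/\epsilon)$ is $O(\log d\cdot\log d\cdot\log(\kappa L))$, up to absorbing polylogarithmic terms in $n,d$, since $\kappa=O(d^4R_\mathbf{v}^2n^2U^2+\tau)$ already dominates. Multiplying the two gives the claimed $nd^3(\log d)^3\log((d^4R_\mathbf{v}^2n^2U^2+\tau)L)$ under the square root.

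The main obstacle is the Lipschitz bookkeeping. ReLU attention is not normalized, so $\|H\|_{\max}$ can grow multiplicatively through the layers, as $(\ell m_G\kappa^3\cdot)$ per block. I have to make sure this growth enters only logarithmically, which it does because it is raised only to the power $L_G=O(\log d)$. I also have to fold the final unclipped linear layer into the same scheme, and use the output bound $R_G$, enforced by the class definition, to control the loss range. The constants $C_2$ absorb $R_G$ and $R_F$ through the normalization built into the class.
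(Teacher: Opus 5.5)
Your overall route is the paper's: symmetrization (the paper cites Lemma 26.2 of Shalev-Shwartz and Ben-David), then Dudley's integral (cut at $1/\sqrt{L}$ there rather than $1/L$, which is immaterial). After that come the Lipschitz transfer through the square loss and a uniform grid on $[-\kappa,\kappa]^{p_G}$, with $p_G=L_G(3d_{\mathrm{embed}}^2m_G+L_{\mathrm{FFN}}w_{\mathrm{FFN}}^2)=O(nd^3\log d)$.

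The step that fails as written is the one you flagged yourself. ReLU attention is cubic in its input. If $\|H\|_{\max}\le\bar U$, every entry of $\mathrm{MHA}(H)$ is bounded by $m_G\ell d_{\mathrm{embed}}^4\kappa^3\bar U^3$. So the intermediate bounds obey $\bar U_{k+1}\lesssim c\,\bar U_k^3$ with $c=m_G\ell d_{\mathrm{embed}}^4\kappa^3(w_{\mathrm{FFN}}\kappa)^{L_{\mathrm{FFN}}}$. That is not a recursion with a fixed multiplicative factor per block, and it gives $\log\bar U_{L_G}\asymp 3^{L_G}\log(cU)$. The per-block parameter- and input-Lipschitz constants carry $\bar U_k^3$ and $\bar U_k^2$, so the recursion gives $\log\Lambda$ of order $3^{L_G}\log(cU)$, not $O(L_G\log(\cdot))$. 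With $L_G\asymp\log_2 d$ this is of order $d^{\log_2 3}$. Your bookkeeping would therefore put a polynomial factor in $d$ where the statement has $(\log d)^2$. The $\sqrt{n\log(\cdot\,L)/L}$ shape survives, but the claimed $d$-dependence does not. The class constraints do not fix this: $\|\theta\|_\infty\le\kappa$ and $\|G_\theta\|_{L^\infty}\le R_G$ control the weights and the single decoded entry. They do not bound the other rows of the intermediate matrices, which are what enter the grid argument's Lipschitz constant.

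The paper does not carry out this recursion. It imports the block-level estimates and the composed constant from Havrilla--Liao, whose exponents are $L_G^2$ on $d_{\mathrm{embed}},\kappa,m_G,\ell$. Its $(\log d)^3$ is one factor $L_G$ from $p_G$ times $L_G^2$ from those exponents. Your own claim $\log\Lambda=O(L_G\log(\cdot))$ would produce only one power of $\log d$ there, so the $\log d\cdot\log d$ you wrote does not follow from your bookkeeping either. To reach the stated bound you have two options:
\begin{itemize}
\item rely on that imported constant, as the paper does;
\item add an ingredient that keeps every block's output bounded by a depth-independent constant, for instance a constraint or clipping on intermediate activations built into $\mathcal{G}$.
\end{itemize}
The layer-by-layer recursion you sketch cannot by itself give polylogarithmic dependence on $d$.
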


\section{Splines and Beyond}\label{sec:splinesandbeyond}
In this section, we extend the theoretical framework established for polynomials to splines \citep{Devore}, along with studying high-dimensional settings. Importantly, the only change from polynomial regression is the choice of features, thus only a minor modification of the construction is required to apply the same complete analysis we rigorously did in \cref{sec:generror} here.

\subsection{Spline Methods}
A \emph{spline} is a piecewise polynomial function, with domain split by a set of \emph{knots} in $X$. After fixing a set of knots, spline regression fits a spline to data by fitting local low-degree polynomials, expressible in terms of a \emph{B-spline basis}. Spline regression is thus another example of linear regression with fixed features: we featurize the inputs with a B-spline basis, then solve the linear system with OLS. 

B-spline bases can be defined via the \emph{Cox-de Boor recursion formula}. Suppose the input domain is a closed interval $X=[a,b]\subseteq[-R,R]$ and we have $m+1$ distinct knots $t_1,t_2,\ldots, t_{m+1}\in X$ with $t_1=a$, $t_{m+1}=b$, and $t_j< t_{j+1}$. This breaks the interval up into $m$ bins. We restrict to the case of equally spaced knots along $X$, where we can use \emph{cardinal B-splines} that have a simple definition using ReLU. The $q$-th degree cardinal B-spline $B_q$ (which will fit degree $q$ local polynomials) can be expressed by the following formula:
\begin{equation}\label{eq:cardinalbspline}
B_q(x) = \frac{1}{q!}\sum_{j=0}^{q+1}(-1)^j\binom{q+1}{j}(\mathrm{ReLU}(x-j))^q.
\end{equation}

By shifting and rescaling, we can form a basis for our relevant problem. Denote the knot spacing by $h\coloneqq\frac{b-a}{m}$. We consider the feature map
\[\phi(x)=\mathbf{b}(x)\coloneqq \left[B_q\left(\frac{x-t_{1-q}}{h}\right),\ldots,B_q\left(\frac{x-t_m}{h}\right)\right]^T\]
where we extend the indexing to additional ``ghost knots'' (defined by the obvious shifts by $h$). Any degree~$q$ spline~$s$ on our domain can be expressed as a linear combination 
\begin{equation}\label{eq:bsplinebasis}
    s(x)=\sum_{i=1-q}^m \alpha_iB_q\left(\frac{x-t_i}{h}\right).
\end{equation}
Notice from the formula that the only operations we need here (in terms of the variable $x$) are sum, product, constant multiplication, and ReLU, all of which can be performed in-context exactly by the Interaction Lemma \labelcref{lem:interaction}. Consequently, the only error incurred is again from the finite-sample gap and spline regression bias.

We derive the generalization error result for degree $1$ linear splines. The following theorem assumes direct analogues of \cref{ass:polyapproximability} and \cref{ass:conditioning} for linear splines, which we state explicitly in \cref{sec:splineassumptions}. For the latter, we denote $\|\Sigma^{-1}\|\leq \tau_{\mathrm{spline}}$. Splines possess better conditioning than global high-degree polynomials which helps avoid some of the numerical instability.

\begin{thm}[Generalization Error for Spline Regression]\label{thm:splinegenerror}
    Fix the parameters of the transformer network class $\G$ as 
\begin{align*}
    & L_G = 2, m_G=O(nm), d_{\mathrm{embed}}=m+7, \ell=n+1 \\
    & L_{\mathrm{FFN}}=0, w_{\mathrm{FFN}}=0, \kappa = O(m^4R^2n^2U^2+\tau_{\mathrm{spline}}).
\end{align*}
The minimizer of empirical loss $G^\Gamma_{\hat{\theta}}$ in this class satisfies 
     \[
         \E_\Gamma[\RR(G_{\hat\theta}^\Gamma)] \leq C_3\bigg(\frac{\alpha^2\tau^2 \log m}{n}+\delta ^2\tau^2+\delta^2\bigg) +C_4\bigg(\frac{\sqrt{nm^3\log((m^4R^2n^2U^2+\tau_{\mathrm{spline}})L)}}{\sqrt{L}}\bigg)
    \]
     for absolute constants $C_3$ and $C_4$. 
\end{thm}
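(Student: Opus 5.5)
The plan mirrors the proof of \cref{thm:generror}. I would use the same decomposition \eqref{eq:generrordecomp} into an approximation error $\RR(G_{\tilde\theta})$ for an explicitly constructed network and a statistical error $2\E_\Gamma[\sup_{G_\theta\in\G}|\RR(G_\theta)-\RR^\Gamma(G_\theta)|]$. The only new ingredient is the featurization stage. The final linear attention block, the parameters $p=1$ and $\tilde Q=\Sigma^{-1}$ (with $\Sigma=\E[\mathbf{b}(x)\mathbf{b}(x)^T]$), and the covering-number argument are reused with $\mathbf{v}$ replaced by $\mathbf{b}$ and $d+1$ replaced by the number of basis functions $m+1$.

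\textbf{Construction.} For $q=1$, \eqref{eq:cardinalbspline} gives the hat function $B_1(z)=\mathrm{ReLU}(z)-2\,\mathrm{ReLU}(z-1)+\mathrm{ReLU}(z-2)$. Each feature $B_1((x-t_i)/h)$ is therefore a fixed linear combination of three ReLUs of affine functions of $x$, and no products are needed.

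In the first ReLU-attention block, I would use the Interaction Lemma \labelcref{lem:interaction} with one head per (token, ReLU term) pair to write $\mathrm{ReLU}((x_k-t_i)/h-j)$ into row $i$ of token $k$. The affine shift is supplied by the constant row of ones, and the positional encoding isolates each token. This uses $O(nm)$ heads, and the head weights scale like $1/h\lesssim m/R$. A second block would assemble the signed combinations, or clean up residual copies of $x$, so that the matrix takes the form $Z$ in \eqref{eq:matrixbeforefinal} with $\mathbf{b}(x_k)$ in place of $\mathbf{v}(x_k)$. The last step is the linear block with the $V,Q$ from \cref{sec:icpolyapprox}. If the signed sum can be folded into the value matrices of the first block, the second ReLU block is unnecessary. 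Either way no FFN is used, so $L_{\mathrm{FFN}}=w_{\mathrm{FFN}}=0$. I expect to count the final linear block among the $L_G=2$ blocks, which requires the features to be produced in a single ReLU-attention block. This is plausible precisely because each feature is linear in ReLUs of affine inputs.

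\textbf{Approximation error.} As in the sketch of \cref{lem:approxerror}, let $s_f=\mathbf{a}^T\mathbf{b}$ with $\|\mathbf{a}\|\le\alpha$ and $\|f-s_f\|_\infty\le\delta$. Then
\[
\RR(G_{\tilde\theta})\le 2\E|\mathbf{y}\text{-term}-s_f(x)|^2+2\delta^2 .
\]
Write $\frac1n\sum_i y_i\mathbf{b}(x_i)=\Sigma_n\mathbf{a}+\frac1n\sum_i\epsilon_i\mathbf{b}(x_i)$ with $|\epsilon_i|\le\delta$, so the output equals $\mathbf{a}^T\Sigma_n\Sigma^{-1}\mathbf{b}(x)+(\text{error})$. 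The first part deviates from $s_f(x)$ by $\mathbf{a}^T(\Sigma_n-\Sigma)\Sigma^{-1}\mathbf{b}(x)$. A matrix Bernstein inequality gives $\E\|\Sigma_n-\Sigma\|^2\lesssim R_{\mathbf b}^4\log m/n$, which produces the $\alpha^2\tau^2\log m/n$ term. The residual part is bounded by $\delta R_{\mathbf b}^2\tau$, giving the $\delta^2\tau^2$ term.

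The key simplification compared with polynomials is that $0\le B_1\le1$ and at most two hat functions are nonzero at any point. Hence $\|\mathbf{b}(x)\|\le\sqrt2$, i.e.\ $R_{\mathbf b}=O(1)$, and this is why the powers of $R_{\mathbf{v}}$ disappear from the bound.

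\textbf{Statistical error.} I would repeat the proof of \cref{lem:staterror} with $d_{\mathrm{embed}}=m+7$, $L_G=2$, $m_G=O(nm)$ and the stated $\kappa$. A Lipschitz-in-parameters covering bound gives $\log\mathcal N\lesssim L_G\,m_G\,d_{\mathrm{embed}}^2\log(\kappa L)\sim nm^3\log(\cdot)$, and the $\log d$ factors from depth drop out since the depth is constant. Feeding this into Rademacher/Dudley yields the $C_4$ term.

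\textbf{Main obstacle.} The hard part is the exact two-block construction. The Interaction Lemma must write ReLUs of affine functions, with the knot offsets, into the correct rows without contaminating the $y$ row or the query's $0$ entry. The query token also needs its features computed without leaking $y$-values. I would first try to place the knot offset through the ones row in the key/query product and put the signed coefficients $1,-2,1$ directly into the value matrices, accepting $\kappa=O(m^4R^2n^2U^2+\tau_{\mathrm{spline}})$ from the $1/h$ and normalization scalings.
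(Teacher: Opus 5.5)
Your proposal is correct and follows the paper's proof essentially step for step. The paper's construction is the one you converge on:
\begin{itemize}
\item a single wide ReLU-attention block with $3(n+1)(m+1)$ heads;
\item the knot offsets $t_j,\,t_j+h,\,t_j+2h$ are supplied through the ones row in the key kernel;
\item by positive homogeneity, $\mathrm{ReLU}\big((x-t_j)/h-k\big)=\frac1h\mathrm{ReLU}(x-t_j-kh)$, so the signed weights $\frac1h,-\frac2h,\frac1h$ sit in the value matrices and the residual connection sums them into one entry;
\item this is followed by the linear block with $\tilde Q=\Sigma^{-1}$, $R_{\mathbf b}=O(1)$, and the covering argument with $p_G=O(nm^3)$.
\end{itemize}
So your hedged second ReLU block is indeed unnecessary. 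Your second-moment form $\E\|\Sigma_n-\Sigma\|^2\lesssim R_{\mathbf b}^4\log m/n$ is, if anything, slightly more careful than the paper: the paper only bounds $\E\|\Sigma_n-\Sigma\|$ and then uses its square.
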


The proof can be found in \cref{sec:splineproof}. Besides the construction details, the proof is nearly identical to that of \cref{thm:generror}, mutatis mutandis. 

It is clear from the formula that this can be easily extended to higher degree splines with more basic arithmetic calculations. More complicated calculations can be done with a larger $d_\mathrm{embed}$, as the intermediate space can be used as a `scratchpad' for performing and storing computations in-context. We give a sketch of the construction for higher degree splines in \cref{sec:higherordersplines}.

\begin{rem}
While monomials and cardinal B-splines have simple recursive formulae, the construction can in fact be extended with a more elaborate transformer to produce general feature functions without explicit algebraic formula, introducing some error which would propagate through to the final prediction. We can adapt results of \citet{Havrilla24}, where an Interaction Lemma was used to prove a constructive universal approximation theorem of H\"older functions $f\colon [0,1]^D\to\R$ for transformers in the single task learning setting, which places the predicted function output in an entry of the final matrix. The construction follows the approach of \citet{Yarotsky} with local Taylor approximation and linear splines as a partition of unity. In our case, we need to tweak this to approximate any feature $\phi_i$, then run the construction in parallel on each $\phi_i$ and $x_i$ to featurize the data.\footnote{They perform the computations along the rows whereas we require them along the columns, but the Interaction Lemma can handle both.} Interestingly, this can still be done with constant depth independent of desired accuracy. 
\end{rem}

\subsection{Vector-Valued Functions and Multiple Variables}\label{sec:highdextensions}
Our primary analysis focused on the regression of certain real functions $\R\to \R$ for clarity. A natural question is whether we can extend this approach to a subset of continuous functions $\R^{D_1}\to \R^{D_2}$.

This framework can be easily generalized to vector-valued functions $f\colon \R\to \R^D$. By treating these as $D$ independent real functions along each output coordinate, we can construct a result by regressing along each coordinate separately. We utilize the same features (such as monomials), formed from the inputs, then approximate a vector of solutions to the least squares problems from each dimension. We give more details about the construction and resulting bounds for the polynomial case in \cref{sec:vectorvalued}.

Our framework can also be extended to multivariable functions $f\colon \R^D\to \R$, where the function class is still determined by the linear span of features. Usually the feature representation requires a more complex construction. For example, our construction for linear splines in-context in \cref{sec:splineproof} can be extended to multivariate linear splines by computing tensor products of linear splines in different directions, and still produces the basis with zero error. Similarly, the universal approximation results of \citet{Havrilla24} are established for functions on a hypercube, so they can be directly applied as well, but at the cost of error which may amplify in high-dimensions.

\section{Numerical Experiments}

\begin{figure*}[ht]
    \centering
    \begin{subfigure}[b]{0.49\linewidth}
        \centering
        \includegraphics[width=.85\linewidth]{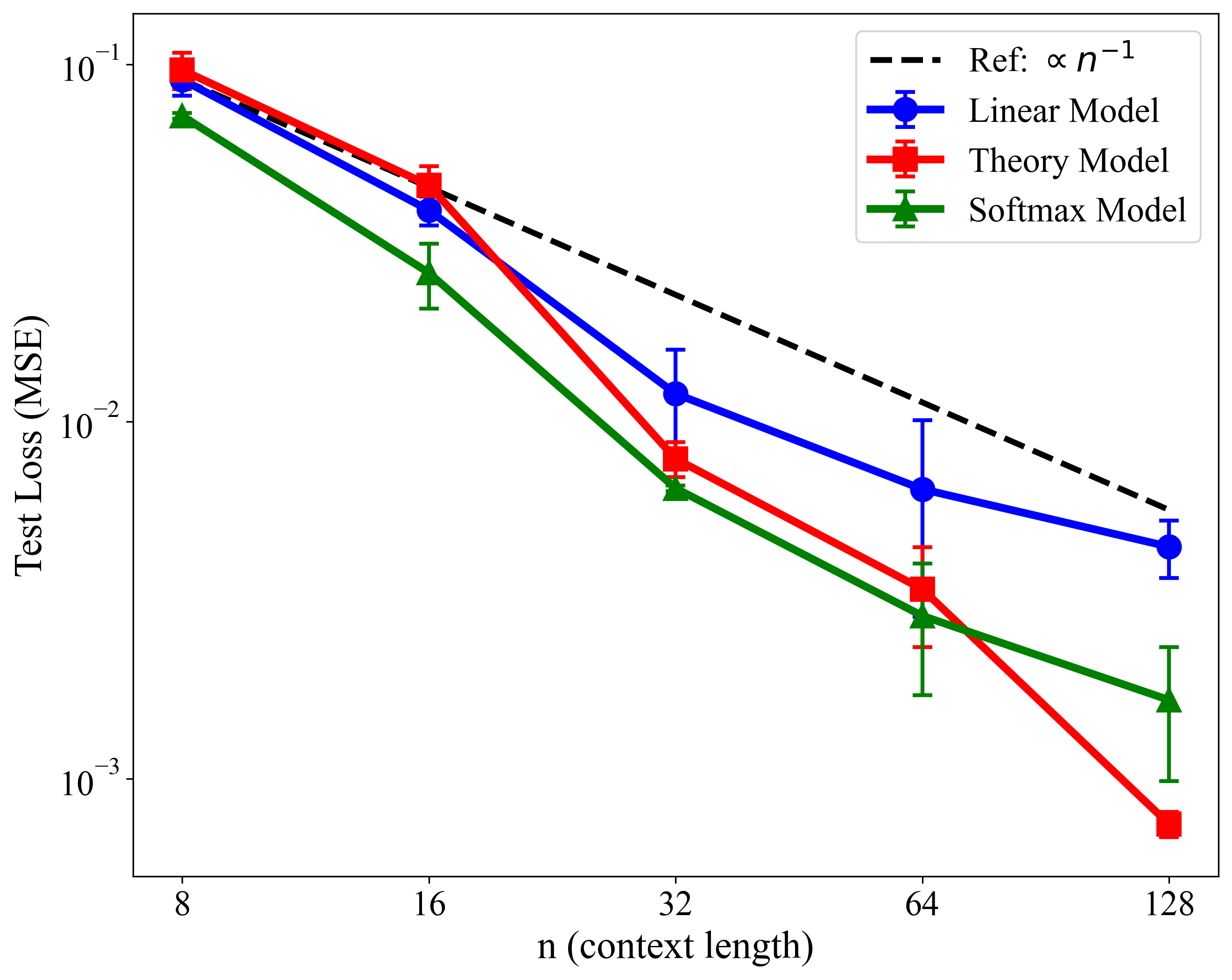}
        \caption{Test loss vs. context length ($n$) with fixed $L=32000$}
        \label{fig:nover8headsscalingn}
    \end{subfigure}
    \hfill
    \begin{subfigure}[b]{0.49\linewidth}
        \centering
        \includegraphics[width=.85\linewidth]{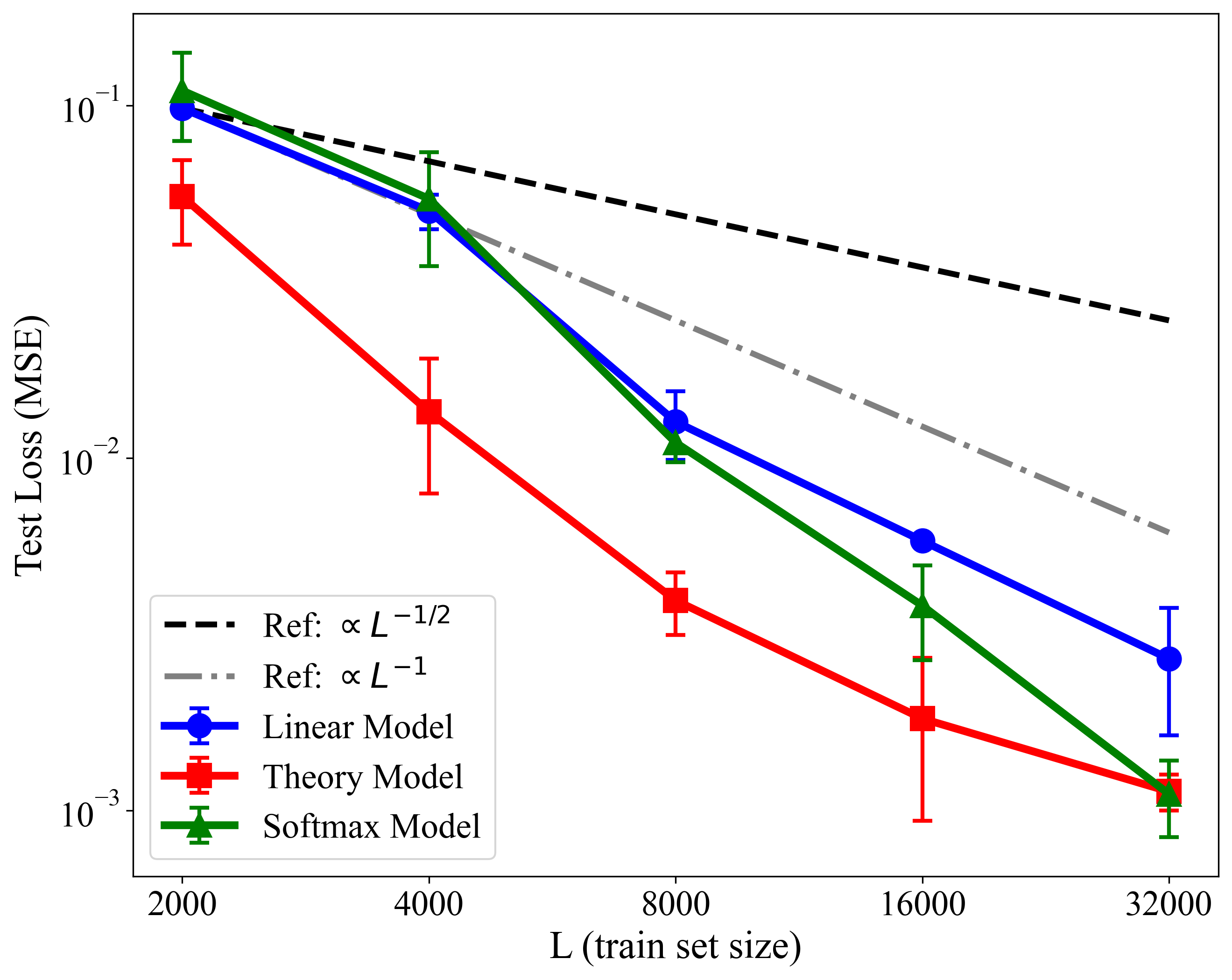}
        \caption{Test loss vs. training set size ($L$) with fixed $n=128$}
        \label{fig:nover8headsscalingL}
    \end{subfigure}
    \caption{Scaling results for different models trained and tested on synthetic regression tasks of degree $d=4$ polynomials. Models use a sum-based multi-head attention with scaling $n/8$ heads (thus (\subref{fig:nover8headsscalingL}) uses a fixed $16$ heads). Results are averaged over 3 seeds and error bars represent $\pm 1$ sd.}
    \label{fig:nover8headsscaling}
\end{figure*}

We conduct numerical experiments on synthetic datasets to verify our predicted scaling laws, primarily focused on polynomial regression as in \cref{thm:generror}. We restrict our function space to polynomials of some fixed degree $d$ and generate random polynomials by uniformly sampling coefficients on $[-1,1]$ under a Legendre polynomial basis (orthogonality ensures uniformity over the function space). To generate prompts, we uniformly sample $n$ context points from $[-1,1]$ and compute the corresponding outputs using the sampled polynomial. We also uniformly sample a query from $[-1,1]$. 

For the architecture, we remain faithful to what we used for the theory in \cref{def:transformernetwork}, with fixed embedding and positional encoding that hardcodes the prompt into \eqref{eq:encoding}, and decoding that outputs the relevant entry. We use a custom sum-based multi-head attention as defined in \eqref{eq:MHA}, and test three architectures with different activations: a theory model consistent with the construction for Theorem \ref{thm:generror}, which uses ReLU blocks and a single linear layer at the end (one head, no FFN); a linear transformer with only linear blocks; and a softmax transformer with only softmax blocks. For simplicity, we maintain an equal number of heads in each block, except the last linear layer in the theory model. The hyperparameters are informed by the theoretical construction used in our approximation error \cref{lem:approxerror}, with the specific choices listed in \cref{tab:hyperparams} in the appendix. 

Note that our approximation theory requires the number of attention heads per block to scale with the context length, where more heads are used to process additional tokens. This differs from practice, where a single transformer is trained and infers on prompts of various lengths. Our construction values accuracy over efficiency and almost certainly calls for more heads than are necessary in practice. We verify this in the numerical experiments, testing both a growing (with $n$) number of heads and a fixed number, where we observe similar scaling results in both.

Our results with a scaling number of heads are shown in \cref{fig:nover8headsscaling}. The result with a fixed $4$ heads can be seen at \cref{fig:4headsscalingn} in \cref{sec:additionalnumerical}, along with more ablation studies. We observe scaling that confirms our theoretical upper bounds; the scaling with training set size exhibits a faster convergence rate of $1/L$ than our theoretical bound of $1/\sqrt{L}$, and we leave the study of optimality to future work. The absolute error throughout is comparable for the theory model (ReLU + linear) and the softmax model, suggesting in this case the choice of activation is inconsequential. The linear model starts with similar error for small $n$ or $L$, but seems to scale worse than the other two. 

Finally, we also verify our scaling law for spline regression in \cref{thm:splinegenerror}. The results can be found in \cref{fig:splinescaling} in Appendix \ref{sec:additionalnumerical}.

\section{Conclusion}
In this work, we establish rigorous generalization error bounds for nonlinear in-context regression, identifying explicit scaling laws with respect to context length and training data size. A central aspect of our analysis is a featurization-based perspective: we decompose the error into statistical and approximation components and explicitly construct transformers that perform regression through fixed feature representations, such as polynomial or spline bases. From this viewpoint, attention mechanisms serve a dual role, simultaneously realizing the feature representation and approximating the solution of the resulting linear system within a single end-to-end architecture. Numerical experiments on synthetic regression tasks corroborate our theoretical predictions and illustrate the effectiveness of this framework.

\section*{Acknowledgment}
Alexander Hsu and Rongjie Lai are supported in part by NSF DMS-2401297. Zhaiming Shen and Wenjing Liao are partially supported by National Science Foundation under
NSF DMS-2145167 and the U.S. Department of Energy under DOE-SC0024348.

\bibliographystyle{plainnat}
\bibliography{references}

\newpage
\appendix
\section{Relevant Lemmas}

Below are two important lemmas used in the construction of the feature representation. We refer the reader to \citep{Shen2} for the proofs.

\begin{lemma} [Interaction Lemma] \label{lem:interaction}
   Let $H=[\mathbf{h}_t]_{1\leq t\leq\ell}\in\mathbb{R}^{d_{\mathrm{embed}}\times\ell}$ be a matrix such that \newline $\mathbf{h}_t^{(d_{\mathrm{embed}}-2)\colon (d_{\mathrm{embed}}-1)}=\mathcal{I}_t$ and $h^{d_{\mathrm{embed}}}_t=1$ for all $1\leq t\leq \ell$. Fix $t_1,t_2$ such that $1\leq t_1,t_2\leq \ell$, and $i$ satisfying $1\leq i\leq d_{\mathrm{embed}}$. Suppose $d_{\mathrm{embed}}\geq 5$ and $\|H\|_{\max}<U$ for some $U>0$, and the \textbf{data kernels} $Q^{\mathrm{data}}\in \R^{(d_\mathrm{embed}-3)\times d_\mathrm{embed}}$ (all but the last three rows in $Q$) and $K^{\mathrm{data}}\in \R^{(d_\mathrm{embed}-3)\times d_\mathrm{embed}}$ (all but the last three rows in $K$) satisfy $\max\{\|Q^{\mathrm{data}}\|_{\max},\|K^{\mathrm{data}}\|_{\max}\}\leq\mu$. Then one can construct a ReLU-activated attention head $A$ with weights satisfying $\|\theta_A\|_{\infty}=O(d_\mathrm{embed}^4\mu^2\ell^2U^2)$ such that 
   \[ [A(H)]_t = 
   \begin{cases} 
    \sigma(\langle Q^{\mathrm{data}}\mathbf{h}_{t_1}, K^{\mathrm{data}}\mathbf{h}_{t_2}\rangle)\mathbf{e}_i & \text{if}\text{  } t=t_1, \\
      0 & \text{otherwise}. \\
   \end{cases}
\]
\end{lemma}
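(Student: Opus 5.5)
The plan is to build a single ReLU attention head whose query and key matrices have two parts: the given data kernels, which produce the inner product $\langle Q^{\mathrm{data}}\mathbf{h}_{t_1},K^{\mathrm{data}}\mathbf{h}_{t_2}\rangle$, and a positional part, built from the rows holding $\mathcal{I}_t$ and the constant $1$. The positional part adds a large negative penalty to every score except the pair $(t_2,t_1)$. The value matrix $V$ is chosen to read off the constant row. That is, $V=\mathbf{e}_i\mathbf{e}_{d_{\mathrm{embed}}}^T$, so that $VH=\mathbf{e}_i\mathbf{1}^T$ and
\[
[A(H)]_t=\mathbf{e}_i\sum_{s}\sigma\big((KH)_s^T(QH)_t\big).
\]
With this choice, it suffices to make the score matrix $S_{s,t}$ (key $s$, query $t$) satisfy two conditions: $\sigma(S_{t_2,t_1})=\sigma(\langle Q^{\mathrm{data}}\mathbf{h}_{t_1},K^{\mathrm{data}}\mathbf{h}_{t_2}\rangle)$, and $S_{s,t}\le 0$ for every other pair $(s,t)$.

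For the positional part, I would use the fact that $\mathcal{I}_t=(\cos\frac{t\pi}{2\ell},\sin\frac{t\pi}{2\ell})$ are distinct unit vectors in the first quadrant. Hence $\langle \mathcal{I}_s,\mathcal{I}_{t_2}\rangle=\cos\frac{(s-t_2)\pi}{2\ell}$ equals $1$ only at $s=t_2$, and is at most $\cos\frac{\pi}{2\ell}\le 1-c/\ell^2$ otherwise. The same holds for $t$ against $t_1$. I would place into the last three rows of $K$ and $Q$ entries producing the score
\[
S_{s,t}=\langle Q^{\mathrm{data}}\mathbf{h}_t,K^{\mathrm{data}}\mathbf{h}_s\rangle+M\big(\langle\mathcal{I}_s,\mathcal{I}_{t_2}\rangle-1\big)+M\big(\langle\mathcal{I}_t,\mathcal{I}_{t_1}\rangle-1\big).
\]
This is realizable within a bilinear form $(K\mathbf{h}_s)^T(Q\mathbf{h}_t)$. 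The term depending only on $s$ pairs the key entries $(\mathcal{I}_s,1)$ against the constant $1$ in the query. The term depending only on $t$ pairs the constant $1$ in the key against $(\mathcal{I}_t,1)$ in the query. Only a few extra rows are needed, and they fit in the last three rows because the constant row serves both roles. For instance, one key row carries $M\langle\mathcal{I}_s,\mathcal{I}_{t_2}\rangle-2M$ together with a query row of $1$, and a second row pair carries $1$ against $M\langle \mathcal{I}_t,\mathcal{I}_{t_1}\rangle$; I will check that the row count works out. At $(t_2,t_1)$ both penalties vanish, giving exactly the desired score.

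For any other pair, at least one penalty is at most $-Mc/\ell^2$. Meanwhile the data term is bounded by $d_{\mathrm{embed}}^2\mu^2U^2$ via Cauchy--Schwarz with max norms. Choosing $M\asymp d_{\mathrm{embed}}^2\mu^2\ell^2U^2$ therefore forces $S_{s,t}\le0$, so $\sigma$ kills those terms. Absorbing the data kernels into the same matrices keeps entries at most $\max(\mu,M)$. After accounting for how the factors split between $Q$ and $K$ and for the extra $d_{\mathrm{embed}}$ factors coming from norms, this gives the stated $O(d_{\mathrm{embed}}^4\mu^2\ell^2U^2)$ weight bound.

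The main obstacle is the bookkeeping needed to fit both positional penalties into the three reserved rows while keeping everything bilinear. The score must be exactly unchanged at $(t_2,t_1)$, and the resulting gap estimate must yield the stated power of $d_{\mathrm{embed}}$ and $\ell$. I would first try putting the whole penalty on the key side, paired with the query's constant $1$, and add the query-side penalty by pairing the key's constant $1$ with the query's $\mathcal{I}_t$. Output columns $t\ne t_1$ vanish because every score in them is nonpositive.
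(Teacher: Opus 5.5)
Your construction is correct and is essentially the standard argument behind this lemma, from the reference the paper defers to (Shen et al., following Havrilla and Liao). You append positional rows to the given data kernels so that the score picks up $M(\langle\mathcal{I}_s,\mathcal{I}_{t_2}\rangle-1)+M(\langle\mathcal{I}_t,\mathcal{I}_{t_1}\rangle-1)$, which vanishes only at $(t_2,t_1)$ and is at most $-M/(2\ell^2)$ elsewhere, and you take $V=\mathbf{e}_i\mathbf{e}_{d_{\mathrm{embed}}}^T$; your two-row realization does fit in the three reserved rows. One small slip: the data term is bounded by $(d_{\mathrm{embed}}-3)(d_{\mathrm{embed}}\mu U)^2=O(d_{\mathrm{embed}}^3\mu^2U^2)$, not $d_{\mathrm{embed}}^2\mu^2U^2$, so $M=O(d_{\mathrm{embed}}^3\mu^2\ell^2U^2)$, which still lies within the claimed $O(d_{\mathrm{embed}}^4\mu^2\ell^2U^2)$ weight bound.
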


In other words, specifying $Q^\mathrm{data}$ and $K^\mathrm{data}$ properly allows us to select and \emph{interact} any pair of entries in the matrix (via inner product), adding the result to an entry of our choosing. Furthermore, we may scale this output by any real number by scaling the parameter $V$ correspondingly. To retain negative outputs through ReLU, we may incorporate a large enough constant shift in the interaction, then subtract off this constant in the subsequent feedforward component via the following lemma. 

\begin{lemma}[Decrementing Lemma] \label{lem:decrementing}
Let $d_{\mathrm{embed}}\geq 5$, and let $H=[\mathbf{h}_t]_{1\leq t\leq\ell}\in\mathbb{R}^{d_{\mathrm{embed}}\times\ell}$ be a matrix such that $\mathbf{h}_t^{(d_{\mathrm{embed}}-2)\colon (d_{\mathrm{embed}}-1)}=\mathcal{I}_t$ and $h^{d_{\mathrm{embed}}}_t=1$ for all $1\leq t\leq \ell$. Then for any $r_1,r_2$ with  $1\leq r_1\leq r_2\leq d_{\mathrm{embed}}-3$, any $k_1,k_2$ with $1\leq k_1\leq k_2\leq \ell$, and any $M>0$, there exists a 6-layer residual feedforward network $({\mathrm{FFN}})$ such that
\[ \mathrm{FFN}(\mathbf{h}_t) + \mathbf{h}_t = 
   \begin{cases} 
    \mathbf{h}_t & \text{if}\text{  } t\in\{1,\cdots,k_1\}\cup\{k_2,\cdots,\ell\} \\
      \begin{bmatrix}
      (\mathbf{h}_t)_{1} \\
          \vdots \\
          (\mathbf{h}_t)_{r_1-1} \\
          (\mathbf{h}_t)_{r_1}-M \\
          \vdots \\
          (\mathbf{h}_t)_{r_2}-M \\
          (\mathbf{h}_t)_{r_2+1} \\
          \vdots \\
          (\mathbf{h}_t)_{d_{\mathrm{embed}}-3} \\
          \mathcal{I}_t \\
          1
      \end{bmatrix} & \text{otherwise}\text{ }  \\
   \end{cases}
\]
Additionally, the weights satisfy $\|\theta_{\mathrm{FFN}}\|_{\infty}\leq O(\ell M)$.
\end{lemma}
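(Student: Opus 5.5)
The plan is to build a columnwise ReLU network that computes, from the positional-encoding coordinates $\mathcal{I}_t$ and the constant row of $\mathbf{h}_t$, the indicator $\chi(t)=\mathds{1}\{k_1<t<k_2\}$. Its last linear layer then writes $-M\chi(t)$ into rows $r_1,\ldots,r_2$ and zeros everywhere else. Through the residual connection this gives exactly the claimed output. Columns outside $(k_1,k_2)$ get zero added. Rows $d_{\mathrm{embed}}-2,\ldots,d_{\mathrm{embed}}$, which hold $\mathcal{I}_t$ and the constant $1$, are never written to, so the hypotheses of the Interaction Lemma \labelcref{lem:interaction} still hold afterwards. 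The network never reads the data rows $1,\ldots,d_{\mathrm{embed}}-3$, so $\|H\|_{\max}$ plays no role.

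\textbf{Step 1: a linear threshold in $t$.} Write $\theta_t=\frac{t\pi}{2\ell}\in(0,\pi/2]$, so $\mathcal{I}_t=(\cos\theta_t,\sin\theta_t)^T$. For an integer threshold $k$, set $\beta_k=\theta_{k-1/2}$ and $\mathbf{n}_k=(-\sin\beta_k,\cos\beta_k)^T$. Then
\[
g_k(t)\coloneqq\langle \mathbf{n}_k,\mathcal{I}_t\rangle=\sin(\theta_t-\beta_k),
\]
which is an affine function of the column $\mathbf{h}_t$. Since $|\theta_t-\beta_k|\le\pi/2$, the sine is monotone on this range. Hence $g_k(t)\ge\sin(\pi/(4\ell))$ for $t\ge k$ and $g_k(t)\le-\sin(\pi/(4\ell))$ for $t<k$. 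The concavity bound $\sin x\ge 2x/\pi$ on $[0,\pi/2]$ gives a uniform gap $\sin(\pi/(4\ell))\ge 1/(2\ell)$.

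\textbf{Step 2: exact step functions and the indicator.} Take $s_k(t)=\mathrm{ReLU}(2\ell g_k(t)+1)-\mathrm{ReLU}(2\ell g_k(t))$. By Step 1, $s_k(t)=1$ for $t\ge k$ and $s_k(t)=0$ for $t<k$, exactly and not approximately. The constant shift $+1$ uses the row of ones. Then $\chi(t)=s_{k_1+1}(t)-s_{k_2}(t)$. The degenerate cases $k_2\le k_1+1$, where $\chi\equiv0$, are handled by setting the network to zero.

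\textbf{Step 3: assemble the network.} The first layer computes the four ReLU units above, with weights of size $O(\ell)$. A linear readout then multiplies the combination $\chi(t)$ by $-M$ and places it in rows $r_1,\ldots,r_2$. This output layer has weights $O(M)$. Alternatively, the factor $\ell$ and the factor $M$ can be merged so that every weight is $O(\ell M)$, matching the stated bound. The construction needs only two layers. Pad to the stated six layers with identity-preserving layers: pass each nonnegative quantity through $\mathrm{ReLU}$ unchanged, or represent a signed quantity as $\mathrm{ReLU}(z)-\mathrm{ReLU}(-z)$. This keeps the weight bound.

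\textbf{Main obstacle.} The only delicate point is exactness with polynomially bounded weights. The naive approach thresholds $\cos\theta_t$ alone. Consecutive values $\cos\theta_t$ are only $\Theta(t/\ell^2)$ apart for small $t$, which would force weights of order $\ell^2$. Using the rotated direction $\mathbf{n}_k$, which combines both sinusoidal coordinates, is what gives an $\Omega(1/\ell)$ margin uniformly in $k$. The step functions then have slope $O(\ell)$ and are exact at every integer column index.
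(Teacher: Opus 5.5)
Your construction is correct. The paper does not prove this lemma itself; it states it in the appendix and defers to \citep{Shen2}, so there is no in-paper argument to compare against line by line.

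Your route is leaner than the six-layer network the statement advertises, and fully explicit. One hidden layer of four ReLU units, reading only $\mathcal{I}_t$ and the constant row, yields the exact indicator $\mathds{1}\{k_1<t<k_2\}$. The rotated functional $\langle\mathbf{n}_k,\mathcal{I}_t\rangle=\sin(\theta_t-\beta_k)$ is exactly what makes a slope of $2\ell$ suffice at every threshold. This also explains the factor $\ell$ in the weight bound: thresholding a single coordinate of $\mathcal{I}_t$ would cost order $\ell^2$ at one end of the index range, as you note. The remaining checks are all fine: $|\theta_t-\beta_k|<\pi/2$ for all $k\in\{1,\ldots,\ell+1\}$, the margin $\sin(\pi/(4\ell))\ge 1/(2\ell)$, and the separate treatment of $k_2\le k_1+1$. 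That last step is genuinely needed when $k_1=k_2$, where $s_{k_1+1}-s_{k_1}$ would add $+M$ at column $k_1$.

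Two small remarks.

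First, your weights are bounded by $\max(2\ell,M)$, or by $\max(2\ell M,1)$ after moving $M$ into the first layer. The first is $O(\ell M)$ for $M\ge 1$ and the second for $M\ge 1/(2\ell)$. Since the output weight does not shrink with $M$, neither bound is $O(\ell M)$ as $M\to 0$, so your ``merging'' sentence should state that restriction. It does cover the regime in which the lemma is used, where $M$ is a large positivity shift.

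Second, you matched the literal statement, whose modified columns are $k_1<t<k_2$. Under $1\le k_1\le k_2\le\ell$ that convention cannot express ``all columns,'' which is how the paper actually applies the lemma. Your construction handles that case trivially: take $\chi\equiv 1$ from the constant row, or $s_{k_1}-s_{k_2+1}$ for a closed range.
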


We also include the statement of the matrix Bernstein inequality \citep{Tropp}, useful in analyzing the gap between empirical and population covariance.

\begin{prop}[Matrix Bernstein]\label{thm:bernstein}

Let $S_1, \ldots, S_n$ be independent, centered random matrices with common dimension $d_1 \times d_2$, and assume that they are uniformly bounded:

$$
\mathbb{E} [S_k]=0 \text { and }\left\|S_k\right\| \leq L \text { for each } k=1, \ldots, n .
$$

Introduce the random matrix

$$
Y=\sum_{k=1}^n S_k
$$

and let $v(Y)$ denote the matrix variance statistic of the sum:

$$
\begin{aligned}
v(Y) & =\max \left\{\left\|\mathbb{E}\left[Y Y^*\right]\right\|,\left\|\mathbb{E}\left[Y^* Y\right]\right\|\right\} \\
& =\max \left\{\left\|\sum_{k=1}^n \mathbb{E}\left[S_k S_k^*\right]\right\|,\left\|\sum_{k=1}^n \mathbb{E}\left[S_k^* S_k\right]\right\|\right\} .
\end{aligned}
$$

Then

$$
\mathbb{E}\|Y\| \leq \sqrt{2 v(Y) \log \left(d_1+d_2\right)}+\frac{1}{3} L \log \left(d_1+d_2\right).
$$

Furthermore, for all $t\geq 0$, 

$$
\mathbb{P}\{\|Y\| \geq t\} \leq\left(d_1+d_2\right) \cdot \exp \left(\frac{-t^2 / 2}{v(Y)+L t / 3}\right).
$$

\end{prop}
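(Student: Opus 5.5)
The plan is to follow Tropp's matrix Laplace transform method. First reduce to the Hermitian case via the Hermitian dilation
\[
\mathscr{H}(S)=\begin{bmatrix} 0 & S\\ S^* & 0\end{bmatrix}\in\R^{(d_1+d_2)\times(d_1+d_2)}.
\]
This map is linear, satisfies $\|\mathscr{H}(S)\|=\|S\|$ and $\lambda_{\max}(\mathscr{H}(S))=\|S\|$, and has
\[
\mathscr{H}(S)^2=\mathrm{diag}(SS^*,S^*S).
\]
So the $\mathscr{H}(S_k)$ are independent, centered, Hermitian, and bounded by $L$. Moreover $\|\sum_k\E\mathscr{H}(S_k)^2\|=v(Y)$, and $\|Y\|=\lambda_{\max}(\sum_k\mathscr{H}(S_k))$ in dimension $d\coloneqq d_1+d_2$. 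It therefore suffices to prove both bounds for $\lambda_{\max}$ of a sum of independent centered Hermitian $d\times d$ matrices $X_k$ with $\|X_k\|\le L$ and $\|\sum_k\E X_k^2\|\le v$.

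Next comes the matrix Laplace transform bound: for $\theta>0$,
\[
\mathbb{P}\{\lambda_{\max}(Y)\ge t\}\le \inf_{\theta>0}e^{-\theta t}\,\E\,\mathrm{tr}\,e^{\theta Y},\qquad \E\lambda_{\max}(Y)\le\inf_{\theta>0}\tfrac1\theta\log\E\,\mathrm{tr}\,e^{\theta Y}.
\]
These follow from Markov's inequality together with $e^{\lambda_{\max}(A)}\le\mathrm{tr}\,e^{A}$, and for the expectation bound also from Jensen's inequality. The key step, which I expect to be the main obstacle, is controlling $\E\,\mathrm{tr}\exp(\sum_k\theta X_k)$ despite noncommutativity: the scalar identity $\E e^{\sum}=\prod\E e^{X_k}$ fails. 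I would invoke Lieb's concavity theorem, namely that $A\mapsto\mathrm{tr}\exp(H+\log A)$ is concave on positive definite matrices. Applying it iteratively with Jensen's inequality, conditioning on all but one summand at a time, gives the subadditivity of matrix cumulant generating functions:
\[
\E\,\mathrm{tr}\exp\Big(\sum_k\theta X_k\Big)\le\mathrm{tr}\exp\Big(\sum_k\log\E e^{\theta X_k}\Big).
\]

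Then bound each mgf. Let $g(\theta)=(e^{\theta L}-\theta L-1)/L^2$. Since $x\mapsto(e^{\theta x}-\theta x-1)/x^2$ is increasing, the transfer rule gives $e^{\theta X}\preceq I+\theta X+g(\theta)X^2$ when $\|X\|\le L$. Taking expectations and using $\E X_k=0$, then applying operator monotonicity of $\log$ and $\log(1+u)\le u$, we get $\log\E e^{\theta X_k}\preceq g(\theta)\E X_k^2$. Monotonicity of $\mathrm{tr}\exp$ then yields
\[
\E\,\mathrm{tr}\,e^{\theta Y}\le d\,\exp(g(\theta)v).
\]
Finally optimize over $\theta$. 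For $\theta<3/L$ use $g(\theta)\le\frac{\theta^2/2}{1-\theta L/3}$. Choosing $\theta=t/(v+Lt/3)$ in the tail bound gives $d\exp\big(\frac{-t^2/2}{v+Lt/3}\big)$. In the expectation bound,
\[
\tfrac1\theta\log d+\frac{\theta v/2}{1-\theta L/3}
\]
is minimized (up to the stated constants) at an explicit $\theta$, giving $\sqrt{2v\log d}+\frac13L\log d$.
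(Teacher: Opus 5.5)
The paper gives no proof of this proposition and simply quotes it from \citep{Tropp}; your argument (Hermitian dilation, the matrix Laplace transform bound, Lieb's theorem giving subadditivity of matrix cumulant generating functions, the Bernstein bound $\log\E e^{\theta X_k}\preceq g(\theta)\E X_k^2$, then optimizing over $\theta$) is the standard proof from that source, and it is correct. In the expectation step you can drop ``up to the stated constants'': the choice $\theta=\bigl(L/3+\sqrt{v/(2\log(d_1+d_2))}\bigr)^{-1}<3/L$ (for $v>0$; $v=0$ forces $Y=0$ almost surely) makes $\frac{1}{\theta}\log(d_1+d_2)+\frac{\theta v/2}{1-\theta L/3}$ equal exactly $\sqrt{2v\log(d_1+d_2)}+\frac13 L\log(d_1+d_2)$.
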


\section{Supplementary Materials for \cref{sec:generror}}

\subsection{Summary of Constants and Variables in \cref{thm:generror}}
\begin{table}[h]
    \centering
    \caption{Summary of Notation and Bounds}
    \label{tab:notation}
    \begin{tabular}{cl}
        \toprule
        \textbf{Symbol} & \textbf{Description} \\
        \midrule
        $R$ & Upper bound on inputs; $X\subseteq[-R, R]$ \\
        $d$ & Degree of polynomial regression \\
        $R_{\mathbf{v}}$ & Upper bound on $\mathbf{v}(x)$ \\
        $U$ & Combined bound $U=\max(R_{\mathbf{v}}, R_F)$, where $R_F$ is upper bound on $f(x)$ \\
        $\tau$ & Upper bound on the uncentered covariance inverse $\|\Sigma^{-1}\|$ \\
        $\delta$ & $L^\infty$-norm approximation error of $f$ by a degree $\leq d$ polynomial \\
        $\alpha$ & Upper bound on the polynomial coefficients \\
        $n$ & Context length \\
        $L$ & Size of training set $\Gamma$ \\
        \bottomrule
    \end{tabular}
\end{table}

\subsection{Proof of \cref{lem:approxerror}}\label{sec:approxerrorproof}
By \cref{ass:polyapproximability}, for any $f\in F(\delta,\alpha,d,R_F)$ there exists a degree $\leq d$ polynomial $p_{f}(x)$ with coefficients satisfying $\|\mathbf{a}\|\leq \alpha$ such that 
\[\|f-p_{f}\|_{L^\infty(X)}\leq \delta.\]

The approximation error decomposes as
\begin{align*}
    \RR(G_{\tilde{\theta}}) &= \E_{\mathbf{s}^f}|G_{\tilde{\theta}}(\mathbf{s}^f)-f(x)|^2 \\
    &= \E_{\mathbf{s}^f}|G_{\tilde{\theta}}(\mathbf{s}^f)-p_{f}(x)+p_{f}(x)-f(x)|^2 \\
    &\leq 2\E_{\mathbf{s}^f}|G_{\tilde{\theta}}(\mathbf{s}^f)-p_{f}(x)|^2  +2\E_{x\sim \D_X, f\sim \D_{F}} |p_{f}(x)-f(x)|^2 \\
    &\leq 2\E_{\mathbf{s}^f}|G_{\tilde{\theta}}(\mathbf{s}^f)-p_{f}(x)|^2 + 2\delta^2.
\end{align*}  

It remains to bound the first term. We first explicitly construct a transformer network $G_{\tilde{\theta}}$ to approximate $p_f$, and show how it transforms an arbitrary input $\mathbf{s}^f=(x_1,y_1,\ldots, x_n,y_n,x)$ where $y_i=f(x_i)$. For notational simplicity, we may sometimes extend the indexing to the query $x$ as $x_{n+1}$. 

We begin by embedding our input as 
\[\mathrm{PE+E}(\mathbf{s}^f)=\begin{bmatrix}
    x_1 & x_2 & x_3 & \cdots & x_n & x \\
    0 & 0 & 0 & \cdots & 0 & 0 \\
    \vdots &  &  & \ddots & & \vdots \\
    0 & 0 & 0 & \cdots & 0 & 0 \\
    y_1 & y_2 & y_3 & \cdots & y_n & 0 \\
    0 & 0 & 0 & \cdots & 0 & 0 \\
    \mathcal{I}_1 & \mathcal{I}_2 & \mathcal{I}_3 & \cdots & \mathcal{I}_n & \mathcal{I}_{n+1} \\
    1 & 1 & 1 & \cdots & 1 & 1
\end{bmatrix} \in \R^{d_\mathrm{embed}\times (n+1)}.\]

We will refer to columns (tokens) of similar such matrices as $\mathbf{h}_i$. This particular construction necessitates $d_\mathrm{embed}=d+7$, but we retain the general notation $d_\mathrm{embed}$ throughout the proof, which makes it clearer how the result changes for other settings. 

Our next goal is to repeatedly apply the Interaction Lemma and the Decrementing Lemma (\cref{lem:interaction,lem:decrementing}) to construct a composition of transformer blocks with ReLU attention that ultimately outputs the matrix

\[
\begin{bmatrix}
    x_1 & x_2 & x_3 & \cdots & x_n & x \\
    1 & 1 & 1 & \cdots & 1 & 1 \\
    x_1 & x_2 & x_3 & \cdots & x_n & x \\
    x_1^2 & x_2^2 & x_3^2 & \cdots & x_n^2 & x^2\\
    \vdots &  &  & \ddots & & \vdots \\
    x_1^d & x_2^d & x_3^d & \cdots & x_n^d & x^d \\
    y_1 & y_2 & y_3 & \cdots & y_n & 0 \\
    0 & 0 & 0 & \cdots & 0 & 0 \\
    \mathcal{I}_1 & \mathcal{I}_2 & \mathcal{I}_3 & \cdots & \mathcal{I}_n & \mathcal{I}_{n+1} \\
    1 & 1 & 1 & \cdots & 1 & 1
\end{bmatrix}.\]

The first step is to complete the degree $0$ and $1$ rows by copying entries from the bottom bias row and the top input row respectively, where each head uses an Interaction Lemma construction for constant multiplication by $1$. We denote by $\mathbf{e}_i$ the $i$-th standard basis vector $(0,\ldots, 0, 1, 0, \ldots, 0)^T$. To copy the bias row, we define attention heads $A_i$ for $1\leq i\leq n+1$ by choosing $V_i=\mathbf{e}_2\mathbf{e}_{d_\mathrm{embed}}^T$ and data kernels

\begin{equation*}
Q_i^{\mathrm{data}} = 
\left[\begin{array}{ccccc|ccc} 
    0 & & & & & 0 & 0 & 1 \\ 
    & 0 & & & & 0 & 0 & 0 \\
    & & \ddots & & & \vdots & \vdots & \vdots \\
    & & & 0 & & 0 & 0 & 0 \\
    & & & & 0 & 0 & 0 & 0
\end{array}\right] 
\quad\quad\quad 
K_i^{\mathrm{data}} = 
\left[\begin{array}{ccccc|ccc} 
    0 & & & & & 0 & 0 & 1 \\ 
    & 0 & & & & 0 & 0 & 0 \\
    & & \ddots & & & \vdots & \vdots & \vdots \\
    & & & 0 & & 0 & 0 & 0 \\
    & & & & 0 & 0 & 0 & 0
\end{array}\right]  
\end{equation*}

in $Q^{\mathrm{data}}_i,K^{\mathrm{data}}_i\in\mathbb{R}^{(d_\mathrm{embed}-3)\times d_\mathrm{embed}}$. By \cref{lem:interaction} there exist attention heads $A_i$ for $1\leq i\leq n+1$ satisfying 
\[A_i(\mathbf{h}_i)=\sigma(\langle Q^{\mathrm{data}}_i\mathbf{h}_i, K^{\mathrm{data}}_i\mathbf{h}_i\rangle) V_i\mathbf{h}_i = \sigma(1\cdot 1)\mathbf{e}_2 = \mathbf{e}_2\]
and $A_i(\mathbf{h}_j)=0$ when $j\neq i$. A similar method works for copying the $x_i$ row: for $1\leq i'\leq n+1$, let $V_{i'}=\mathbf{e}_3\mathbf{e}_{d_\mathrm{embed}}^T$ and select data kernels

\begin{equation*}
Q_{i'}^{\mathrm{data}} = 
\left[\begin{array}{ccccc|ccc} 
    0 & & & & & 0 & 0 & 1 \\ 
    & 0 & & & & 0 & 0 & 0 \\
    & & \ddots & & & \vdots & \vdots & \vdots \\
    & & & 0 & & 0 & 0 & 0 \\
    & & & & 0 & 0 & 0 & 1
\end{array}\right] 
\quad\quad\quad 
K_{i'}^{\mathrm{data}} = 
\left[\begin{array}{ccccc|ccc} 
    1 & & & & & 0 & 0 & 0 \\ 
    & 0 & & & & 0 & 0 & 0 \\
    & & \ddots & & & \vdots & \vdots & \vdots \\
    & & & 0 & & 0 & 0 & 0 \\
    & & & & 0 & 0 & 0 & M
\end{array}\right]  
\end{equation*}

where $Q_{i'}^{\mathrm{data}}$ has a 1 in index $(1,d_\mathrm{embed})$ and the bottom right corner, and $K_{i'}^{\mathrm{data}}$ has a $1$ in the top left corner and an $M$ in the bottom right corner.  \cref{lem:interaction} yields attention heads $A_{i'}$ for $1\leq i'\leq n+1$ such that 
\[A_{i'}(\mathbf{h}_{i'})=\sigma(\langle Q^{\mathrm{data}}_{i'}\mathbf{h}_{i'}, K^{\mathrm{data}}_{i'}\mathbf{h}_{i'}\rangle) V_{i'}\mathbf{h}_{i'} = \sigma(x_{i'}+M)\mathbf{e}_3 = (x_{i'}+M)\mathbf{e}_3\]
and $A_{i'}(\mathbf{h}_j)=0$ when $j\neq i'$. Note $M$ is chosen large enough to preserve negative values through the ReLU, so in this case we require $x_i+M\geq 0$ (we may choose some uniform $M$ sufficient for all future steps, which does not affect the final weight bound). The residual MHA summed over all heads $A_i$ and $A_{i'}$ returns the second and third rows completed, with an additional $+M$ on every entry of the third row. By \cref{lem:decrementing}, we can subtract this off with the subsequent FFN component, which completes the first transformer block $\mathrm{B}_0\in\mathcal{B}(n+1,6,d_\mathrm{embed})$:

\[\mathrm{B}_0(\mathrm{PE+E}(\mathbf{s}^f))=H\coloneqq \begin{bmatrix}
    x_1 & x_2 & x_3 & \cdots & x_n & x \\
    1 & 1 & 1 & \cdots & 1 & 1 \\
    x_1 & x_2 & x_3 & \cdots & x_n & x \\
    0 & 0 & 0 & \cdots & 0 & 0 \\
    \vdots &  &  & \ddots & & \vdots \\
    0 & 0 & 0 & \cdots & 0 & 0 \\
    y_1 & y_2 & y_3 & \cdots & y_n & 0 \\
    0 & 0 & 0 & \cdots & 0 & 0 \\
    \mathcal{I}_1 & \mathcal{I}_2 & \mathcal{I}_3 & \cdots & \mathcal{I}_n & \mathcal{I}_{n+1} \\
    1 & 1 & 1 & \cdots & 1 & 1
\end{bmatrix}.\]

We then construct an attention head which allows us to multiply two elements and add the result to a chosen entry of the matrix, which we extend to complete the $x_i^2$ row from the $x_i$ row. For full generality, we assume $x_i^2$ could be negative, as we want the construction to work for the multiplication of \emph{any} two elements. 

Again, let us define each attention head $A_i$ for $1\leq i\leq n+1$. Let $V_i=\mathbf{e}_4\mathbf{e}_{d_{\mathrm{embed}}}^T$ and choose data kernels of the form 

\begin{equation*}
Q^{\mathrm{data}}_i = 
\left[\begin{array}{cccccccc|ccc} 
    0 & & & & & & & &  0 & 0 & 0 \\ 
    & 0 & & & & & & & \vdots & \vdots & \vdots \\
    & & 1 & & & & & & 0 & 0 & 0 \\ 
    & & & 0 & & & & & 0 & 0 & 0 \\
	& & & & \ddots & & & & 0 & 0 & 0 \\ 
    & & & & & 0 & & & \vdots & \vdots & \vdots\\
    & & & & & & 0 & & 0 & 0 & 0 \\
    & & & & & & & 0 & 0 & 0 & 1
\end{array}\right] 
\quad
K^{\mathrm{data}}_i = 
\left[\begin{array}{cccccccc|ccc} 
    0 & & & & & & & &  0 & 0 & 0 \\ 
    & 0 & & & & & & & \vdots & \vdots & \vdots \\
    & & 1 & & & & & & 0 & 0 & 0 \\ 
    & & & 0 & & & & & 0 & 0 & 0 \\
	& & & & \ddots & & & & 0 & 0 & 0 \\ 
    & & & & & 0 & & & \vdots & \vdots & \vdots\\
    & & & & & & 0 & & 0 & 0 & 0 \\
    & & & & & & & 0 & 0 & 0 & M
\end{array}\right] 
\end{equation*}
where $Q^{\mathrm{data}}_i,K^{\mathrm{data}}_i\in\mathbb{R}^{(d_\mathrm{embed}-3)\times d_\mathrm{embed}}$. Both $Q^{\mathrm{data}}_i$ and $K^{\mathrm{data}}_i$ have a 1 in the $(3,3)$ entry; $Q^{\mathrm{data}}_i$ has the bottom right entry equal to 1 and $K^{\mathrm{data}}_i$ has a nonzero positive constant $M$ in the bottom right corner. All other entries are zero. By \cref{lem:interaction}, we can construct attention heads $A_i$ for all $1\leq i\leq n+1$ such that $\mathbf{h}_i$ only interacts with itself, i.e., 

\begin{equation*}
    A_i(\mathbf{h}_i) = \sigma(\langle Q^{\mathrm{data}}_i\mathbf{h}_i, K^{\mathrm{data}}_i\mathbf{h}_i\rangle) V_i\mathbf{h}_i = \sigma(x^2_{i}+M)\mathbf{e}_4 = (x_i^2+M)\mathbf{e}_4
\end{equation*}
and $A_i(\mathbf{h}_j)=0$ when $j\neq i$. Thus the $i$-th attention head $A_i$ returns

\begin{equation*}
    {\mathrm A_i}(H)+H = \begin{bmatrix}
    x_1 & \cdots & x_{i-1} & x_i & x_{i+1} & \cdots & x_n & x \\
    1 & \cdots & 1 & 1 & 1 & \cdots & 1 & 1 \\
    x_1 & \cdots & x_{i-1} & x_i & x_{i+1} & \cdots & x_n & x \\
    0 & \cdots & 0 & x_i^2+M & 0 & \cdots & 0 & 0 \\
    0 & \cdots & 0 & 0 & 0 & \cdots & 0 & 0 \\
    \vdots &  &  & \ddots & & & & \vdots \\
    0 & \cdots & 0 & 0 & 0 & \cdots & 0 & 0 \\
    y_1 & \cdots & y_{i-1} & y_i & y_{i+1} & \cdots & y_n & 0 \\
    0 & \cdots & 0 & 0 & 0 & \cdots & 0 & 0 \\
    \mathcal{I}_1 & \cdots & \mathcal{I}_{i-1} & \mathcal{I}_i & \mathcal{I}_{i+1} & \cdots & \mathcal{I}_n & \mathcal{I}_{n+1} \\
    1 & \cdots & 1 & 1 & 1 & \cdots & 1 & 1
\end{bmatrix}.
\end{equation*}

Notice the nonzero indices in the left part of $Q^\mathrm{data}$ and $K^\mathrm{data}$ are used to select the relevant rows of $H$, the columns are selected with the Interaction Lemma, and the choice of $V$ determines which row the final result is put in. The residual multi-head attention yields 

\begin{equation*}
    \mathrm{MHA}(H)+H = \begin{bmatrix}
    x_1 & x_2 & x_3 & \cdots & x_n & x \\
    1 & 1 & 1 & \cdots & 1 & 1 \\
    x_1 & x_2 & x_3 & \cdots & x_n & x \\
    x_1^2+M & x_2^2+M & x_3^2+M & \cdots & x_n^2+M & x^2+M \\
    0 & 0 & 0 & \cdots & 0 & 0 \\
    \vdots &  &  & \ddots & & \vdots \\
    0 & 0 & 0 & \cdots & 0 & 0 \\
    y_1 & y_2 & y_3 & \cdots & y_n & 0 \\
    0 & 0 & 0 & \cdots & 0 & 0 \\
    \mathcal{I}_1 & \mathcal{I}_2 & \mathcal{I}_3 & \cdots & \mathcal{I}_n & \mathcal{I}_{n+1} \\
    1 & 1 & 1 & \cdots & 1 & 1
\end{bmatrix}.
\end{equation*}

By \cref{lem:decrementing}, we can subtract off the additional constant with a FFN. Therefore, there exists a transformer block ${\mathrm B}_1\in\mathcal{B}(n+1,6,d_\mathrm{embed})$ which produces

\begin{equation*}
    {\mathrm B}_1(H) = \begin{bmatrix}
    x_1 & x_2 & x_3 & \cdots & x_n & x \\
    1 & 1 & 1 & \cdots & 1 & 1 \\
    x_1 & x_2 & x_3 & \cdots & x_n & x \\
    x_1^2 & x_2^2 & x_3^2 & \cdots & x_n^2 & x^2 \\
    0 & 0 & 0 & \cdots & 0 & 0 \\
    \vdots &  &  & \ddots & & \vdots \\
    0 & 0 & 0 & \cdots & 0 & 0 \\
    y_1 & y_2 & y_3 & \cdots & y_n & 0 \\
    0 & 0 & 0 & \cdots & 0 & 0 \\
    \mathcal{I}_1 & \mathcal{I}_2 & \mathcal{I}_3 & \cdots & \mathcal{I}_n & \mathcal{I}_{n+1} \\
    1 & 1 & 1 & \cdots & 1 & 1
\end{bmatrix}.
\end{equation*}

The most straightforward way to proceed here sequentially fills the matrix by chaining similar transformer blocks, which would use a total of $d-1$ blocks. We give a shallower construction, using $O(\log d)$ total blocks. We still require the same number of total attention heads, but we do more computations in parallel, with the later layers being wider. It suffices to work with the case where $d=2^D$ for some $D\in\N$. Our second layer will construct both the third power and the fourth power row by multiplying $x_i^2$ with $x_i$ to obtain $x_i^3$, and with itself to obtain $x_i^4$. We repeat this: in the $j$-th layer, we multiply the $x_i^{2^j}$ term by the different $x_i^k$ for $1 \leq k \leq 2^j$, which completes up to the $2^{j+1}$-th power. Thus after the $j$-th layer we are left with the matrix

\begin{equation*}
    {\mathrm B}_{j}\circ\cdots\circ {\mathrm B}_1(H) = \begin{bmatrix}
    x_1 & x_2 & x_3 & \cdots & x_n & x \\
    1 & 1 & 1 & \cdots & 1 & 1 \\
    x_1 & x_2 & x_3 & \cdots & x_n & x \\
    x_1^2 & x_2^2 & x_3^2 & \cdots & x_n^2 & x^2 \\
    \vdots &  &  & \ddots & & \vdots \\
    x_1^{2^{j+1}} & x_2^{2^{j+1}} & x_3^{2^{j+1}} & \cdots & x_n^{2^{j+1}} & x^{2^{j+1}} \\
    0 & 0 & 0 & \cdots & 0 & 0 \\
    \vdots &  &  & \ddots & & \vdots \\
    0 & 0 & 0 & \cdots & 0 & 0 \\
    y_1 & y_2 & y_3 & \cdots & y_n & 0 \\
    0 & 0 & 0 & \cdots & 0 & 0 \\
    \mathcal{I}_1 & \mathcal{I}_2 & \mathcal{I}_3 & \cdots & \mathcal{I}_n & \mathcal{I}_{n+1} \\
    1 & 1 & 1 & \cdots & 1 & 1
\end{bmatrix},
\end{equation*}

Note that the $j$-th layer consists of a total of $2^j\cdot (n+1)$ attention heads. With $d=2^D$ we require $D-1$ total layers, thus for arbitrary $d$ this construction takes $O(\log d)$ layers. Furthermore, the maximum number of attention heads in a block is $O(dn)$ heads (achieved in the $(\log d-1)$-th block). Altogether, we can construct ${\mathrm B}_1,{\mathrm B}_2, \cdots, {\mathrm B}_{O(\log d)}$ such that 

\begin{equation*}
    Z\coloneqq {\mathrm B}_{O(\log d)}\circ\cdots\circ {\mathrm B}_1(H) = \begin{bmatrix}
    x_1 & x_2 & x_3 & \cdots & x_n & x \\
    1 & 1 & 1 & \cdots & 1 & 1 \\
    x_1 & x_2 & x_3 & \cdots & x_n & x \\
    x_1^2 & x_2^2 & x_3^2 & \cdots & x_n^2 & x^2 \\
    \vdots &  &  & \ddots & & \vdots \\
    x_1^d & x_2^d & x_3^d & \cdots & x_n^d & x^d \\
    y_1 & y_2 & y_3 & \cdots & y_n & 0 \\
    0 & 0 & 0 & \cdots & 0 & 0 \\
    \mathcal{I}_1 & \mathcal{I}_2 & \mathcal{I}_3 & \cdots & \mathcal{I}_n & \mathcal{I}_{n+1} \\
    1 & 1 & 1 & \cdots & 1 & 1
\end{bmatrix}.
\end{equation*}

According to \cref{lem:interaction,lem:decrementing}, the parameters used thus far are bounded by $\|\theta_{\mathrm{B}_{0,\ldots O(\log d)}}\|_\infty=O(d_\mathrm{embed}^4R_\mathbf{v}^2n^2U^2)$. Again, note that we incur no error in this part. This matrix can be rewritten succinctly as 

\[
    Z= \begin{bmatrix}
    x_1 & \cdots & x_n & x \\
    \mathbf{v}(x_1) & \cdots & \mathbf{v}(x_n) & \mathbf{v}(x) \\
    y_1 & \cdots & y_n & 0 \\
    0 & \cdots & 0 & 0 \\
    \mathcal{I}_1 & \cdots & \mathcal{I}_n & \mathcal{I}_{n+1} \\
    1 & \cdots & 1 & 1
    \end{bmatrix}
\]

where $\mathbf{v}(x_i)\coloneqq [1,x_i,x_i^2,\ldots,x_i^d]^T$ are the feature vectors. We then proceed to the final transformer block, which approximates the solution to the least squares problem by a linear attention layer. The rows below the $y_i$ are not used in this next step.

Under the parameterization
\begin{equation}\label{eq:monomiallinearparams}
    V=\begin{bmatrix}
    0_{(d+2)\times (d+2)} & 0_{(d+2)\times 1} & 0_{(d+2)\times 4} \\
    0_{1\times (d+2)} & 1 & 0_{1\times 4} \\
    0_{4\times (d+2)} & 0_{4\times 1} & 0_{4\times 4}
    \end{bmatrix}\qquad \text{and} \qquad Q=\begin{bmatrix}
    0_{1\times 1} & 0_{1\times 1} & 0_{1\times 1} \\
    0_{(d+1)\times 1} & \tilde{Q} & 0_{(d+1)\times 5} \\
    0_{5\times 1} & 0_{5\times (d+1)} & 0_{5\times 5}
\end{bmatrix}
\end{equation}
in $\R^{d_\mathrm{embed}\times d_\mathrm{embed}}$ where $\tilde{Q}\in\R^{(d+1)\times (d+1)}$, a simple derivation shows that the $(d+3,n+1)$-th entry, which is read off by the subsequent decoder, becomes
\begin{equation}\label{eq:finaloutput} \left(\frac{1}{n}\sum_{i=1}^ny_i\mathbf{v}(x_i)^T\right)\tilde{Q}\mathbf{v}(x).\end{equation}
This is the final output of the entire transformer.

Recall $\mathbf{a}\in \R^{d+1}$, the coefficients of the approximating polynomial guaranteed to exist by \cref{ass:polyapproximability}. The close approximation condition is equivalent to writing $f(x)=\mathbf{a}^T\mathbf{v}(x)+m(x)$ on $X$, where $\|m\|_{L^\infty(X)}\leq \delta$. In particular, for each $1\leq i\leq n$ we can write $y_i=\mathbf{a}^T \mathbf{v}(x_i)+m_i$ where $m_i=m(x_i)$ and $|m_i|\leq \delta$. 

Substituting this assumption into \eqref{eq:finaloutput}, we get
\[\left(\frac{1}{n}\sum_{i=1}^n(\mathbf{a}^T \mathbf{v}(x_i)+m_i)\mathbf{v}(x_i)^T\right)\tilde{Q}\mathbf{v}(x)=\mathbf{a}^T\Sigma_n\tilde Q\mathbf{v}(x) + \frac{1}{n}\left(\sum_{i=1}^nm_i\mathbf{v}(x_i)^T\right)\tilde{Q}\mathbf{v}(x)\]
where $\Sigma_n\coloneqq \frac{1}{n}\sum_{i=1}^n\mathbf{v}(x_i)\mathbf{v}(x_i)^T$ denotes the \emph{empirical uncentered covariance}.
 
Expressing our loss in terms of this output gives
\begin{align}\label{eq:approxerrordecomp}
\nonumber \E_{\mathbf{s}^f}|G_{\tilde{\theta}}(\mathbf{s}^f)-p_{f}(x)|^2
&=\E_{\mathbf{s}^f}|\mathbf{a}^T\Sigma_n\tilde Q\mathbf{v}(x)+M-\mathbf{a}^T\mathbf{v}(x)|^2\\
&\leq2\E_{\mathbf{s}^f}|\mathbf{a}^T(\Sigma_n\tilde{Q}-I)\mathbf{v}(x)|^2+2\E_{\mathbf{s}^f}\bigg|\frac{1}{n}\left(\sum_{i=1}^nm_i\mathbf{v}(x_i)^T\right)\tilde{Q}\mathbf{v}(x)\bigg|^2.
\end{align}

The population form of the uncentered covariance is defined as $\Sigma\coloneqq \E[\mathbf{v}(x)\mathbf{v}(x)^T]$. We choose $\tilde{Q}=\Sigma^{-1}$, inspired by the line above since $\Sigma_n\approx \Sigma$ for large $n$. In other words, this choice of $\tilde{Q}$ makes the transformer output an approximation to the result of polynomial regression. The weights for the whole transformer can be bounded by $O(d_\mathrm{embed}^4R_\mathbf{v}^2n^2U^2+\tau)$.

Consider the second term in \eqref{eq:approxerrordecomp}, the additional error incurred from polynomial fitting that propagates through to the least squares solution, which can be easily managed by the uniform bounds on each component. By submultiplicativity and the triangle inequality, substituting in each corresponding bound, we get the following:
\begin{align*}
\E_{\mathbf{s}^f}\bigg|\frac{1}{n}\left(\sum_{i=1}^nm_i\mathbf{v}(x_i)^T\right)\Sigma^{-1}\mathbf{v}(x)\bigg|^2 &\leq \E_{\mathbf{s}^f}\left(\frac{1}{n}\|\sum_{i=1}^nm_i\mathbf{v}(x_i)^T\|\|\Sigma^{-1}\|\|\mathbf{v}(x)\|\right)^2
\\ &\leq \E_{\mathbf{s}^f}\left(\frac{1}{n}\left(\sum_{i=1}^n|m_i|\|\mathbf{v}(x_i)^T\|\right)\|\Sigma^{-1}\|\|\mathbf{v}(x)\|\right)^2
\\ & \leq (\delta R_\mathbf{v}^2\tau)^2.
\end{align*}

The first term in \eqref{eq:approxerrordecomp} becomes 
\[\E_{\mathbf{s}^f}|\mathbf{a}^T(\Sigma_n\Sigma^{-1}-I)\mathbf{v}(x)|^2.\] 
We have uniform bounds on $\mathbf{a}$ and $\mathbf{v}$. We control the size of the middle multiplicand using the matrix Bernstein inequality \cref{thm:bernstein}, which is its canonical application \citep{Tropp}. Submultiplicativity yields 
\[\|\Sigma_n\Sigma^{-1}-I\|\leq \|\Sigma^{-1}\|\|\Sigma_n-\Sigma\|.\]
Again, recall we have $\|\Sigma^{-1}\|\leq \tau$. By Jensen's inequality, we can derive a uniform spectral bound on $\Sigma$ as 
\[\|\Sigma\|=\|\E[\mathbf{v}(x)\mathbf{v}(x)^T]\|\leq \E\|\mathbf{v}(x)\mathbf{v}(x)^T\|\leq \E\|\mathbf{v}(x)\|^2\leq R_{\mathbf{v}}^2.\] 

We define the random variable 
\[Y\coloneqq \Sigma_n-\Sigma\]
as the difference between the empirical and the true uncentered covariance. 
Note that by definition of $\Sigma_n$ we can write $Y$ as the sum 
\[Y=\sum_{i=1}^n S_i \quad \text{where}\quad S_i\coloneqq\frac{1}{n}(\mathbf{v}(x_i)\mathbf{v}(x_i)^T-\Sigma).\]
These $S_i$ are i.i.d. random matrices with mean zero. Furthermore, we have the uniform bound 
\[\|S_i\|=\frac{1}{n}\|\mathbf{v}(x_i)\mathbf{v}(x_i)^T-\Sigma\|\leq \frac{1}{n}(\|\mathbf{v}(x_i)\mathbf{v}(x_i)^T\|+\|\Sigma\|)\leq \frac{2R_{\mathbf{v}}^2}{n}.\]
Since $Y$ is Hermitian, the matrix variance $\nu(Y)$ is simply given by 
\[\|\E[Y^2]\|=\left\|\sum_{i=1}^n\E[S_i^2]\right\|.\]
For each summand, direct computation shows 
\[\E[S_i^2]\preceq \frac{R_{\mathbf{v}}^2\Sigma}{n^2},\]
where $\preceq$ indicates linear matrix inequality. Summing over the individual variances yields 
\[0\preceq \sum_{i=1}^n\E[S_i^2]\preceq\frac{R_{\mathbf{v}}^2\Sigma}{n},\]
and taking spectral norm leaves
\[\nu(Y)\leq \frac{R_{\mathbf{v}}^4}{n}.\]

Finally, we may invoke the matrix Bernstein inequality \cref{thm:bernstein}:
\[\E\|Y\|\leq\sqrt{\frac{2R_{\mathbf{v}}^4\log(2d+2)}{n}}+\frac{2R_{\mathbf{v}}^2\log(2d+2)}{3n}.\]
For large $n$ the second term is lower order, so we may conclude 
\[\E\|Y\|=O\left(R_\mathbf{v}^2\sqrt{\frac{\log d} n}\right).\]
Thus
\[\E\|\Sigma_n\Sigma^{-1}-I\|\leq \|\Sigma^{-1}\|\E\|\Sigma_n-\Sigma\|=O\left(\tau R_{\mathbf{v}}^2\sqrt{\frac{\log d}{n}}\right),\]
and 
\[\E_{\mathbf{s}^f}|\mathbf{a}^T(\Sigma_n\Sigma^{-1}-I)\mathbf{v}(x)|^2\leq \E_{\mathbf{s}^f}\|\mathbf{a}^T\|^2\|\Sigma_n\Sigma^{-1}-I\|^2\|\mathbf{v}(x)\|^2=O\left(\frac{\alpha^2 \tau^2 R_\mathbf{v}^{6}\log d}{n}\right).\] 

Putting everything together, we arrive at the desired bound:
\begin{align*}
    \RR(G_{\tilde{\theta}}) &= \E_{\mathbf{s}^f}|G_{\tilde{\theta}}(\mathbf{s}^f)-f(x)|^2 \\
    &\leq 2\E_{\mathbf{s}^f}|G_{\tilde{\theta}}(\mathbf{s}^f)-p_{f}(x)|^2+2\E_{x\sim \D_X, f\sim \D_{F}} |p_{f}(x)-f(x)|^2 \\
    &\leq 4\E_{\mathbf{s}^f}|\mathbf{a}^T(\Sigma_n\Sigma^{-1}-I)\mathbf{v}(x)|^2+4\E_{\mathbf{s}^f}|M|^2+2\E_{x\sim \D_X, f\sim \D_{F}} |p_{f}(x)-f(x)|^2 \\
    &= O\left(\frac{\alpha^2 \tau^2 R_\mathbf{v}^{6}\log d}{n}+(\delta R_\mathbf{v}^2\tau)^2+\delta^2\right),
\end{align*} 
where $O(\cdot)$ hides absolute constants.
\qed

\subsection{Proof of \cref{lem:staterror}}\label{sec:staterrorproof}
We bound the term
\[
    \E_\Gamma[\sup_{G_\theta\in \G}|\RR(G_\theta)-\RR^\Gamma(G_\theta)|]
\]
with standard techniques from statistical learning theory. For notational purposes, we define the loss function for a network $G_\theta$ at a sample $\mathbf{s}^f$ as
\[\L_\theta(\mathbf{s}^f)\coloneqq (G_\theta(\mathbf{s}^f)-f(x))^2.\]

Define the function class $\L\circ\G\coloneqq\{\mathbf{s}^f\mapsto \L_\theta(\mathbf{s}^f)\mid \theta\in \Theta\}$, and recall the empirical Rademacher complexity of $\L\circ\G$ over the training set $\Gamma$ is defined as  
\[R_\Gamma(\L\circ\G)\coloneqq \frac{1}{L}\E_{\boldsymbol{\sigma}\sim\{\pm1\}^L} \bigg[\sup_{\theta\in \Theta}\sum_{\ell=1}^L \sigma_\ell\L_\theta(\mathbf{s}^{f_\ell})\bigg].\]

By Lemma 26.2 in \citet{Shalev}, we have the following bound in terms of Rademacher complexity:
\[\E_\Gamma[\sup_{G_\theta\in \G}|\RR(G_\theta)-\RR^\Gamma(G_\theta)|]\leq 2\E_\Gamma[R_\Gamma(\L\circ\G)].\]
Note that the exact referenced theorem in the book does not include absolute value, but the above statement also holds due to Rademacher complexity being symmetric in sign. 

This Rademacher complexity term can be bounded using Dudley's integral theorem \cite{Dudley}, which yields
\[\E_\Gamma[R_\Gamma(\L\circ\G)]\leq \inf_{\alpha\geq 0} \bigg( 2\alpha+\frac{12}{\sqrt{L}}\int_\alpha^C\sqrt{\log \NN(\epsilon, \L\circ\G, \|\cdot\|_\infty)} \mathrm{d}\epsilon \bigg) \]
where $\NN(\epsilon, \L\circ \G, \|\cdot\|_\infty)$ denotes the $\epsilon$-covering number of $\L\circ \G$ under the $L^\infty$-norm. 

The covering number is computed in \cref{lem:coveringnum}, which gives \[\NN(\epsilon,\L\circ\G,\|\cdot\|_\infty) =O\left(\frac{2^{L_G^2} L_{\mathrm{FFN}}U^{3 L_T} d_{\mathrm{embed}}^{18 L_G^2 L_{\mathrm{FFN}}} \kappa^{6 L_G^2 L_{\mathrm{FFN}}+1} m_{G}^{L_G^2} \ell^{L_G^2}R}{\epsilon}\right)^{p_G},\]
where 
\[p_G=L_{G}\left(3 d_{\mathrm{embed}}^2 m_{G}+L_{\mathrm{FFN}} w_{\mathrm{FFN}}^2\right).\]

Denote the numerator by $C$. With this in hand, we may compute the above integral: setting $\alpha=\frac{1}{\sqrt{L}}$, we find
\[
    \E_\Gamma[\sup_{G_\theta\in \G}|\RR(G_\theta)-\RR^\Gamma(G_\theta)|] = O\left(\frac{\sqrt{p_G\log(C\sqrt{L})}}{\sqrt{L}}\right).
\]

Inserting the hyperparameters used for the construction in \cref{lem:approxerror} results in a bound of
\[\E_\Gamma[\sup_{G_\theta\in \G}|\RR(G_\theta)-\RR^\Gamma(G_\theta)|]  \\
    =O\bigg(\frac{\sqrt{nd^3(\log d)^3\log((d^4R_\mathbf{v}^2n^2U^2+\tau)L)}}{\sqrt{L}}\bigg).\]
This completes the proof. \qed

The covering number is computed in the following lemma, borrowing results from \citet{Havrilla24}.

\begin{lemma}\label{lem:coveringnum}
Let $\epsilon>0$. The transformer network class $\G(L_G,m_G,d_{\mathrm{embed}},\ell, L_{\mathrm{FFN}}, w_{\mathrm{FFN}},R,\kappa)$ with bounded input $\|H\|_\infty\leq U$ satisfies 
\[\NN(\epsilon,\L\circ\G,\|\cdot\|_\infty) =O\left(\frac{2^{L_G^2} L_{\mathrm{FFN}}U^{3 L_T} d_{\mathrm{embed}}^{18 L_G^2 L_{\mathrm{FFN}}} \kappa^{6 L_G^2 L_{\mathrm{FFN}}+1} m_{G}^{L_G^2} \ell^{L_G^2}R}{\epsilon}\right)^{p_G},\]
where 
\[p_G=L_{G}\left(3 d_{\mathrm{embed}}^2 m_{G}+L_{\mathrm{FFN}} w_{\mathrm{FFN}}^2\right).\]
\end{lemma}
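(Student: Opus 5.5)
The plan is the standard layer-by-layer covering argument, in the style of \citet{Havrilla24}: a perturbation of the weights yields a controlled perturbation of the output, and one then discretizes the parameter space. The first step is to establish Lipschitz continuity of every building block with respect to both its input and its parameters on bounded inputs. For a single ReLU attention head $VH\sigma((KH)^TQH)$ with $\|H\|_{\max}\leq B$ and weights bounded by $\kappa$, expanding the product and using that ReLU is $1$-Lipschitz gives two estimates. The output changes by at most $O(d_{\mathrm{embed}}^3\kappa^3\ell B^3)\|\theta-\theta'\|_\infty$ under a parameter perturbation. It changes by at most $O(d_{\mathrm{embed}}^3\kappa^3\ell B^2)\|H-H'\|_{\max}$ under an input perturbation. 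Summing over $m_G$ heads multiplies both constants by $m_G$. For the FFN with $L_{\mathrm{FFN}}$ layers of width $w_{\mathrm{FFN}}$, the analogous constants are of order $(d_{\mathrm{embed}}\kappa)^{L_{\mathrm{FFN}}}$ times powers of $B$. The residual connections add $1$ to the input-Lipschitz constant.

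The second step tracks the growth of activations through the depth. An induction over blocks bounds $\|\mathrm{B}_k\circ\cdots\circ\mathrm{B}_1(H)\|_{\max}$ by $B_k$ with $B_{k}\lesssim (m_G\ell d_{\mathrm{embed}}^{3}\kappa^3)(d_{\mathrm{embed}}\kappa)^{L_{\mathrm{FFN}}}B_{k-1}^3$, starting from $B_0=U$. The third step uses a telescoping argument. Let $G_\theta$ and $G_{\theta'}$ be two networks, and replace the blocks of one by those of the other one block at a time. Each swap introduces a parameter-perturbation error that is then propagated through the remaining blocks by their input-Lipschitz constants. Summing the $L_G$ terms gives
\[
\sup_{\mathbf{s}}|G_\theta(\mathbf{s})-G_{\theta'}(\mathbf{s})|\leq \Lambda\,\|\theta-\theta'\|_\infty,
\]
where $\Lambda$ is a product of $L_G$ factors, each polynomial in $d_{\mathrm{embed}},\kappa,m_G,\ell,U$. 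This yields the $L_G^2$-type exponents in the numerator, since each of $L_G$ terms carries up to $L_G$ factors; I would not attempt to sharpen them. Finally, the squared loss is Lipschitz with constant $2(R+R_F)$ on outputs bounded by $R$, so $\L_\theta$ inherits Lipschitz constant $O(R\Lambda)$ in $\theta$.

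The last step is to cover the parameter set $\{\|\theta\|_\infty\leq\kappa\}$. This set has $p_G=L_G(3d_{\mathrm{embed}}^2m_G+L_{\mathrm{FFN}}w_{\mathrm{FFN}}^2)$ coordinates: three $d_{\mathrm{embed}}\times d_{\mathrm{embed}}$ matrices per head, plus the FFN weights. Cover it by an $\ell^\infty$-grid of mesh $\epsilon/(R\Lambda)$, which needs $(2\kappa R\Lambda/\epsilon)^{p_G}$ points. Each grid point gives an $\epsilon$-close element of $\L\circ\G$ in sup norm, and substituting $\Lambda$ gives the stated bound. The main obstacle is the bookkeeping in the second and third steps. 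Because of the cubic growth through the attention, the activation bounds compound across layers, so keeping the exponents polynomial with exponents $O(L_G^2L_{\mathrm{FFN}})$ requires care. I would simply import the per-block estimates from \citet{Havrilla24} rather than rederive them. The only adaptation needed there is the ReLU (rather than softmax) activation and the final linear head, both of which only improve the constants.
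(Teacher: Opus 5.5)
Your proposal is essentially the paper's own proof. Both import the per-block perturbation estimates of \citet{Havrilla24}, chain them by a telescoping triangle-inequality argument, multiply by the $O(R)$ Lipschitz constant of the squared loss, and cover $[-\kappa,\kappa]^{p_G}$ by a grid, which is where the extra factor of $\kappa$ comes from. One caution: your own recursion $B_k\lesssim C B_{k-1}^3$ compounds to exponents of order $3^{L_G}$ rather than $L_G^2$, so the stated $O(L_G^2L_{\mathrm{FFN}})$ exponents do not follow from your second and third steps as written; they come only from applying the imported estimates with a fixed input bound, which is exactly what the paper does without tracking activation growth.
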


\begin{proof}
Let $\eta>0$ and $G_\theta, G_{\theta'}\in\mathcal{G}$ be networks such that $\|\theta-\theta'\|_\infty < \eta$. Fix an input $\mathbf{s}$ which satisfies $\|\mathbf{s}\|_\infty\leq U$. We derive a bound on $|G_\theta(\mathbf{s})-G_{\theta'}(\mathbf{s})|$ in terms of $\eta$, which we can then control to ultimately derive a bound on the covering number.

Explicitly, the difference is given by 
\[|G_\theta(\mathbf{s})-G_{\theta'}(\mathbf{s})|=|\mathrm{D}\circ \mathrm{B}_{L_G}\circ\cdots\circ \mathrm{B}_1\circ (\mathrm{PE+E}(\mathbf{s}))-\mathrm{D}'\circ \mathrm{B}'_{L_G}\circ\cdots\circ \mathrm{B}'_1\circ (\mathrm{PE+E}'(\mathbf{s}))|.\]
Since the decoder is fixed to output a specific entry of the matrix, it is clear that this is at most
\[\| \mathrm{B}_{L_G}\circ\cdots\circ \mathrm{B}_1\circ (\mathrm{PE+E}(\mathbf{s}))-\mathrm{B}'_{L_G}\circ\cdots\circ \mathrm{B}'_1\circ (\mathrm{PE+E}'(\mathbf{s}))\|_{\max}.\]

Moreover, note that the encoding and positional encoding are also fixed: if we let $H=\mathrm{PE+E}(s)$, we have that $\|H\|_{\max}\leq \max(U,1)\leq U+1$. 

It remains to bound the difference between two transformer blocks. We utilize the results from \citet{Havrilla24}. First, the difference between multi-head attention layers (MHA) with ReLU activation on an input $H$ with $\|H\|_{\max} \leq \overline{U}$ is bounded by 
\[\|\mathrm{MHA}_1(H)-\mathrm{MHA}_2(H)\|^{\mathrm{ReLU}}_{\max}\leq 3\kappa^3d_{\mathrm{embed}}^6\overline{U}^3m_G\ell \eta.\]

Furthermore, their proof uses the fact that the Lipschitz constant of ReLU is bounded above by $1$ to remove it, which also holds for linear attention. The rest of the proof is independent of activation, ultimately yielding the same bound for linear attention and ReLU. 

Incorporating the feedforward layer, we have
\begin{align*}
     & \|\mathrm{FFN}_1(H+\mathrm{MHA}_1(H))-\mathrm{FFN}_2(H+\mathrm{MHA}_2(H))\|_{\max} \\
     &\leq 3 \kappa^{3+L_{\mathrm{FFN}}} w_{\mathrm{FFN}}^{2 L_{\mathrm{FFN}}} d_{\mathrm{embed}}^6 \overline{U}^3 m_{G} \ell \eta+L_{\mathrm{FFN}}\left(w_{\mathrm{FFN}}\left(2 d_{\mathrm{embed}}^6 \kappa^3 \overline{U} m_{G} \ell\right)+2\right)\left(\kappa w_{\mathrm{FFN}}\right)^{L_{\mathrm{FFN}}-1} \eta.
\end{align*}

Finally, with the residual connection we get the bound between transformer blocks: 
\begin{align*}
    \|\mathrm{B}_1(H)-\mathrm{B}_2(H)\|_{\max} &= \|(H+\mathrm{MHA}_1(H)+\mathrm{FFN}_1(H+\mathrm{MHA}_1(H))) \\ &\quad -(H+\mathrm{MHA}_2(H)+\mathrm{FFN}_2(H+\mathrm{MHA}_2(H)))\|_{\max} \\
    &\leq \bigg(4 \kappa^{3+L_{\mathrm{FFN}}} w_{\mathrm{FFN}}^{2 L_{\mathrm{FFN}}} d_{\mathrm{embed}}^6 \overline{U}^3 m_{G} \ell \\ &\quad +L_{\mathrm{FFN}}\left(w_{\mathrm{FFN}}\left(2 d_{\mathrm{embed}}^6 \kappa^3 \overline U m_{G} \ell\right)+2\right)\left(\kappa w_{\mathrm{FFN}}\right)^{L_{\mathrm{FFN}}-1}\bigg) \eta.
\end{align*}

A standard triangle inequality argument allows us to decompose the whole network into differences between individual components, whereupon we compute the total bound
\[|G_\theta(\mathbf{s})-G_{\theta'}(\mathbf{s})|=O\left( 2^{L_G^2+1} L_{\mathrm{FFN}} U^{3 L_T} d_{\mathrm{embed}}^{18 L_G^2 L_{\mathrm{FFN}}} \kappa^{6 L_G^2 L_{\mathrm{FFN}}} m_{G}^{L_G^2} \ell^{L_G^2} \eta\right).\]

Since the encoder and decoder are fixed, a network in $\G$ has a total of
\begin{align*} |\theta|=\sum_{i=1}^{L_G}\left|\theta_{\mathrm{B}_i}\right| &=L_{G}\left(3 d_{\mathrm{embed}}^2 m_{G}+L_{\mathrm{FFN}} w_{\mathrm{FFN}}^2\right)\end{align*}
parameters. We will refer to this quantity as $p_G$. Since each parameter is in $[-\kappa,\kappa]$, it takes at most $\frac{2\kappa}{\eta}$ points to cover that interval, which results in a covering number bound of 
\begin{align*} 
\NN(\epsilon,\G,\|\cdot\|_\infty) &\leq \left(\frac{2\kappa\cdot 2^{L_G^2+1} L_{\mathrm{FFN}} U^{3 L_T} d_{\mathrm{embed}}^{18 L_G^2 L_{\mathrm{FFN}}} \kappa^{6 L_G^2 L_{\mathrm{FFN}}} m_{G}^{L_G^2} \ell^{L_G^2}}{\epsilon}\right)^{p_G} \\ 
&= O\left(\frac{2^{L_G^2} L_{\mathrm{FFN}}U^{3 L_T} d_{\mathrm{embed}}^{18 L_G^2 L_{\mathrm{FFN}}} \kappa^{6 L_G^2 L_{\mathrm{FFN}}+1} m_{G}^{L_G^2} \ell^{L_G^2}}{\epsilon}\right)^{p_G}.
\end{align*}

Recall that we actually want the covering number after postcomposition with the loss function $\L$, which we can connect to the above quantity by the triangle inequality:
\[\|\L_\theta(\mathbf{s}^f)-\L_{\theta'}(\mathbf{s}^f)\|_\infty\leq 4R_G\|G_\theta(s)-G_{\theta'}(s)\|_\infty.\]
Therefore, we have
\[\NN(\epsilon,\L\circ\G,\|\cdot\|_\infty) =O\left(\frac{2^{L_G^2} L_{\mathrm{FFN}}U^{3 L_T} d_{\mathrm{embed}}^{18 L_G^2 L_{\mathrm{FFN}}} \kappa^{6 L_G^2 L_{\mathrm{FFN}}+1} m_{G}^{L_G^2} \ell^{L_G^2}R_G}{\epsilon}\right)^{p_G}\]
as desired.
\end{proof}

\section{Results for Splines}
\subsection{Assumptions for \cref{thm:splinegenerror}}\label{sec:splineassumptions}

\begin{assumption}[Linear Spline Approximability of Function Class]
For constants $\delta, \alpha, R_F\geq 0$, the distribution $\D_F$ is supported on the function class
\[
F(\delta,\alpha,R_F)\coloneqq  \bigg\{f \mid \min_{s\in\Pi^\alpha}\|f-s\|_{L^\infty(X)}\leq \delta \\ \text{ and }\|f\|_{L^\infty(X)}\leq R_F\bigg\}
\]
where $\Pi^\alpha$ consists of linear splines expressible as 
\[s(x)=\sum_{i=0}^m \alpha_iB_1\left(\frac{x-t_i}{h}\right)\]
with coefficients satisfying $\|(\alpha_0,\ldots,\alpha_m)\|\leq\alpha$.
\end{assumption}

\begin{assumption}
$\D_X$ is supported on $X\subseteq[-R,R]$ and the covariance matrix $\Sigma\coloneqq \E_{x\sim\D_X}[\mathbf{b}(x)\mathbf{b}(x)^T]$ has eigenvalues bounded away from $0$, such that 
\[ \|\Sigma^{-1}\|= \frac{1}{\lambda_{\min}(\Sigma)}\leq \tau_\mathrm{spline}\]
for some $\tau_\mathrm{spline}>0$.
\end{assumption}

\subsection{Proof of \cref{thm:splinegenerror}}\label{sec:splineproof}
We proceed in the same way as the proof of \cref{thm:generror}. The main difference is the construction used for the approximation error, which then suggests slightly different network hyperparameters used to determine the statistical error. We have the same error decomposition into
\[
  \E_\Gamma[\RR(G_{\hat\theta}^\Gamma)]\leq 2 \underbrace{\E_\Gamma[\sup_{G_\theta\in \G}|\RR(G_\theta)-\RR^\Gamma(G_\theta)|]}_{\text{I: Statistical Error}}
+ \underbrace{\RR\left(G_{\tilde{\theta}}\right).}_{\text{II: Approximation Error}}
\]

\textbf{Approximation Error}

Our input is embedded as
\[
    H=\mathrm{PE+E}(\mathbf{s}^f)=
    \begin{bmatrix}
    x_1 & x_2 & x_3 & \cdots & x_n & x \\
    0 & 0 & 0 & \cdots & 0 & 0 \\
    \vdots &  &  & \ddots & & \vdots \\
    0 & 0 & 0 & \cdots & 0 & 0 \\
    y_1 & y_2 & y_3 & \cdots & y_n & 0 \\
    0 & 0 & 0 & \cdots & 0 & 0 \\
    \mathcal{I}_1 & \mathcal{I}_2 & \mathcal{I}_3 & \cdots & \mathcal{I}_n & \mathcal{I}_{n+1} \\
    1 & 1 & 1 & \cdots & 1 & 1
    \end{bmatrix}\in \R^{(m+7)\times (n+1)}.
\]

In the degree $q=1$ case, the cardinal B-spline formula from \eqref{eq:cardinalbspline} simplifies to 
\[
    B_1(x)=\mathrm{ReLU}(x)-2\mathrm{ReLU}(x-1)+\mathrm{ReLU}(x-2),
\]
and the corresponding basis is
\[
    \left\{B_1^j(x)\coloneqq B_1\left(\frac{x-t_j}{h}\right) \mid j=0,1,\ldots, m\right\}
\]
where $t_0=a-h$. Since $h>0$, positive homogeneity of ReLU allows us to express each basis element as
\begin{equation}\label{eq:expandedlinearspline}
    B_1^j(x)=\frac{1}{h}\mathrm{ReLU}(x-t_j)-\frac{2}{h}\mathrm{ReLU}(x-t_j-h)+\frac{1}{h}\mathrm{ReLU}(x-t_j-2h).
\end{equation}

As we did before, we featurize each $x_i$ separately (again, for indexing convenience we may refer to $x$ as $x_{n+1}$). Since we use the same features across inputs, the computations are repeated in parallel via MHA. The goal of this step is to build in-context the following feature submatrix:
\begin{equation}\label{eq:1dsplinefeatures}
    \begin{bmatrix}
    B_1^0(x_1) & B_1^0(x_2) & B_1^0(x_3) & \cdots & B_1^0(x_n) & B_1^0(x) \\
    B_1^1(x_1) & B_1^1(x_2) & B_1^1(x_3) & \cdots & B_1^1(x_n) & B_1^1(x) \\
    \vdots & \vdots & \vdots & \ddots & \vdots & \vdots \\
    B_1^m(x_1) & B_1^m(x_2) & B_1^m(x_3) & \cdots & B_1^m(x_n) & B_1^m(x) \\
    \end{bmatrix}.
\end{equation}

Since we are using a fixed set of knots, the values involving $h$ and $t_j$ are essentially constant and can be encoded in the parameters explicitly. While we could break this computation up into simple binary arithmetic operations \citep{Shen}, the Interaction Lemma interestingly allows for the computation of the degree $1$ B-spline values in \emph{one step} from this point (and with only one wide attention block), which we demonstrate here.

For $1\leq i\leq n+1$ and $0\leq j\leq m$, each of the three summands in \eqref{eq:expandedlinearspline} can be realized exactly with one ReLU attention head. Since we use residual attention, we can automatically add the three results by having them placed in the same entry, and we can do these in parallel. We will put the value of $B_1^j(x_i)$ in the $(j+2,i)$-index entry.

For the first term $\frac{1}{h}\mathrm{ReLU}(x_i-t_j)$, we let $V_{i,j,1}=\frac{1}{h}\mathbf{e}_{j+2}\mathbf{e}_{d_{\mathrm{embed}}}^T$ and pick data kernels 

\[
Q^{\mathrm{data}}_{i,j,1} = 
\left[\begin{array}{ccccc|ccc} 
    0 & & & & & 0 & 0 & 1 \\ 
      & 0 & & & & 0 & 0 & 0 \\
      & & \ddots & & & \vdots & \vdots & \vdots \\
      & & & 0 & & 0 & 0 & 0 \\
      & & & & 0 & 0 & 0 & 1
\end{array}\right] 
\quad\quad\quad 
K^{\mathrm{data}}_{i,j,1} = 
\left[\begin{array}{cccccc|ccc} 
    1 & & & & & &  0 & 0 & 0 \\ 
    & 0 & & & & & \vdots & \vdots & \vdots \\
	& & \ddots & & & & 0 & 0 & 0 \\ 
    & &  & 0 & & & \vdots & \vdots & \vdots\\
    & & & & 0 & & 0 & 0 & 0 \\
    & &  &  & & 0 & 0 & 0 & -t_j
\end{array}\right],
\]

both in $\R^{(d_\mathrm{embed}-3)\times d_\mathrm{embed}}$. Recall the bottom row of $H$ is constant 1. By \cref{lem:interaction}, there exists a ReLU attention head $A_{i,j,1}$ such that on columns,
\[
    A_{i,j,1}(\mathbf{h}_i) = \sigma(\langle Q^{\mathrm{data}}_{i,j,1}\mathbf{h}_i, K^{\mathrm{data}}_{i,j,1}\mathbf{h}_i\rangle) V_i\mathbf{h}_i = \frac{1}{h}\sigma(x_i-t_j)\mathbf{e}_{j+2} 
\]
and $A_i(\mathbf{h}_l)=0$ when $l\neq i$.

The other two terms are similar: for $-\frac{2}{h}\mathrm{ReLU}(x-t_j-h)$, we let $V_{i,j,2}=-\frac{2}{h}\mathbf{e}_{j+2}\mathbf{e}_{d_{\mathrm{embed}}}^T$ with data kernels 
\[
Q^{\mathrm{data}}_{i,j,2} = 
\left[\begin{array}{ccccc|ccc} 
    0 & & & & & 0 & 0 & 1 \\ 
      & 0 & & & & 0 & 0 & 0 \\
      & & \ddots & & & \vdots & \vdots & \vdots \\
      & & & 0 & & 0 & 0 & 0 \\
      & & & & 0 & 0 & 0 & 1
\end{array}\right] 
\quad\quad\quad 
K^{\mathrm{data}}_{i,j,2} = 
\left[\begin{array}{cccccc|ccc} 
    1 & & & & & &  0 & 0 & 0 \\ 
    & 0 & & & & & \vdots & \vdots & \vdots \\
	& & \ddots & & & & 0 & 0 & 0 \\ 
    & &  & 0 & & & \vdots & \vdots & \vdots\\
    & & & & 0 & & 0 & 0 & 0 \\
    & &  &  & & 0 & 0 & 0 & -t_j-h
\end{array}\right],
\]

and for $\frac{1}{h}\mathrm{ReLU}(x-t_j-2h)$, we take $V_{i,j,3}=\frac{1}{h}\mathbf{e}_{j+2}\mathbf{e}_{d_{\mathrm{embed}}}^T$ and data kernels 

\[
Q^{\mathrm{data}}_{i,j,3} = 
\left[\begin{array}{ccccc|ccc} 
    0 & & & & & 0 & 0 & 1 \\ 
      & 0 & & & & 0 & 0 & 0 \\
      & & \ddots & & & \vdots & \vdots & \vdots \\
      & & & 0 & & 0 & 0 & 0 \\
      & & & & 0 & 0 & 0 & 1
\end{array}\right]  
\quad\quad\quad 
K^{\mathrm{data}}_{i,j,3} = 
\left[\begin{array}{cccccc|ccc} 
    1 & & & & & &  0 & 0 & 0 \\ 
    & 0 & & & & & \vdots & \vdots & \vdots \\
	& & \ddots & & & & 0 & 0 & 0 \\ 
    & &  & 0 & & & \vdots & \vdots & \vdots\\
    & & & & 0 & & 0 & 0 & 0 \\
    & &  &  & & 0 & 0 & 0 & -t_j-2h
\end{array}\right].
\]
\cref{lem:interaction} allows us to find corresponding ReLU attention heads such that residual MHA over all three heads returns

\[
    \sum_{k=1}^3 A_{i,j,k}(H)+H = \begin{bmatrix}
    x_1 & \cdots & x_{i-1} & x_i & x_{i+1} & \cdots & x_n & x \\
    0 & \cdots & 0 & 0 & 0 & \cdots & 0 & 0 \\
    \vdots & \ddots & \vdots & \vdots & \vdots & \ddots & \vdots & \vdots \\
    0 & \cdots & 0 & 0 & 0 & \cdots & 0 & 0 \\
    0 & \cdots & 0 & B_1^j(x_i) & 0 & \cdots & 0 & 0 \\
    0 & \cdots & 0 & 0 & 0 & \cdots & 0 & 0 \\
    \vdots & \ddots & \vdots & \vdots & \vdots & \ddots & \vdots & \vdots \\
    0 & \cdots & 0 & 0 & 0 & \cdots & 0 & 0 \\
    y_1 & \cdots & y_{i-1} & y_i & y_{i+1} & \cdots & y_n & 0 \\
    0 & \cdots & 0 & 0 & 0 & \cdots & 0 & 0 \\
    \mathcal{I}_1 & \cdots & \mathcal{I}_{i-1} & \mathcal{I}_i & \mathcal{I}_{i+1} & \cdots & \mathcal{I}_n & \mathcal{I}_{n+1} \\
    1 & \cdots & 1 & 1 & 1 & \cdots & 1 & 1
\end{bmatrix}.
\]

Proceeding similarly, with a total of $3(n+1)(m+1)=O(nm)$ heads, the residual MHA exactly outputs 
\begin{equation}\label{1dsplinefeatures}\mathrm{MHA}(H)+H=\sum_{\substack{1\leq i\leq n+1 \\ 0\leq j\leq m \\ 1\leq k \leq 3}}A_{i,j,k}(H)+H=
    \begin{bmatrix}
    x_1 & x_2 & x_3 & \cdots & x_n & x \\
    B_1^0(x_1) & B_1^0(x_2) & B_1^0(x_3) & \cdots & B_1^0(x_n) & B_1^0(x) \\
    B_1^1(x_1) & B_1^1(x_2) & B_1^1(x_3) & \cdots & B_1^1(x_n) & B_1^1(x) \\
    \vdots & \vdots & \vdots & \ddots & \vdots & \vdots \\
    B_1^m(x_1) & B_1^m(x_2) & B_1^m(x_3) & \cdots & B_1^m(x_n) & B_1^m(x) \\
    y_1 & y_2 & y_3 & \cdots & y_n & 0 \\
    0 & 0 & 0 & \cdots & 0 & 0 \\
    \mathcal{I}_1 & \mathcal{I}_2 & \mathcal{I}_3 & \cdots & \mathcal{I}_n & \mathcal{I}_{n+1} \\
    1 & 1 & 1 & \cdots & 1 & 1
    \end{bmatrix}.
\end{equation}

By \cref{lem:interaction}, we have the weight bound $\|\theta\|_\infty = O(m^4n^2R^2)$. Conveniently, we do not actually need the feedforward component here, since the ReLU is used as a part of the value computation. In contrast, when constructing monomials (and isolating other simple arithmetic operations), we add a large positive shift such that ReLU acts as identity on our result, and we unshift using the FFN that follows via \cref{lem:decrementing}.

Recall we denote the feature vectors as $\mathbf{b}(x_i)\coloneqq [B_1^0(x_i), B_1^1(x_i), \ldots, B_1^m(x_i)]^T$, the empirical uncentered covariance as
$\Sigma_n\coloneqq \frac{1}{n}\sum_{i=1}^n\mathbf{b}(x_i)\mathbf{b}(x_i)^T$, and the population form as $\Sigma\coloneqq \E[\mathbf{b}(x_i)\mathbf{b}(x_i)^T]$.

The last linear attention block uses the parameters
\[V=\begin{bmatrix}
    0_{(m+2)\times (m+2)} & 0_{(m+2)\times 1} & 0_{(m+2)\times 4} \\
    0_{1\times (m+2)} & 1 & 0_{1\times 4} \\
    0_{4\times (m+2)} & 0_{4\times 1} & 0_{4\times 4}
\end{bmatrix}\qquad \text{and} \qquad Q=\begin{bmatrix}
    0_{1\times 1} & 0_{1\times (m+1)} & 0_{1\times 5} \\
    0_{(m+1)\times 1} & \Sigma^{-1} & 0_{(m+1)\times 5} \\
    0_{5\times 1} & 0_{5\times (m+1)} & 0_{5\times 5}
\end{bmatrix},\]
both in $\R^{d_\mathrm{embed}\times d_\mathrm{embed}}$. The parameters are similar to those from \eqref{eq:monomiallinearparams}.

The output after a final decoder which reads off the entry at index $(m+3,n+1)$ is given by 
\[\left(\frac{1}{n}\sum_{i=1}^ny_i\mathbf{b}(x_i)^T\right)\Sigma^{-1}\mathbf{b}(x).\]

The rest of the proof proceeds identically to \cref{sec:approxerrorproof}, again using the matrix Bernstein inequality \cref{thm:bernstein} to quantify the empirical to population gap. Notice $\|B_1^i(x)\|_{L^\infty(X)}=1$ for all $i$, so we have the $\infty$-norm upper bound on feature vectors $R_\mathbf{b}=1$.

Substituting everything in, we arrive at the approximation error bound
\[
    \RR(G_{\tilde{\theta}}) = O\left(\frac{\alpha^2\tau^2\log m}{n}+\delta ^2\tau^2+\delta^2\right).
\]

\textbf{Statistical Error}

The proof in \cref{sec:staterrorproof} applies here too: the only architectural difference is the embedding, which does not affect the covering number computation. The new hyperparameters determined by the construction above are $d_\mathrm{embed}=m+7$, $L_G=2$, $m_G=O(nm)$, $L_\mathrm{FFN}=0$, $w_\mathrm{FFN}=0$, and $\kappa=O(m^4n^2R^2)$. Substituting these in, we get a statistical error bound of 
\[
    \E_\Gamma[\sup_{G_\theta\in \G}|\RR(G_\theta)-\RR^\Gamma(G_\theta)|]=O\left(\frac{\sqrt{nm^3\log((m^4R^2n^2U^2+\tau_{\mathrm{spline}})L)}}{\sqrt{L}})\right).
\]

Adding the two errors yields the desired bound. \qed

\subsection{Higher Degree Splines}\label{sec:higherordersplines}

Incorporating power operations we used for the polynomial feature representation, we can easily extend the linear spline construction to higher degree cardinal B-splines. We outline the quadratic case, where the basis functions $B_2^j(x)$ can be expressed by the formula

\[
B_2\left(\frac{x-t_j}{h}\right) = \frac{1}{2h^2}\bigg[(\mathrm{ReLU}(x-t_j))^2-3(\mathrm{ReLU}(x-t_j-h))^2+3(\mathrm{ReLU}(x-t_j-2h))^2-(\mathrm{ReLU}(x-t_j-3h))^2\bigg].
\]

We require a larger $d_\mathrm{embed}$ to store intermediate computations. In particular, assuming we are still using $m$ knots (which corresponds to $m+2$ features for degree $q=2$ splines), we will require $1+4(m+2)$ additional rows from the linear spline case to store each of $\gamma^j_k(x)\coloneqq\mathrm{ReLU}(x-t_j-kh)$ for $k=0,1,2,3$ and $-1\leq j\leq m$, since squaring must be done in a separate transformer block. Thus the embedding matrix is
\[  
H=\mathrm{PE+E}(\mathbf{s}^f)=
\begin{bmatrix}
x_1 & x_2 & x_3 & \cdots & x_n & x \\
0 & 0 & 0 & \cdots & 0 & 0 \\
\vdots &  &  & \ddots & & \vdots \\
0 & 0 & 0 & \cdots & 0 & 0 \\
y_1 & y_2 & y_3 & \cdots & y_n & 0 \\
0 & 0 & 0 & \cdots & 0 & 0 \\
\mathcal{I}_1 & \mathcal{I}_2 & \mathcal{I}_3 & \cdots & \mathcal{I}_n & \mathcal{I}_{n+1} \\
1 & 1 & 1 & \cdots & 1 & 1
\end{bmatrix}\in\R^{(5m+16)\times(n+1)}.
\]

The first transformer block will construct $\gamma^j_k(x_i)$ for each $0\leq k\leq 3$, $-1\leq j\leq m$, and $1\leq i\leq n+1$ via attention heads $A_{i,j,k}$. Let $V_{i,j,k}=\mathbf{e}_{4(j+1)+k+2}\mathbf{e}_{d_\mathrm{embed}}^T$ and choose data kernels of the form
\[
Q^{\mathrm{data}}_{i,j,k} = 
\left[\begin{array}{ccccc|ccc} 
    0 & & & & & 0 & 0 & 1 \\ 
      & 0 & & & & 0 & 0 & 0 \\
      & & \ddots & & & \vdots & \vdots & \vdots \\
      & & & 0 & & 0 & 0 & 0 \\
      & & & & 0 & 0 & 0 & 1
\end{array}\right] 
\quad\quad\quad 
K^{\mathrm{data}}_{i,j,k} = 
\left[\begin{array}{cccccc|ccc} 
    1 & & & & & &  0 & 0 & 0 \\ 
    & 0 & & & & & \vdots & \vdots & \vdots \\
	& & \ddots & & & & 0 & 0 & 0 \\ 
    & &  & 0 & & & \vdots & \vdots & \vdots\\
    & & & & 0 & & 0 & 0 & 0 \\
    & &  &  & & 0 & 0 & 0 & -t_j-kh
\end{array}\right] 
.\]

\cref{lem:interaction} implies the existence of attention heads $A_{i,j,k}$ such that the residual MHA yields
\[
\mathrm{MHA}(H)+H=\sum_{\substack{1\leq i\leq n+1 \\ -1\leq j\leq m \\ 0\leq k \leq 3}}A_{i,j,k}(H)+H=
    \begin{bmatrix}
    x_1 & x_2 & x_3 & \cdots & x_n & x \\
    \gamma_0^{-1}(x_1) & \gamma_0^{-1}(x_2) & \gamma_0^{-1}(x_3) & \cdots & \gamma_0^{-1}(x_n) & \gamma_0^{-1}(x) \\
    \gamma_1^{-1}(x_1) & \gamma_1^{-1}(x_2) & \gamma_1^{-1}(x_3) & \cdots & \gamma_1^{-1}(x_n) & \gamma_1^{-1}(x) \\
    \gamma_2^{-1}(x_1) & \gamma_2^{-1}(x_2) & \gamma_2^{-1}(x_3) & \cdots & \gamma_2^{-1}(x_n) & \gamma_2^{-1}(x) \\
    \gamma_3^{-1}(x_1) & \gamma_3^{-1}(x_2) & \gamma_3^{-1}(x_3) & \cdots & \gamma_3^{-1}(x_n) & \gamma_3^{-1}(x) \\
    \gamma_0^0(x_1) & \gamma_0^0(x_2) & \gamma_0^0(x_3) & \cdots & \gamma_0^0(x_n) & \gamma_0^0(x) \\
    \vdots & \vdots & \vdots & \ddots & \vdots & \vdots \\
    \gamma_3^m(x_1) & \gamma_3^m(x_2) & \gamma_3^m(x_3) & \cdots & \gamma_3^m(x_n) & \gamma_3^m(x) \\
    0 & 0 & 0 & \cdots & 0 & 0 \\
    \vdots & \vdots & \vdots & \ddots & \vdots & \vdots \\
    0 & 0 & 0 & \cdots & 0 & 0 \\
    y_1 & y_2 & y_3 & \cdots & y_n & 0 \\
    0 & 0 & 0 & \cdots & 0 & 0 \\
    \mathcal{I}_1 & \mathcal{I}_2 & \mathcal{I}_3 & \cdots & \mathcal{I}_n & \mathcal{I}_{n+1} \\
    1 & 1 & 1 & \cdots & 1 & 1
    \end{bmatrix}.
\]

The next transformer block constructs the final features: it squares the previous results, multiplies them by their respective constant coefficients, and adds the output to the appropriate entry. Such attention heads are a variation of the ones used for multiplication in the monomial feature representation. Again, let us define each attention head $A_{i,j,k}$ for $0\leq k\leq 3$, $-1\leq j\leq m$, and $1\leq i\leq n+1$ via attention heads $A_{i,j,k}$. Let $V_{i,j,k}=\mathbf{e}_{4(m+1)+j+7}\mathbf{e}_{d_\mathrm{embed}}^T$ and specify data kernels

\begin{equation*}
Q^{\mathrm{data}}_{i,j,k} = 
\left[\begin{array}{cccccccc|ccc} 
    0 & & & & & & & &  0 & 0 & 0 \\ 
    & \ddots & & & & & & & \vdots & \vdots & \vdots \\
    & & 0 & & & & & & 0 & 0 & 0 \\ 
    & & & 1 & & & & & 0 & 0 & 0 \\
	& & & & 0 & & & & 0 & 0 & 0 \\ 
    & & & & & \ddots & & & \vdots & \vdots & \vdots\\
    & & & & & & 0 & & 0 & 0 & 0 \\
    & & & & & & & 0 & 0 & 0 & 1
\end{array}\right] 
\;
K^{\mathrm{data}}_{i,j,k} = 
\left[\begin{array}{cccccccc|ccc} 
    0 & & & & & & & &  0 & 0 & 0 \\ 
    & \ddots & & & & & & & \vdots & \vdots & \vdots \\
    & & 0 & & & & & & 0 & 0 & 0 \\ 
    & & & \frac{\upsilon_k}{2h^2} & & & & & 0 & 0 & 0 \\
	& & & & 0 & & & & 0 & 0 & 0 \\ 
    & & & & & \ddots & & & \vdots & \vdots & \vdots\\
    & & & & & & 0 & & 0 & 0 & 0 \\
    & & & & & & & 0 & 0 & 0 & M
\end{array}\right]
\end{equation*}
where $\upsilon_0=1, \upsilon_1=-3, \upsilon_2=3,$ and $\upsilon_3=-1$. Both $Q^{\mathrm{data}}_{i,j,k}$ and $K^{\mathrm{data}}_{i,j,k}$ have the left side nonzero entry at index $((4(j+1)+k+2),(4(j+1)+k+2))$. By \cref{lem:interaction}, we have attention heads such that after subtracting off the $+4M$ constants with the FFN via \cref{lem:decrementing}, the output of the transformer block is 

\[
\mathrm{MHA}(H)+H=\sum_{\substack{1\leq i\leq n+1 \\ -1\leq j\leq m \\ 0\leq k \leq 3}}A_{i,j,k}(H)+H=
    \begin{bmatrix}
    x_1 & x_2 & x_3 & \cdots & x_n & x \\
    \gamma_0^{-1}(x_1) & \gamma_0^{-1}(x_2) & \gamma_0^{-1}(x_3) & \cdots & \gamma_0^{-1}(x_n) & \gamma_0^{-1}(x) \\
    \vdots & \vdots & \vdots & \ddots & \vdots & \vdots \\
    \gamma_3^m(x_1) & \gamma_3^m(x_2) & \gamma_3^m(x_3) & \cdots & \gamma_3^m(x_n) & \gamma_3^m(x) \\
    B_2^{-1}(x_1) & B_2^{-1}(x_2) & B_2^{-1}(x_3) & \cdots & B_2^{-1}(x_n) & B_2^{-1}(x) \\
    B_2^0(x_1) & B_2^0(x_2) & B_2^0(x_3) & \cdots & B_2^0(x_n) & B_2^0(x) \\
    \vdots & \vdots & \vdots & \ddots & \vdots & \vdots \\
    B_2^m(x_1) & B_2^m(x_2) & B_2^m(x_3) & \cdots & B_2^m(x_n) & B_2^m(x) \\
    y_1 & y_2 & y_3 & \cdots & y_n & 0 \\
    0 & 0 & 0 & \cdots & 0 & 0 \\
    \mathcal{I}_1 & \mathcal{I}_2 & \mathcal{I}_3 & \cdots & \mathcal{I}_n & \mathcal{I}_{n+1} \\
    1 & 1 & 1 & \cdots & 1 & 1
    \end{bmatrix}.
\]

Adjusting the linear attention parameters with appropriate zero padding solves the least square problem with the final linear attention layer.

\section{In-Context Polynomial Regression on Vector-Valued Functions}\label{sec:vectorvalued}

Suppose we follow the same setup established in \cref{sec:setup} except we now regress on functions $f\colon \R\to \R^D$. Each function can be written as $f=(f^1,f^2,\ldots, f^D)$, for which we can regress in parallel. Our samples are now of the form $\mathbf{s}^f=(x_1, \mathbf{y}_1, x_2, \mathbf{y}_2, \ldots, x_n, \mathbf{y}_n;x)$, where $\mathbf{y}_i=(y_i^1,y_i^2,\ldots,y_i^D)=f(x_i)$.

We now require \cref{ass:polyapproximability} for each $f^j$ from $j=1$ to $D$ instead. In other words, any vector-valued function we regress has each coordinate approximable by degree $\leq d$ polynomials with bounded coefficients. 

\begin{assumption}[Coordinate-wise Polynomial Approximability of Function Class]
For a given degree $d$ and constant $\delta, \alpha, R_F\geq 0$, the distribution $\D_F$ is supported on the function class
\[F(\delta,\alpha,d,R_F)\coloneqq  \bigg\{f=(f^1,f^2,\ldots, f^D) \mid \min_{p\in\Pi_d^\alpha}\|f^j-p\|_{L^\infty(X)}\leq \delta \\ \text{ and }\|f^j\|_{L^\infty(X)}\leq R_F \text{ for } 1\leq j\leq D\bigg\}\]
where $\Pi_d^\alpha$ consists of polynomials of the form $p(x)=\sum_{i=0}^da_ix^i$ with coefficients satisfying $\|(a_0,\ldots,a_d)\|\leq\alpha$.
\end{assumption}

We also assume \cref{ass:conditioning}.

\begin{thm}[Generalization Error for In-Context Polynomial Regression of Vector-Valued Functions]
Fix hyperparameters for the network class $\G$ as 
\begin{align*}
    & L_G = O(\log d), m_G=O(dn), d_{\mathrm{embed}}=D+d+6, \ell=n+1 \\
    & L_{\mathrm{FFN}}=O(1), w_{\mathrm{FFN}}=D+d+6, \kappa = O(d^4R_\mathbf{v}^2n^2U^2+\tau).
\end{align*}
The minimizer of empirical loss $G^\Gamma_{\hat{\theta}}$ in this class satisfies 
\[
\E_\Gamma[\RR(G_{\hat\theta}^\Gamma)] = O\left(\frac{D\alpha^2 R_\mathbf{v}^{10}\log d}{n}+D\delta R_\mathbf{v}\tau+D\delta^2+\frac{\sqrt{nd(D+d)^2(\log d)^3\log((d^4R_\mathbf{v}^2n^2U^2+\tau)L)}}{\sqrt{L}}\right)
\]
\end{thm}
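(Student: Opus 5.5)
We follow the template of the proof of \cref{thm:generror}. We use the same decomposition \eqref{eq:generrordecomp} into an approximation term $\RR(G_{\tilde\theta})$ and twice the statistical term $\E_\Gamma[\sup_{G_\theta\in\G}|\RR(G_\theta)-\RR^\Gamma(G_\theta)|]$. Here the risk is measured with the squared Euclidean norm $\|G_\theta(\mathbf{s}^f)-f(x)\|^2$, and the decoder reads off a $D$-vector. Since $\|\cdot\|^2$ is a sum over the $D$ coordinates, both terms split coordinatewise. This splitting is the source of the factor $D$ in the approximation part.

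\textbf{Construction.} The embedding places the $D$ rows $y_i^1,\dots,y_i^D$ where the single $y$ row used to be, with zeros in the query column. This gives $d_{\mathrm{embed}}=D+d+6$. The featurization blocks $\mathrm{B}_0,\dots,\mathrm{B}_{O(\log d)}$ from \cref{sec:approxerrorproof} are reused verbatim, via \cref{lem:interaction,lem:decrementing}. They only touch the $x$ row and the monomial rows, so $L_G$, $m_G$ and $\kappa$ are unchanged. In the final linear block we keep $Q$ with $\tilde Q=\Sigma^{-1}$ on the $\mathbf{v}$ rows. We change $V$ so that it maps the $D\times D$ block of $y$ rows into itself via the identity. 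The output in the query column of the $j$-th $y$ row is then
\[
\Big(\frac1n\sum_{i=1}^n y_i^j\mathbf{v}(x_i)^T\Big)\Sigma^{-1}\mathbf{v}(x),
\]
which is exactly the scalar output \eqref{eq:finaloutput} for the coordinate $f^j$.

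\textbf{Approximation.} Because the covariance $\Sigma$ is shared across coordinates, a single matrix Bernstein estimate \cref{thm:bernstein} on $\|\Sigma_n-\Sigma\|$ serves all $D$ coordinates. For each $j$ we pick $p_{f^j}$ with coefficients $\mathbf{a}^j$ and repeat the scalar argument, then sum over $j$. This yields $D$ times the scalar bound. We could simply reuse the bound $O(\alpha^2\tau^2R_\mathbf{v}^6\log d/n+\delta^2R_\mathbf{v}^4\tau^2+\delta^2)$ from \cref{lem:approxerror}. The displayed form, with $R_\mathbf{v}^{10}$ and the linear term $\delta R_\mathbf{v}\tau$, suggests a slightly different bookkeeping: expanding the square, a cross term of first order in $\delta$ appears, and $\tau$ is absorbed into a power of $R_\mathbf{v}$. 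We would aim to reproduce the stated form, and failing that note that the cleaner scalar bound times $D$ implies it whenever $\tau\lesssim R_\mathbf{v}^2$ and $\delta\lesssim 1$.

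\textbf{Statistical error.} We invoke \cref{lem:coveringnum} together with the Rademacher and Dudley argument of \cref{sec:staterrorproof}. The loss Lipschitz constant now involves $\sqrt D R_G$, which only enters inside the logarithm. Inserting $p_G=L_G(3d_{\mathrm{embed}}^2m_G+L_{\mathrm{FFN}}w_{\mathrm{FFN}}^2)=O(\log d\,(D+d)^2dn)$ gives the rate $\sqrt{nd(D+d)^2(\log d)^3\log(\cdot\,L)}/\sqrt L$.

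\textbf{Main obstacle.} We expect the main difficulty to be the last linear block. We must check that a single head's $V$ can route all $D$ output rows simultaneously without contaminating the feature rows, and that the max-norm weight bound is unaffected. The rest is bookkeeping.
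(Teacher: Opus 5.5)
\textbf{Overall.} Your argument follows the paper's proof step for step. It uses the same embedding with $D$ output rows, the unchanged featurization blocks, $\tilde V=I_D$ and $\tilde Q=\Sigma^{-1}$ in the last block, $D$ times the scalar approximation error, and the covering-number bound with $d_{\mathrm{embed}}=D+d+6$. The routing you call the main obstacle is immediate, since $V$ is block diagonal with $\|V\|_{\max}=1$.

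\textbf{The gap: the approximation term.} The real gap is the one you noticed but did not close. Summing \cref{lem:approxerror} over coordinates gives $D\big(\alpha^2\tau^2R_{\mathbf{v}}^6\log d/n+\delta^2R_{\mathbf{v}}^4\tau^2+\delta^2\big)$, not the displayed form. The paper does not close it either: its proof just ``recalls'' a one-dimensional error $O(\alpha^2R_{\mathbf{v}}^{10}\log d/n+\delta R_{\mathbf{v}}\tau+\delta^2)$ that \cref{lem:approxerror} never establishes.

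Your fallback is also incorrect, for three reasons.
\begin{itemize}
\item $\delta\lesssim 1$ does not give $\delta^2R_{\mathbf{v}}^4\tau^2\lesssim\delta R_{\mathbf{v}}\tau$; that needs $\delta\tau R_{\mathbf{v}}^3\lesssim1$.
\item $\tau\lesssim R_{\mathbf{v}}^2$ is an extra hypothesis, and it typically fails for monomials. For uniform $\D_X$ on $[-1,1]$, $R_{\mathbf{v}}^2=d+1$, while $\|\Sigma^{-1}\|$, and hence $\tau$, grows exponentially in $d$.
\item Expanding the square does not help. The $\delta^2$-term remains, and by Cauchy--Schwarz the cross term is of order $\alpha\delta\tau^2R_{\mathbf{v}}^5\sqrt{\log d/n}$, not $\delta R_{\mathbf{v}}\tau$.
\end{itemize}

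\textbf{A possible fix.} If you want a $\tau$-free first term, compute exactly instead of using operator norms. Since $\Sigma_n\Sigma^{-1}-I=(\Sigma_n-\Sigma)\Sigma^{-1}$,
\[
\E_x\big|\mathbf{a}^T(\Sigma_n\Sigma^{-1}-I)\mathbf{v}(x)\big|^2=\mathbf{a}^T(\Sigma_n-\Sigma)\Sigma^{-1}(\Sigma_n-\Sigma)\mathbf{a}.
\]
Its expectation over the context is $\frac1n\big(\E[(\mathbf{v}^T\Sigma^{-1}\mathbf{v})(\mathbf{a}^T\mathbf{v})^2]-\mathbf{a}^T\Sigma\mathbf{a}\big)\le\alpha^2R_{\mathbf{v}}^2(d+1)/n$, using $\E[\mathbf{v}^T\Sigma^{-1}\mathbf{v}]=\mathrm{tr}(I_{d+1})=d+1$. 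Similarly, with $\mathbf{w}=\frac1n\sum_i m_i\mathbf{v}(x_i)$, the squared misfit term averages to $\E[\mathbf{w}^T\Sigma^{-1}\mathbf{w}]\le\delta^2(1+(d+1)/n)$.

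Hence each coordinate contributes $O(\alpha^2R_{\mathbf{v}}^2d/n+\delta^2(1+d/n))$. This implies the displayed approximation part when $n\gtrsim d$ and $R_{\mathbf{v}}^2\gtrsim d$, e.g.\ if $X$ contains a point with $|x|\ge1$.

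Some such condition is unavoidable for this construction. For $f\equiv1$ ($\mathbf{a}=\mathbf{e}_1$, $\delta=0$) the identity gives risk exactly $d/n$. If $X$ is a small interval around $0$, so that $R_{\mathbf{v}}\approx1$, this exceeds $C\alpha^2R_{\mathbf{v}}^{10}\log d/n$ for large $d$.

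\textbf{A smaller point in the statistical part.} Here $D$ does not enter only through the logarithm, which the paper also glosses over. The range of $\L\circ\G$ now grows with $D$, since already $\|f(x)\|^2$ can reach $DR_F^2$, and this range multiplies the Dudley integral. So the displayed statistical term holds only up to a prefactor growing with $D$.
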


\begin{proof}
For the approximation, our construction works similarly if in embedding we replace the $y$-row in our embedding matrix with the corresponding $y$-submatrix 
\[\begin{bmatrix}
    \mathbf{y}_1 & \mathbf{y}_2 & \cdots & \mathbf{y}_n & \mathbf{0}
\end{bmatrix}=\begin{bmatrix}
    y_1^1 & \cdots & y_n^1 & 0 \\
    \vdots & \ddots & \vdots & \vdots \\
    y_1^n & \cdots & y_n^n & 0
\end{bmatrix}\in \R^{D\times (n+1)}.\]

The ReLU transformer blocks do not depend on those rows; we still use these to form the matrix 
\[\begin{bmatrix}
    \mathbf{v}_1 & \cdots & \mathbf{v}_n & \mathbf{v} \\
    \mathbf{y}_1 & \cdots & \mathbf{y}_n & 0 \\
    \mathbf{u}_1 & \cdots & \mathbf{u}_n & \mathbf{u}_{n+1}
\end{bmatrix}\in \R^{(D+d+6)\times (n+1)}\]
with the earlier blocks. Since this part is independent of the function outputs, this construction still requires the same number of transformer blocks and attention heads. 

For the final block, a simple adaptation of the linear attention parameters to

\[V=\begin{bmatrix}
    0_{(d+2)\times (d+2)} & 0_{(d+2)\times D} & 0_{(d+2)\times 4} \\
    0_{D\times (d+2)} & \tilde{V} & 0_{D\times 4} \\
    0_{4\times (d+2)} & 0_{4\times D} & 0_{4\times 4}
\end{bmatrix}\qquad \text{and} \qquad Q=\begin{bmatrix}
    0_{1\times 1} & 0_{1\times (d+1)} & 0_{1\times (D+4)} \\
    0_{(d+1)\times 1} & \tilde{Q} & 0_{(d+1)\times (D+4)} \\
    0_{(D+4)\times 1} & 0_{(D+4)\times (d+1)} & 0_{(D+4)\times (D+4)}
\end{bmatrix}\]
where $\tilde{V}=I_D$ is the identity matrix of size $D$ and $\tilde{Q}=\Sigma^{-1}\in\R^{(d+1)\times (d+1)}$ yields the predicted regression result with each $y^i$ in the $(i+d+1, n+1)$-th entry, whereupon our decoder now reads off that subcolumn. 

Recall the approximation error in the one-dimensional case is 
\[O\left(\frac{\alpha^2 R_\mathbf{v}^{10}\log d}{n}+\delta R_\mathbf{v}\tau+\delta^2\right).\] 
Our approximation error under the $\ell^2$-norm on $\R^D$ is thus
\[
    \RR(G_{\tilde{\theta}})\coloneqq \E_{\mathbf{s}^f}\|G_{{\tilde{\theta}}}(\mathbf{s}^f)-f(x)\|_2^2=O\left(D\left(\frac{\alpha^2 R_\mathbf{v}^{10}\log d}{n}+\delta R_\mathbf{v}\tau+\delta^2\right)\right).
\]

The statistical error also changes slightly: the only difference in network parameters is a slightly larger embedding dimension of $D+d+6$ (as opposed to the previous $d+7$). Including dependence on $D$ in the asymptotic bound, the statistical error is bounded by 
\[\E_\Gamma[\sup_{G_\theta\in \G}|\RR(G_\theta)-\RR^\Gamma(G_\theta)|]  
    =O\left(\frac{\sqrt{nd(D+d)^2(\log d)^3\log((d^4R_\mathbf{v}^2n^2U^2+\tau)L)}}{\sqrt{L}}\right).\]
Adding this to the aforementioned approximation error gives the total generalization error bound. 
\end{proof}

\section{Additional Numerical Results}\label{sec:additionalnumerical}

Unless otherwise specified, our experiments are run using the default setup and hyperparameters summarized in \cref{tab:hyperparams}.

\begin{table}[h]
\centering
\caption{Hyperparameters and Configurations}
\renewcommand{\arraystretch}{1.2}
\begin{tabular}{l p{8cm}}
\toprule
\textbf{Parameter} & \textbf{Value} \\
\midrule
\multicolumn{2}{l}{\textbf{Training}} \\
Optimizer & Adam \\
Learning Rate & $0.001$ \\
Batch Size & $512$ \\
Epochs & $50$ \\
Gradient Clipping & Max norm $= 1.0$ \\
Loss Function & MSE \\
\midrule
\multicolumn{2}{l}{\textbf{Experimental Setup}} \\
Polynomial Degree ($d$) & $4$ \\
Input Distribution ($\D_X$) & U$(-1,1)$ \\
Test Set Size & 1000 \\
\midrule
\multicolumn{2}{l}{\textbf{Model Architecture}} \\
Embedding Dim ($d_{\text{embed}}$) & $d + 7=11$ \\
Transformer Depth/Number of Blocks ($L_G$)\textsuperscript{1} & $d=4$ \\
FFN Depth ($L_\mathrm{FFN}$)\textsuperscript{2} & 2 \\
FFN Width ($w_\mathrm{FFN}$) & $d_{\mathrm{embed}}$ \\
Weight Initialization (Q, K, V) & $\mathcal{N}(0, 0.01)$ \\
Scaling (ReLU/Softmax) & $1 / \sqrt{d_{\text{embed}}} = 1 / \sqrt{11}$ \\
Scaling (Linear) & $1 / (\ell - 1)=1/n$ \\
\bottomrule
\multicolumn{2}{l}{\footnotesize \textsuperscript{1}Linear $\to$ ReLU $\to$ Linear.} \\
\multicolumn{2}{l}{\footnotesize \textsuperscript{2}Theory model uses $(L_G-1)$ ReLU blocks (w/ FFN) then 1 linear block (no FFN, 1 head).}
\end{tabular}
\label{tab:hyperparams}
\end{table}

Here we also present some additional experimental results, first continuing to investigate the effect of the number of heads. The results for a fixed $4$ attention heads per block are shown in \cref{fig:4headsscaling}. Our constructive theory also suggests a quasi-equivalence between width and depth; correspondingly, we try a deep narrow model ($16$ blocks with 1 head each), with results shown in \cref{fig:1heads16blocksscaling}. We also try a smaller model, using just a single head per attention layer with the standard $d=4$ blocks, and the results are shown in \cref{fig:1headsscaling}. We observe worse scaling in both $n$ and $L$ for the single-head case, most noticeable in the linear model. The starting absolute error did not change much, since the problem is likely simple enough even for the smaller capacity. 

We also ablate the FFN component, studying an attention-only transformer (also with $4$ heads per block). The results in \cref{fig:noffnscaling} show a slight increase in absolute error but still perform and scale surprisingly well, supporting our argument that approximation power can be found in attention itself. This also highlights the potential of depth, following previous work which shows that a single linear attention layer cannot outperform linear predictors on nonlinear tasks \citep{Sun}.

Finally, we test our framework with splines to regress piecewise linear functions, with shallow attention-only networks. The results are shown in \cref{fig:splinescaling}. In terms of scaling, softmax and ReLU perform similarly, while the purely linear model is noticeably worse, failing to achieve our asymptotic bounds. This confirms and extends the results of \citep{Sun}: linear attention alone, even when composed, is insufficient to regress nonlinear functions, but we find that introducing a nonlinearity makes attention-only networks capable of in-context nonlinear regression.

\begin{figure}[h]
    \centering
    \begin{subfigure}[b]{0.49\linewidth}
        \centering
        \includegraphics[width=\linewidth]{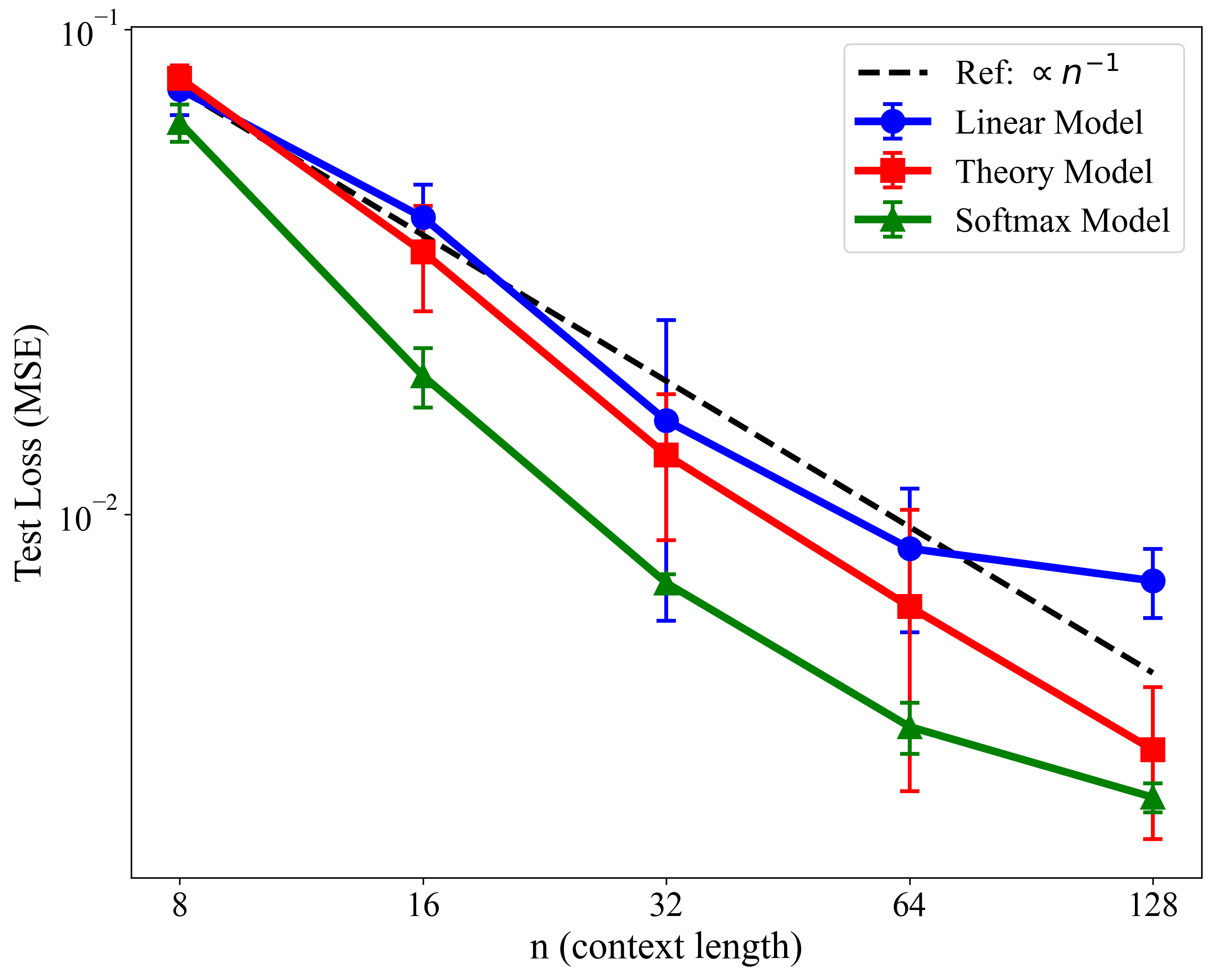}
        \caption{Test loss vs. context length ($n$) with fixed $L=32000$}
        \label{fig:4headsscalingn}
    \end{subfigure}
    \hfill
    \begin{subfigure}[b]{0.49\linewidth}
        \centering
        \includegraphics[width=\linewidth]{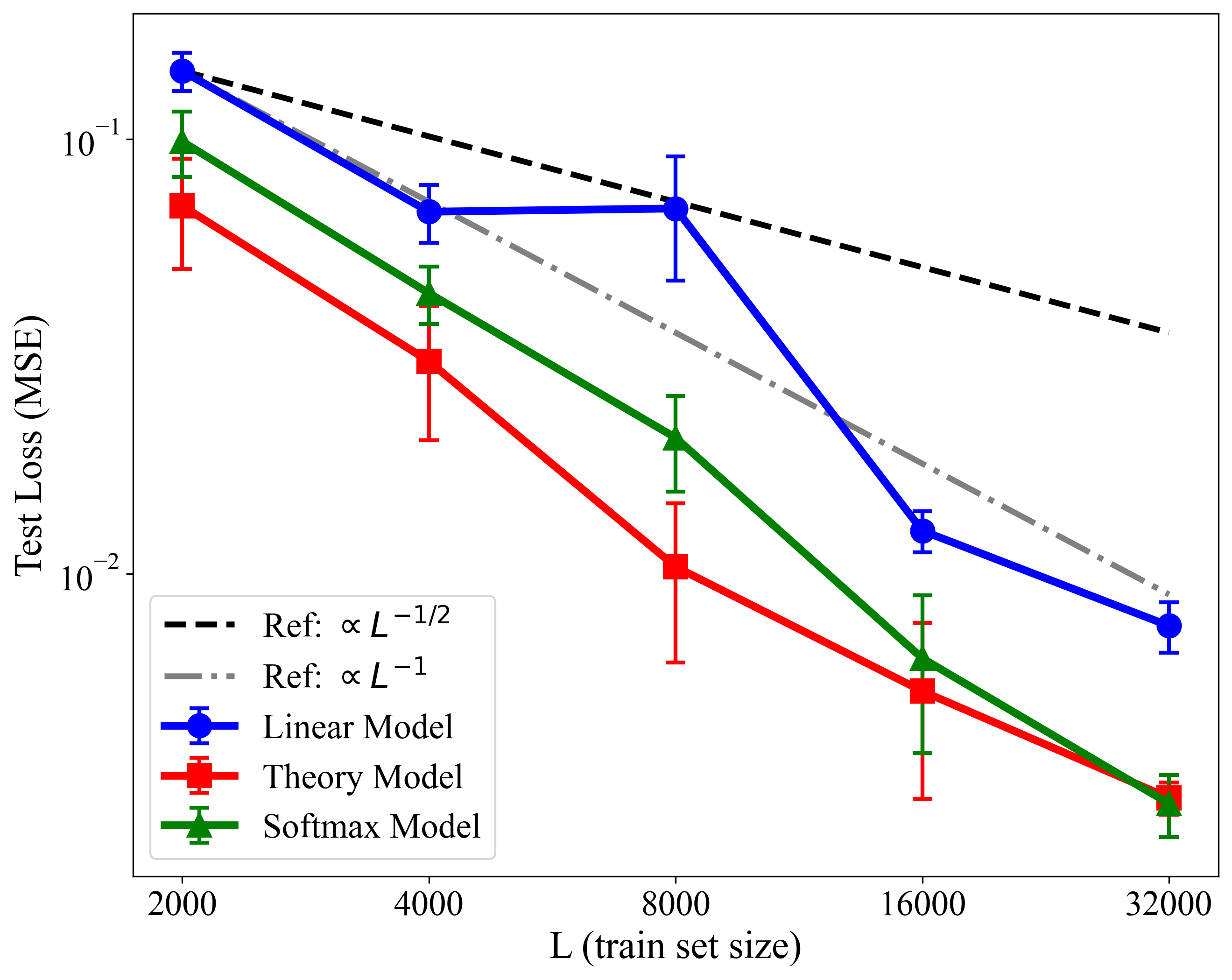}
        \caption{Test loss vs. training set size ($L$) with fixed $n=128$}
        \label{fig:4headsscalingL}
    \end{subfigure}
    \caption{Scaling results on polynomial regression tasks using $4$ attention heads per block (except for the final linear block in the theory model). }
    \label{fig:4headsscaling}
\end{figure}

\begin{figure}[H]
    \centering
    \begin{subfigure}[b]{0.49\linewidth}
        \centering
        \includegraphics[width=\linewidth]{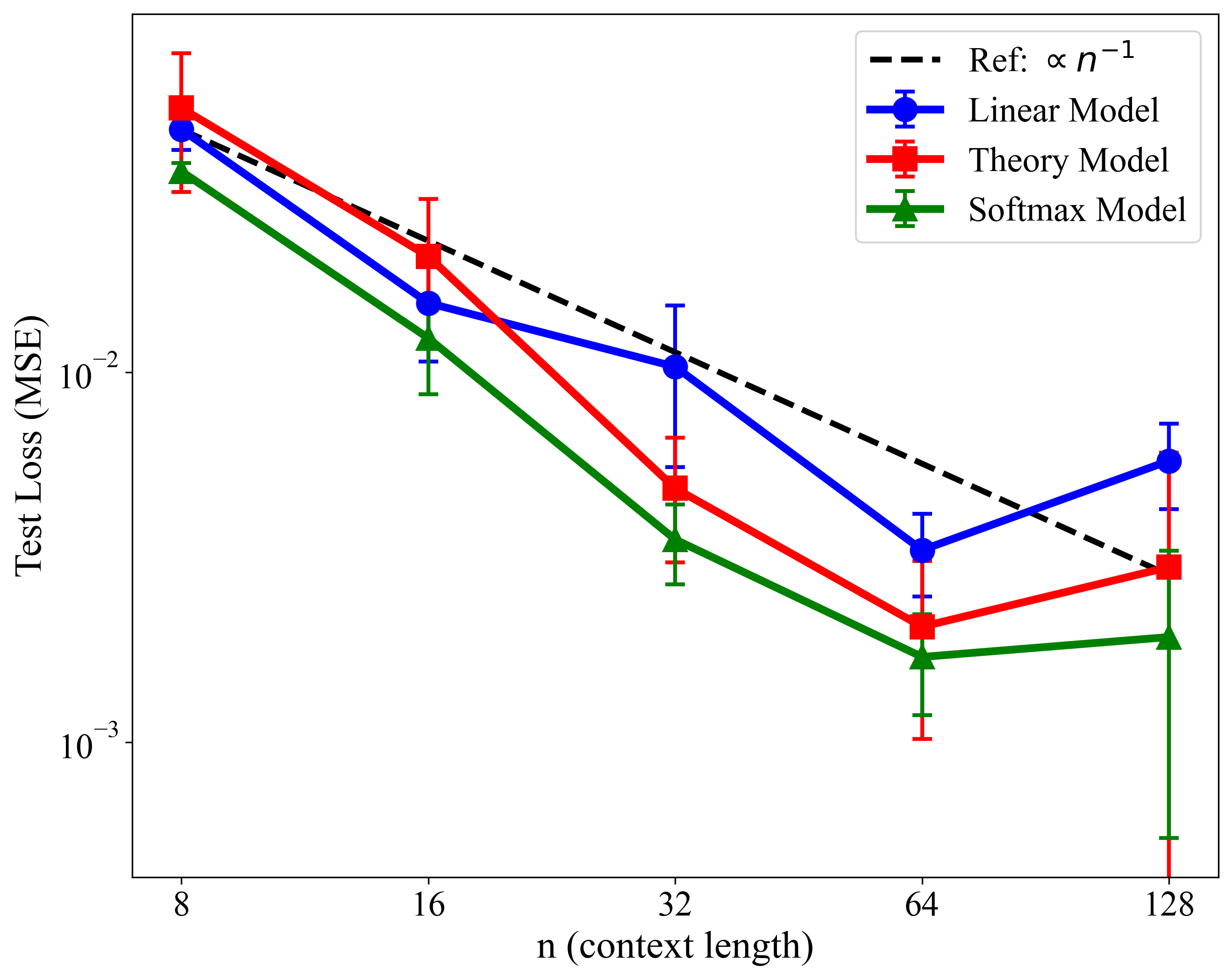}
        \caption{Test loss vs. context length ($n$) with fixed $L=32000$}
    \end{subfigure}
    \hfill
    \begin{subfigure}[b]{0.49\linewidth}
        \centering
        \includegraphics[width=\linewidth]{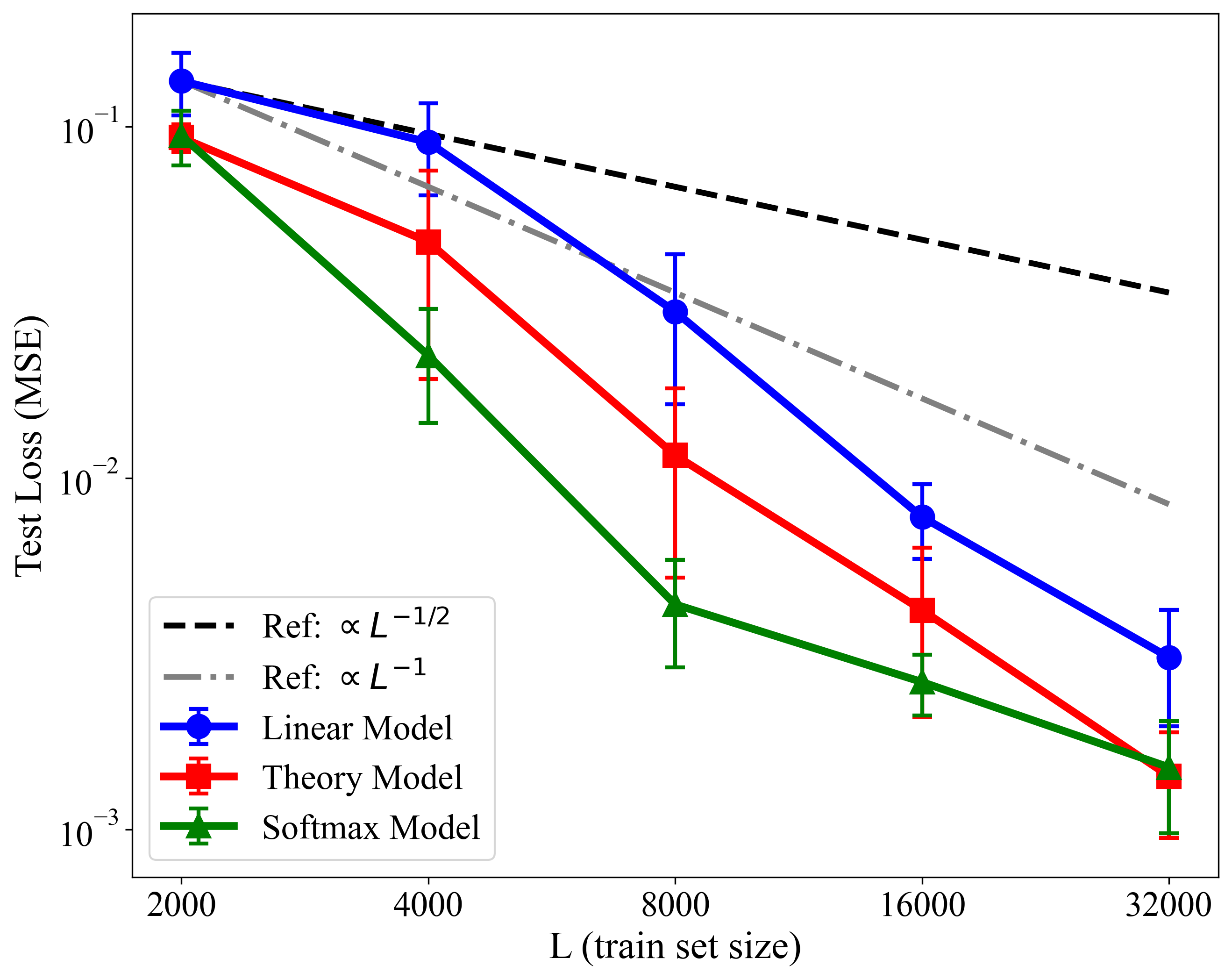}
        \caption{Test loss vs. training set size ($L$) with fixed $n=128$}
    \end{subfigure}
    \caption{Scaling results on polynomial regression tasks using $16$ blocks with $1$ attention head each.}
    \label{fig:1heads16blocksscaling}
\end{figure}

\begin{figure}[h]
    \centering
    \begin{subfigure}[b]{0.49\linewidth}
        \centering
        \includegraphics[width=\linewidth]{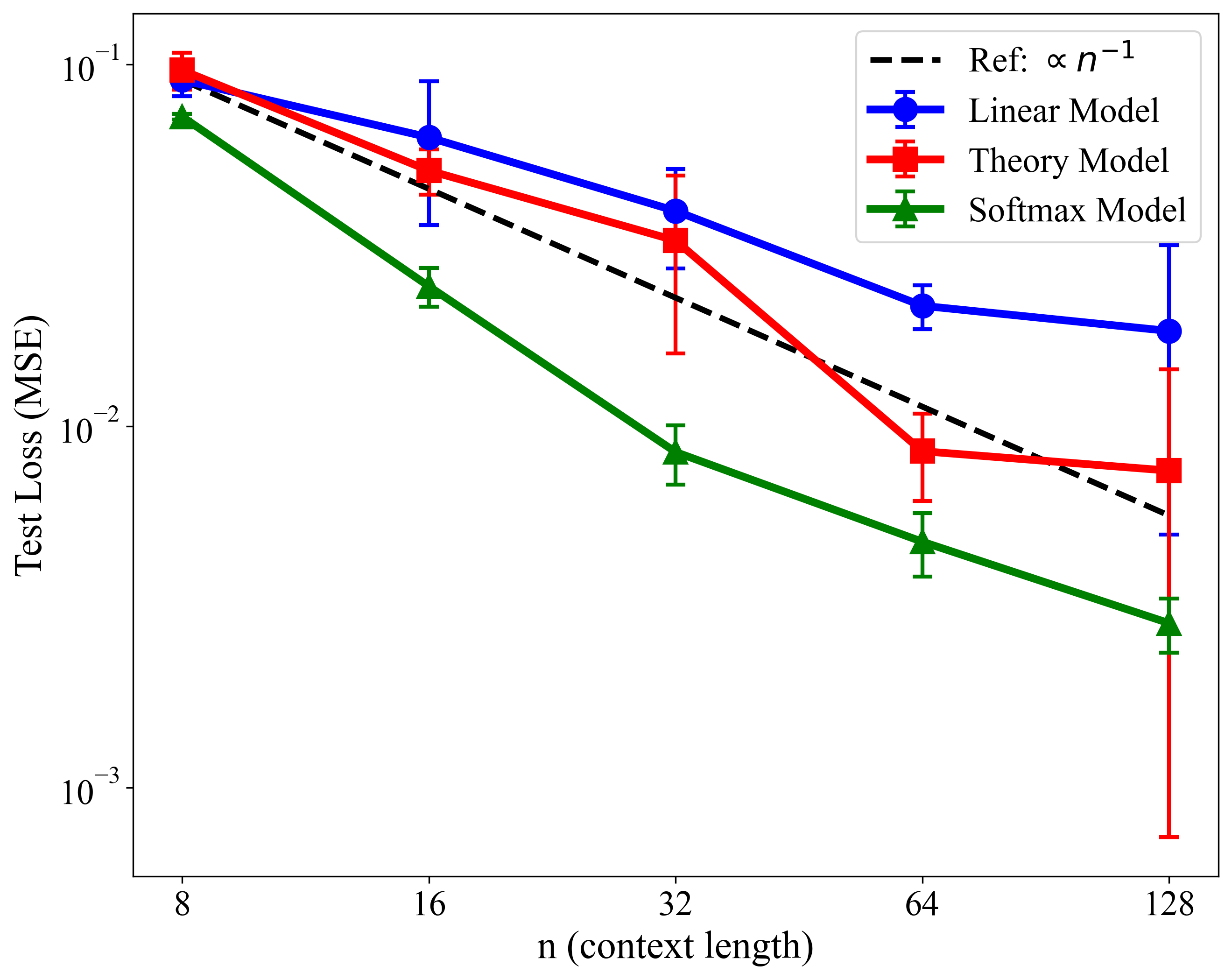}
        \caption{Test loss vs. context length ($n$) with fixed $L=32000$}
        \label{fig:1headsscalingn}
    \end{subfigure}
    \hfill
    \begin{subfigure}[b]{0.49\linewidth}
        \centering
        \includegraphics[width=\linewidth]{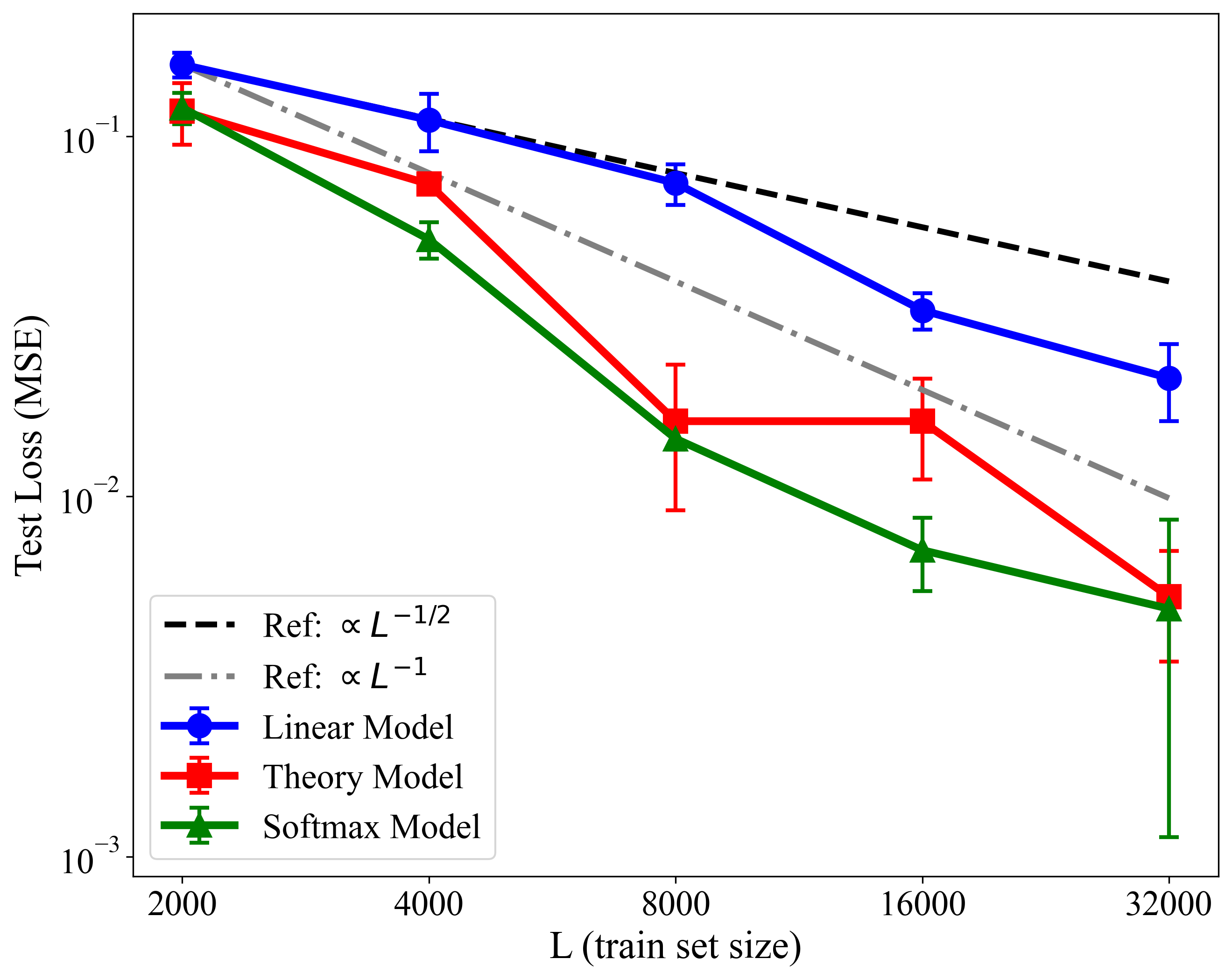}
        \caption{Test loss vs. training set size ($L$) with fixed $n=128$}
        \label{fig:1headsscalingL}
    \end{subfigure}
    \caption{Scaling results on polynomial regression tasks using a single attention head per block.}
    \label{fig:1headsscaling}
\end{figure}

\begin{figure}[h]
    \centering
    \begin{subfigure}[b]{0.49\linewidth}
        \centering
        \includegraphics[width=\linewidth]{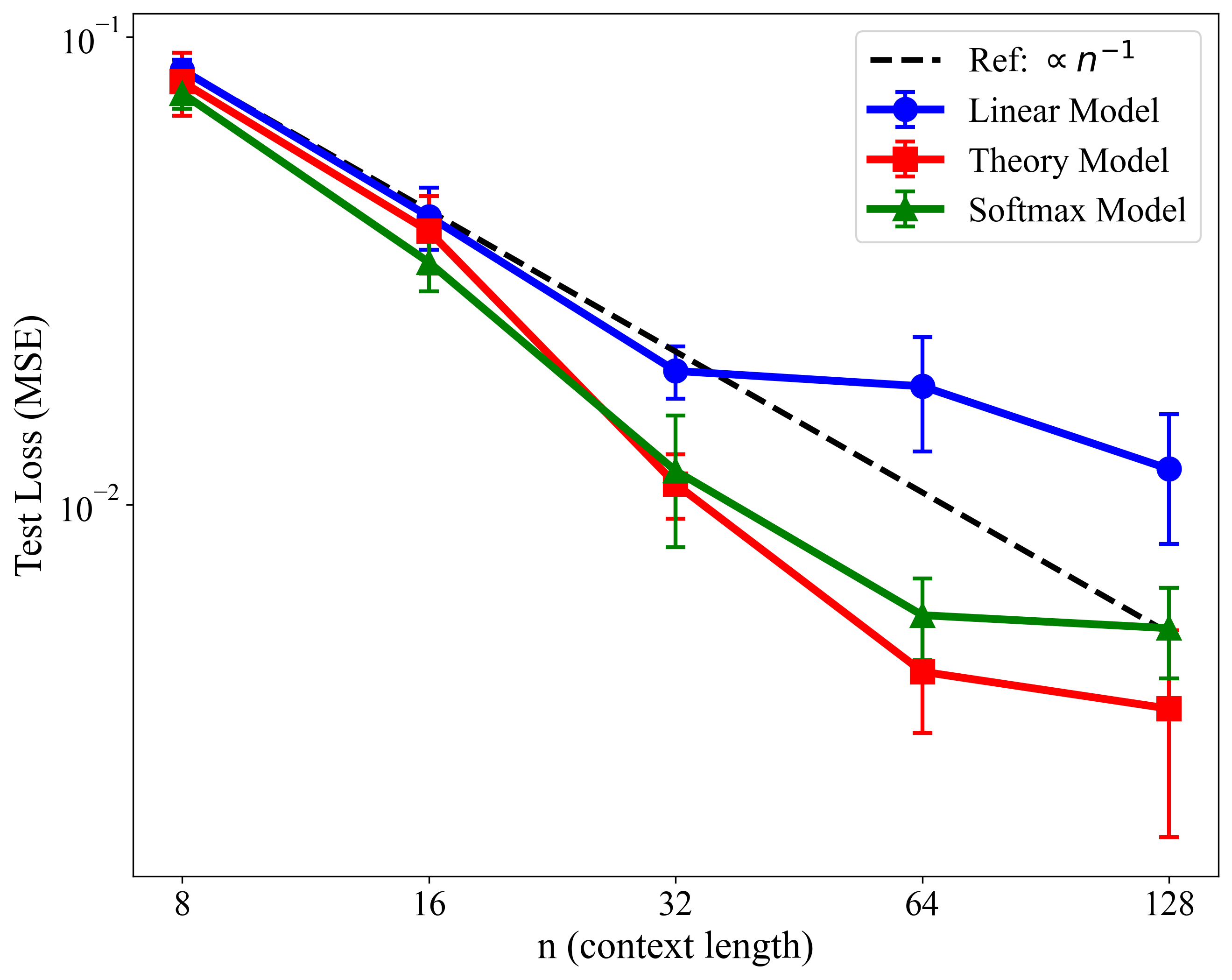}
        \caption{Test loss vs. context length ($n$) with fixed $L=32000$}
        \label{fig:noffnscalingn}
    \end{subfigure}
    \hfill
    \begin{subfigure}[b]{0.49\linewidth}
        \centering
        \includegraphics[width=\linewidth]{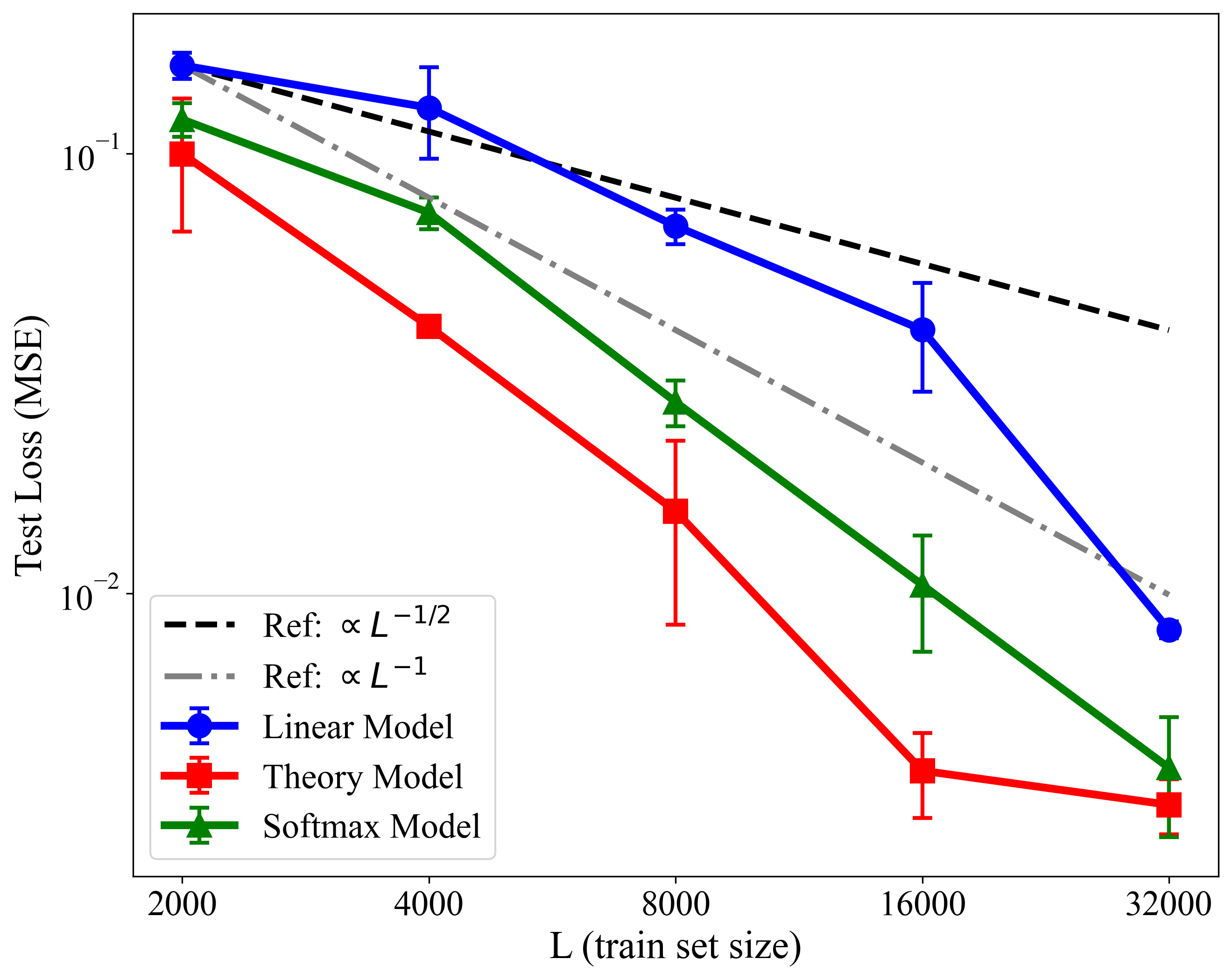}
        \caption{Test loss vs. training set size ($L$) with fixed $n=128$}
        \label{fig:noffnscalingL}
    \end{subfigure}
    \caption{Scaling results on polynomial regression tasks with no feedforward component, i.e. an attention-only transformer.}
    \label{fig:noffnscaling}
\end{figure}

\begin{figure}[h]
    \centering
    \begin{subfigure}[b]{0.49\linewidth}
        \centering
        \includegraphics[width=\linewidth]{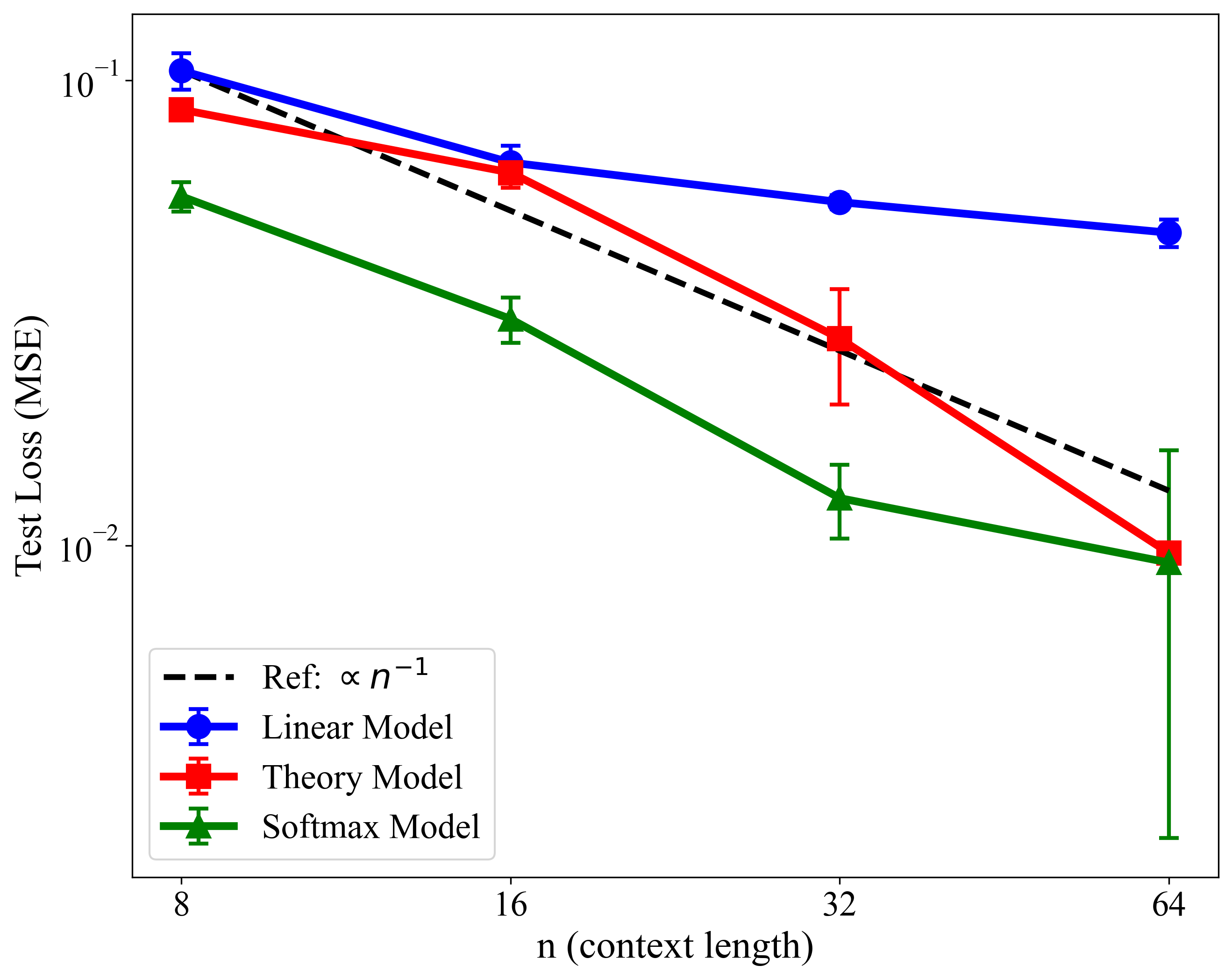}
        \caption{Test loss vs. context length ($n$) with fixed $L=16000$}
    \end{subfigure}
    \hfill
    \begin{subfigure}[b]{0.49\linewidth}
        \centering
        \includegraphics[width=\linewidth]{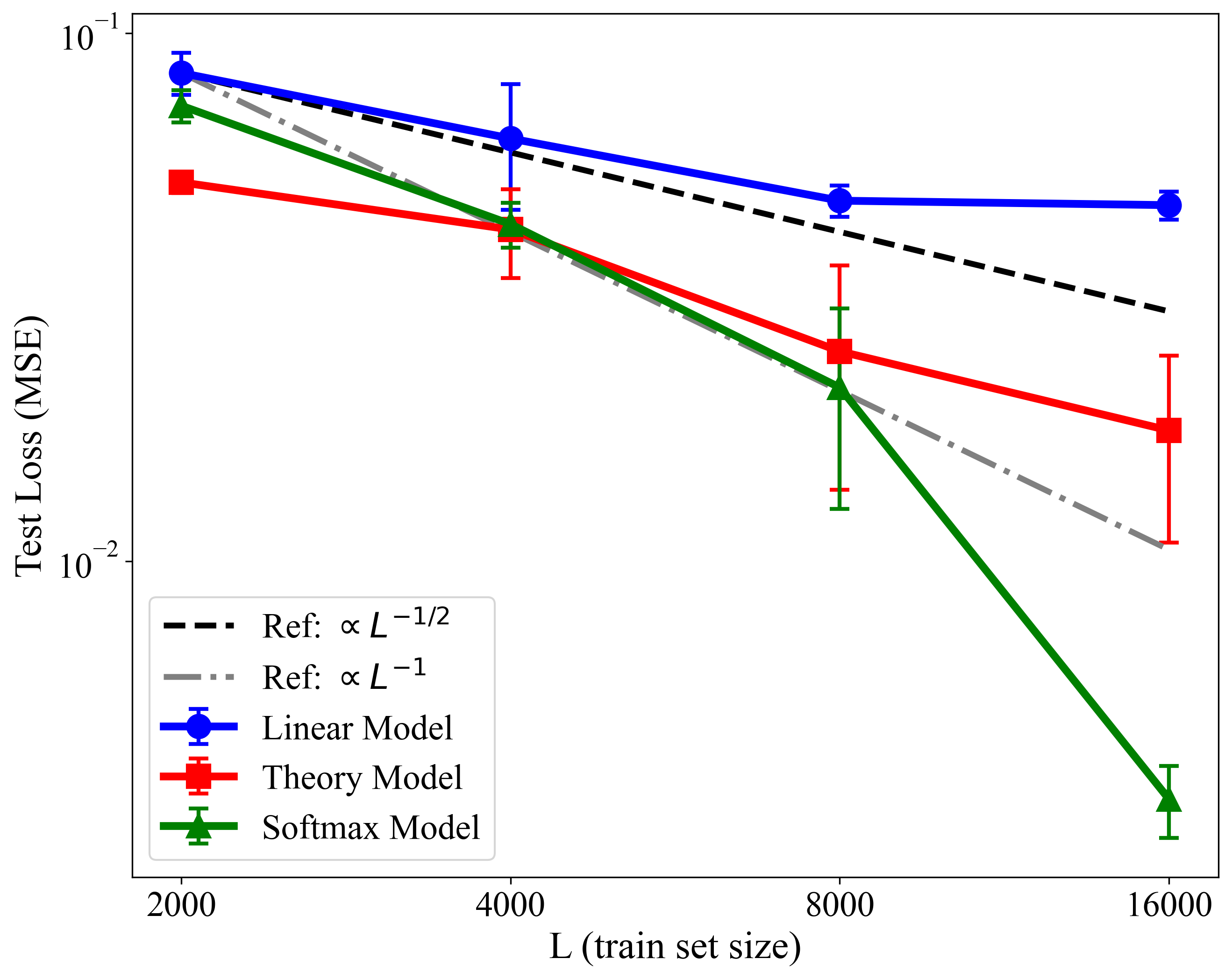}
        \caption{Test loss vs. training set size ($L$) with fixed $n=64$}
    \end{subfigure}
    \caption{Scaling results for regression of piecewise linear functions (linear splines), with domain split by $5$ equally spaced knots. All models used two blocks and no feedforward components. We used $n/8$ attention heads per block, except the theory model which uses a one-headed linear attention block after an $n/8$-headed ReLU attention block.}
    \label{fig:splinescaling}
\end{figure}

\end{document}